\newtheorem{definition}{Definition}
\newtheorem{theorem}{Theorem}
\newtheorem{proposition}{Proposition}
\newtheorem{corollary}{Corollary}
\newtheorem{lemma}{Lemma}
\newtheorem{claim}{Claim}
\newenvironment{thmbis}[1]
  {%
   \addtocounter{theorem}{-1}%
   \begin{theorem}}
  {\end{theorem}}
\newenvironment{propbis}[1]
  {%
   \addtocounter{proposition}{-1}%
   \begin{proposition}}
  {\end{proposition}}
\def\eqref#1{equation~\ref{#1}}
\def\1{\bm{1}}
\def\vzero{{\bm{0}}}
\def\vone{{\bm{1}}}
\def\vw{{\bm{w}}}
\def\vx{{\bm{x}}}
\DeclareMathAlphabet{\mathsfit}{\encodingdefault}{\sfdefault}{m}{sl}
\SetMathAlphabet{\mathsfit}{bold}{\encodingdefault}{\sfdefault}{bx}{n}
\newcommand\false{\mathbf{false}}
\newcommand\true{\mathbf{true}}
\newcommand{\np}{\text{\rm NP}}
\newcommand{\M}{\mathcal{M}}
\newcommand{\C}{\mathcal{C}}
\newcommand{\T}{\mathcal{T}}
\newcommand{\logicEvaluation}{\textsc{Eval}}
\newcommand{\perceptrons}{\mathsf{Ptron}}
\newcommand{\pos}{\textsc{Pos}}
\newcommand{\full}{\textsc{Full}}
\newcommand{\sr}{\textsc{SR}}
\newcommand{\msr}{\textsc{mSR}}
\newcommand{\astruct}{\mathfrak{A}}
\newcommand{\FO}{\text{\rm FO}} 
\newcommand{\es}{\mathbf{e}}
    \newcommand{\foil}{\text{\rm FOIL}}
\newcommand{\hfoil}{\text{\rm HL-FOIL}}
\newcommand{\dt}{\mathsf{DTree}} 
\newcommand{\obdd}{\mathsf{OBDD}} 
\newcommand{\fbdd}{\mathsf{FBDD}} 
\newcommand{\kcobdd}{k\text{-}\mathsf{COBDD}} 
\newcommand{\cnf}{\mathsf{CNF}}
\newcommand{\dnf}{\mathsf{DNF}}
\newcommand{\width}{\text{\rm width}}
\newcommand{\tox}[1]{\xrightarrow{#1}}
\newcommand{\dm}{\text{\rm dim}}
\tikzset{
    rt/.style={
		rectangle,
		fill = white,
		draw=black, 
		text centered,
		inner sep=0.5ex
		},
    rtt/.style={ 
    	rt,
    	inner sep=0.1ex
    	},
    ert/.style={ 
     	rt,
     	dashed
     	}, 
    ertt/.style={ 
        rtt,
        dashed
        }, 
    rect/.style={ 
        rectangle,
        fill = white,
        rounded corners,
        draw=black, 
        text centered,
        inner sep=0.8ex
        },
    rectw/.style={
        rect,
        draw=white
        },
    erect/.style={ 
    	rect,
    	dashed
    	},
    erectw/.style={ 
     	rectw,
     	dashed
     	},
    arrout/.style={
           ->,
           -latex,
           },
    arrin/.style={
           <-,
           latex-,
           },
    arrb/.style={
           <->,
           >=latex,
           }
}
\newcommand{\allpos}{\textsc{AllPos}}
\newcommand{\allneg}{\textsc{AllNeg}}
\newcommand{\expos}{\textsc{ExistsPos}}
\newcommand{\exneg}{\textsc{ExistsNeg}}
\newcommand{\match}{\textsc{Match}}
\newcommand{\pap}{\textsc{PartialAllPos}}
\newcommand{\pan}{\textsc{PartialAllNeg}}
\newcommand{\efoil}{\exists\foil}
\newcommand{\efoilp}{\efoil^+}
\newcommand{\afoil}{\forall\foil}
\newcommand{\afoilp}{\afoil^+}
\title{Foundations of Symbolic Languages \\ for Model Interpretability}
\author{
   Marcelo Arenas$^{1, 4}$, Daniel Baez$^{3}$, Pablo Barceló$^{2,4}$, Jorge Pérez$^{3,4}$, Bernardo Subercaseaux$^{4,5}$ \\ 
 	{ $^1$ Department of Computer Science, PUC-Chile}\\
  	{ $^2$ Institute for Mathematical and Computational Engineering, PUC-Chile }\\
  	{ $^3$ Department of Computer Science, Universidad de Chile}\\
  	{ $^4$ Millennium Institute for Foundational Research on Data, Chile} \\
	{ $^5$ Carnegie Mellon University, USA}\\
  	{\footnotesize  \texttt{[marenas, pbarcelo]@ing.puc.cl}, \texttt{jperez@dcc.uchile.cl}, \texttt{bsuberca@andrew.cmu.edu}}
}
\begin{document}

\maketitle

\begin{abstract}
Several queries and scores have 
been proposed to explain individual predictions made by ML models. 
Examples include queries based on “anchors”, which are parts of an instance that are sufficient to justify its classification, and “feature-perturbation” scores such as SHAP. 
Given the need for flexible, reliable, and easy-to-apply interpretability methods for ML models, 
we foresee the need for developing declarative languages to naturally specify different explainability queries. We do this in a principled way by rooting such a language in a logic called $\foil$, that allows for expressing many simple but important explainability queries, and might serve as a 
core for more expressive interpretability languages. 
We study the computational complexity of FOIL queries over classes of ML models often deemed to be easily interpretable: decision trees and more general decision diagrams. Since the number of possible inputs for an ML model is exponential in its dimension, tractability of the $\foil$ evaluation problem is delicate, but can be achieved by either restricting the structure of the models, or the fragment of 
$\foil$ being evaluated.  We also present a prototype implementation of $\foil$ wrapped in a high-level declarative language, and perform experiments showing that such a language can be used in practice.
\end{abstract}


\section{Introduction}
\todo[inline]{hola}
\label{sec:introduction}

{\bf Context.} 
The degree of {\em interpretability} of a machine learning (ML) model seems to be intimately related with the ability to ``answer questions'' about it. Those questions can either be global (behavior of the model as a whole) or local (behavior regarding certain instances/features). Concrete examples of such questions can be found 
in the recent literature, including, e.g., queries based on “anchors”, which are parts of an instance that are sufficient to justify its classification \cite{Ribeiro0G18,ChoiWD19,DarwicheH20,BarceloM0S20}, and numerical 
scores that measure the impact of the different features of an instance on its result \cite{StrumbeljK10,Ribeiro0G16,LundbergL17}. 

It is by now clear that ML interpretability admits no silver-bullet~\cite{doshivelez2017rigorous}, and that in many cases a combination of different queries 
may be the most effective way to understand a model's behavior. Also, 
model interpretability takes different flavors 
depending on the application domain one deals with. This naturally brings to the picture the need for
general-purpose specification languages that can provide flexibility and expressiveness to practitioners specifying interpretability queries. An even more advanced requirement for these languages is to be relatively easy to use in practice. This tackles the growing need for bringing interpretability methods closer to users 
with different levels of expertise.   

One way in which these requirements can be approached in a principled way is by developing a {\em declarative} interpretability language, i.e., 
one in which users directly 
express the queries they want to apply in the interpretability process (and not how these queries will be  evaluated). This is of course reminiscent of the path many
other areas in computer science have followed, in particular by using languages rooted in formal
logic; so has been the case, e.g., in data management \cite{AHV95}, knowledge representation \cite{DL-handbook}, and model checking \cite{mc-book}. 
One of the
advantages of this approach is that logics have a well-defined syntax and clear semantics. On the one hand, this ensures that the obtained explanations are provably sound and faithful to the model, which avoids a significant drawback of several techniques for explaining models in which the explanations can be inaccurate, or require  themselves to be further explained~\cite{abs-1811-10154}.
On the other hand, a logical root
facilitates the theoretical study of the computational cost of evaluation and optimization for the queries
in the language.

{\bf Our proposal.}
Our first contribution is the proposal of a logical language, 
called $\foil$, in which many simple yet relevant interpretability queries can be expressed. We 
believe that $\foil$ can further serve as a basis over which more expressive interpretability languages can be built, and we propose concrete directions of research towards its expansion. In a nutshell, given a decision model $\M$ that performs classification over instances $\es$ 
of dimension $n$, $\foil$ can express properties over the set of all {\em partial} and {\em full} instances of dimension $n$. 
A partial instance $\es$ is a vector of dimension $n$ in which some features are undefined. Such undefined features take a distinguished  value $\bot$. An instance is full if none of its features is undefined.    
The logic $\foil$ is simply first-order logic with access to two predicates on the set of all instances (partial or full) of dimension $n$: A unary predicate $\pos(\es)$, stating that $\es$ is a full instance 
that $\M$ classifies as positive, and a binary predicate 
$\es \subseteq \es'$, stating that instance $\es'$ potentially fills some of the undefined features of instance $\es$; e.g.,  $(1, \, 0, \, \bot) \subseteq (1, \, 0, \, 1)$, but $(1, \, 0, \, \bot) \not \subseteq (1, \, 1, \, 1)$.

As an overview of our proposal, consider the case of a bank using a binary model to judge applications for loans. Figure~\ref{fig:example-bank} illustrates the problem with concrete features, and Figure~\ref{fig:concreteSyntax} presents an example of a concrete interactive syntax.
In Figure~\ref{fig:concreteSyntax}, after loading and exploring the model,
the interaction asks whether the model could give a loan to a person who is married and does not have kids.
Assuming that the ``Accepted'' class is the positive one, this interaction can easily be formalized in $\foil$ by means of the query 
$\exists x \, \big(\pos(x) \land (\bot, \bot, 
 \bot, \bot, 0,  1,  \bot) \subseteq x \big). 
$

\begin{figure}
    \begin{subfigure}{0.5\textwidth}
      \centering
\scalebox{0.8}{
\begin{tikzpicture}

		\node at (-6, 0) (vn) {\small $\begin{matrix}
	   	\text{Stable job}\\
	   	>\text{ 40yo}\\
	   	\text{Previous loans}\\
	   	\text{Owns a house}\\
	  	\text{Has kids}\\
	  	\text{Married}\\
	  	\text{Criminal Record}
	   \end{matrix}$};

	   \node at (-4.5, 0) (v) {$\begin{pmatrix}
	   	0\\
	   	1\\
	   	1\\
	   	0\\
	   	0\\
	  	1\\
	  	1
	   \end{pmatrix}$};
		\node[draw, fill=black, text=white, text width=1.7cm, line width=.5mm] at (-2.25, 0) (bbm) {Black Box Model};
		\node[text width=3cm] at (1, 0) (ar) {Application\\ Rejected};
		
		\path[draw, ->, line width=1mm] (v) -- (bbm); 
		\path[draw, ->, line width=1mm] (bbm) -- (ar);
		
\end{tikzpicture}
}
      \caption{Diagram of a particular loan decision.}
      \label{fig:example-bank}
    \end{subfigure}
    \hspace{0.4em}
    \small
    \begin{subfigure}{0.5\textwidth}
      \begin{minted}{SQL}
> load("mlp.np") as MyModel;
> show features;
(stableJob, >40yo, prevLoan, ownsHouse,
hasKids, isMarried, crimRecord): Boolean
> show classes;
Rejected (0), Accepted (1) 

> exists person, 
    person.isMarried 
    and not person.hasKids
    and MyModel(person) = Accepted;
YES
      \end{minted}
      \caption{Example of a possible concrete syntax for a language tailored for interpretability queries.}
      \label{fig:concreteSyntax}
    \end{subfigure}
\caption{Example of a
bank that uses a model to decide whether to accept loan applications
considering binary features like 
``does the requester have a stable job'' and 
``are they older than 40''?}
\vspace{-4mm}
\end{figure}




{\bf Theoretical contributions.}
The {\em evaluation problem} for a fixed $\foil$ query $\varphi$ is as follows. Given a decision model $\M$, is it true that $\varphi$ is satisfied under the interpretation of predicates 
$\subseteq$ and $\pos$ defined above?  
An important caveat about this problem is that, in order to evaluate 
$\varphi$, we need to potentially look for an exponential number of 
instances, even if the features are Boolean, 
thus rendering the complexity of the problem infeasible in some cases. Think, for instance, of the query $\exists x \, \pos(x)$, which asks if 
$\M$ has at least one positive instance. Then this query is intractable for every class of models for which this problem 
is intractable; e.g., for the class of propositional formulas in CNF (notice that this is nothing but 
the {\em satisfiability problem} for the class at hand). 

The main theoretical contribution of our paper is an in-depth study of  the computational cost of $\foil$ 
on two classes of Boolean models that are often deemed to be ``easy to interpret'': decision trees and ordered binary decision diagrams 
(OBDDs) 
\cite{ChanD03,abs-1811-10154,GilpinBYBSK18,molnar2019,ChubarianT20}. 
An immediate advantage of these models over, say, CNF formulas, 
is that the satisfiability problem for them can be solved in polynomial time; i.e., the problem of evaluating the query $\exists x \pos(x)$ is tractable. Our study aims to  
(a) ``measure'' the    
degree of interpretability of said models in terms of the formal yardstick defined by the language $\foil$; and 
(b) shed light on when and how some simple interpretability queries can be evaluated efficiently on these decision models.

We start by showing that, in spite of the aforementioned
claims on the good level of interpretability for the  models considered, there is a simple 
query in $\foil$ that is intractable over them. 
In fact, such an intractable query 
has a natural 
``interpretability'' flavor, and thus
we believe this proof to be of independent interest. 

However, these intractability results should not immediately rule out the use of $\foil$ in practice. In fact, it is well known that a logic can be intractable in general, but become tractable in practically relevant cases. Such cases can be obtained by either restricting the syntactic fragment of the logic considered, or the structure of the models in which the logic is evaluated. We obtain positive results in both directions for the models we mentioned above. We explain them next. 

{\bf {\em Syntactic fragments.}}
We show that queries in $\efoil$, the existential fragment of $\foil$, admit tractable evaluation over the models we study. However, this language lacks expressive power for capturing some interpretability queries of practical interest. We then introduce $\efoil^+$, an extension of $\efoil$
with a finite set of unary universal queries from $\foil$ that is enough for expressing some relevant interpretability queries. We 
provide a
characterization theorem for the tractability of $\efoil^+$ over any class of Boolean decision models that reduces the tractability of this fragment to the tractability of two fixed and specific $\foil$ queries. Then we prove that said queries are tractable over perceptrons, which implies the tractability of $\efoil^+$ for this model. 
Unfortunately, the evaluation of said queries is $\np$-hard for decision trees and OBDDs.  Both the proof of tractability for perceptrons and intractability for decision trees and OBDDs are relatively simple, thus showing that the characterization theorem provides a useful technique for understanding which models are tractable for the evaluation of $\efoil^+$.

{\bf {\em Structural restrictions.}}
We restrict the models allowed in order to obtain tractability of evaluation for {\em arbitrary} $\foil$ queries. In particular, we show that evaluation of $\varphi$, for $\varphi$ a fixed $\foil$ query, can be solved in polynomial time over the class of OBDDs as long as they are {\em complete}, i.e., any path from the root to a leaf of the OBDD tests every feature from the input, and have bounded {\em width}, i.e., there is a constant bound on the number of nodes of the OBDD in which a feature can appear. 


{\bf Practical implementation.}
We designed \foil\ with a minimal set of logical constructs and tailored for models with binary input features. 
These decisions are reasonable for a detailed theoretical analysis but may hamper \foil\ usage in more general scenarios, in particular when models have (many) categorical or numerical input features, and queries 
are manually written by non-expert users.
To tackle this we introduce a more user-friendly language with a high-level syntax (\emph{à la} SQL in the spirit of the query in Figure~\ref{fig:concreteSyntax}) that can be compiled into \foil\ queries. 
Moreover, 
we present a prototype implementation that can be used to query decision trees trained in standard ML libraries by binarizing them into models (a subclass of 
binary decision diagrams) over which \foil\ queries can be efficiently evaluated.
We also test the performance of our implementation over synthetic and real data giving evidence of the usability of \foil\ as a base for practical interpretabilty languages.

\section{A Logic for Interpretability Queries}
\label{sec:logic}


{\bf Background.}
An {\em instance} of dimension $n$, with $n \geq 1$, is a tuple  $\es \in \{0,1\}^n$. We use notation $\es[i]$ to refer to the $i$-th component of this tuple, or equivalently, its $i$-th feature.
Moreover, we consider an abstract notion of a model of dimension $n$, and we define it as a Boolean function $\M : \{0,1\}^n \to \{0, 1\}$. That is, $\M$ assigns a Boolean value to each instance of dimension $n$, so that we focus on binary classifiers with Boolean input features. Restricting inputs and outputs to be Boolean makes our setting cleaner while still covering several relevant practical scenarios.  We use notation $\dm(\M)$ for the dimension of a model $\M$. 

A {\em partial instance} of dimension $n$ is a tuple $\es \in \{0,1,\bot\}^n$. Intuitively, if $\es[i] = \bot$, then the 
value of the $i$-th feature is undefined. Notice that an instance is a particular case of a partial instance where all features are assigned value either $0$ or $1$. Given two partial instances $\es_1$, $\es_2$ of dimension $n$, we say that $\es_1$ is {\em subsumed} by $\es_2$ if for every $i \in \{1, \ldots, n\}$ such that $\es_1[i] \neq \bot$, it holds that $\es_1[i] = \es_2[i]$. That is, $\es_1$ is subsumed by $\es_2$ if it is possible to obtain $\es_2$ from $\es_1$ by replacing some unknown values. Notice that a partial instance $\es$ can be thought of as a compact representation of the set of instances $\es'$ such that $\es$ is subsumed by $\es'$, where such instances $\es'$ are called the {\em completions} of $\es$.

{\bf Models.}
 A \emph{binary decision diagram}
(BDD~\cite{wegener2004bdds}) over instances of dimension $n$ is a rooted directed acyclic
graph~$\mathcal{M}$ with labels on edges and nodes such that: (i) each
leaf is labeled with~$\true$ or $\false$; (ii) each internal node (a
node that is not a leaf) is labeled with a feature $i \in \{1,\dots,n\}$; and
(iii) each internal node has two outgoing edges, one labeled~$1$ and
the another one labeled~$0$. Every instance $\es \in \{0,1\}^n$ defines a
unique path $\pi_\es = u_1 \cdots u_k$ from the root $u_1$ to a leaf
$u_k$ of $\M$ such that: if the label of
$u_i$ is $j \in \{1,\dots,n\}$, 
where $i \in \{1, \ldots, k-1\}$, then the edge from $u_i$ to $u_{i+1}$ is labeled with $\es[j]$. Moreover, the instance $\es$ is positive, denoted by
$\mathcal{M}(\es) = 1$, if the label of $u_k$
is~$\true$; otherwise the instance $\es$ is negative, which is denoted
by $\mathcal{M}(\es) = 0$. 
A binary decision
diagram~$\mathcal{M}$ is \emph{free} (FBDD) if for every path from the
root to a leaf, no two nodes on that path have the same label.
Besides, $\M$ is {\em ordered} (OBDD) if there exists a linear order $<$ on the set 
$\{1,\dots,n\}$ of features such that, 
if a node $u$ appears before a node $v$ in some path in $\M$ from the root to a leaf, then $u$ is labeled with $i$ and $v$ is labeled with $j$ for features $i,j$ such that 
$i < j$.
A \emph{decision tree} is simply an FBDD whose underlying DAG is a
tree. 
Finally, a perceptron $\M$ of dimension $n$ is a pair $(w, t)$ where $\vw \in \mathbb{R}^n$ and $t \in \mathbb{R}$, and the classification of an instance $\es \in \{0, 1\}^n$ is defined as $\M(\es) = 1$ if and only if 
$\vw \cdot \es  \geq t$.

In this paper, we focus on the following classes of models: $\obdd$, the class of ordered BDDs,  $\dt$, the class of decision trees, and $\perceptrons$, the class of perceptrons. None of these classes directly subsume the other: decision trees are not necessarily ordered, while the underlying DAG of an OBDD is not necessarily a tree. In fact, it is known that neither OBDDs can be compiled into polynomial-size decision trees nor decision trees into polynomial-size OBDDs \cite{Darwiche2002, Breitbart1995}. Perceptrons on the other hand can only model linear decision boundaries and thus are inherently less expressive than decision trees or OBDDs. It is also known that perceptrons cannot be compiled in polynomial time to decision trees or OBDDs unless $\mathrm{P} = \np$~\cite{BarceloM0S20}. 


{\bf The logic $\foil$.}
We consider first-order logic over a vocabulary consisting of a unary predicate $\pos$ and a binary predicate $\subseteq$. This logic is called {\em first-order interpretability logic} (\foil), and it is our reference language for defining conditions on models that we would like to reason about. In particular, predicate $\pos$ is used to indicate the value of an instance in a model, while predicate $\subseteq$ is used to represent the subsumption relation among partial instances. In what follows, we show that many natural properties can be expressed in a simple way in \foil, demonstrating the suitability of 
this language for the purpose of expressing explainability queries. 

We assume familiarity with the syntax and semantics of first-order logic (see the appendix 
for a review of these concepts). In particular, given a vocabulary $\sigma$ consisting of relations $R_1$, $\ldots$, $R_\ell$, recall that a structure $\astruct$ over $\sigma$ consists of a domain, where quantifiers are instantiated, and an interpretation for each relation $R_i$. Moreover, given a first-order formula $\varphi$ defined over the vocabulary $\sigma$, we write $\varphi(x_1, \ldots, x_k)$ to indicate that $\{x_1, \ldots, x_k\}$ is the set of free variables of $\varphi$. Finally, given a structure $\astruct$ over the vocabulary $\sigma$ and elements $a_1$, $\ldots$, $a_k$ in the domain of $\astruct$, we use $\astruct \models \varphi(a_1, \ldots, a_k)$ to indicate that formula $\varphi$ is satisfied by $\astruct$ when each variable $x_i$ is replaced by element $a_i$ ($1 \leq i \leq k$).

Our goal when introducing $\foil$ is to have a logic that allows to specify natural properties of models in a simple way. In this sense, we still need to define when a model $\M$ satisfies a formula in $\foil$, as $\M$ is not a structure over the vocabulary $\{\pos$, $\subseteq\}$ (so we cannot directly use 
the notion of satisfaction of a formula by a structure). More precisely, assuming that $\dm(\M) = n$, the structure $\astruct_\M$ associated to $\M$ is defined as follows. The domain of $\astruct_\M$ is the set $\{0,1, \bot\}^n$ of all partial instances of dimension $n$. An instance $\es \in \{0,1\}^n$ is in the interpretation of predicate $\pos$ in $\astruct_\M$ if and only if $\M(\es) = 1$. Finally, a pair $(\es_1,\es_2)$ is in the interpretation of predicate $\subseteq$ in $\astruct_\M$ if and only if $\es_1$ is subsumed by $\es_2$. Then, given a formula $\varphi(x_1, \ldots, x_k)$ in $\foil$ and partial instances $\es_1$, $\ldots$, $\es_k$ of dimension $n$, model $\M$ is said to {\em satisfy} $\varphi(\es_1, \ldots, \es_k)$, denoted by $\M \models \varphi(\es_1, \ldots, \es_k)$, if and only if $\astruct_\M \models \varphi(\es_1, \ldots, \es_k)$.

{\bf Evaluation problem.}
$\foil$ is our main tool in trying to understand how interpretable is a class of models. In particular, the following is the main problem studied in this paper, given a class $\C$ of models and a formula $\varphi(x_1, \ldots, x_k)$ in \foil.
\begin{center}
\fbox{\begin{tabular}{rl}
Problem: & \logicEvaluation$(\varphi, \mathcal{C})$\\
Input: & A model $\M \in \C$ of dimension $n$, and partial instances $\es_1, \ldots, \es_k$ of dimension $n$\\
Output: & \textsc{Yes}, if $\M \models \varphi(\es_1, \ldots, \es_k)$, and \textsc{No} otherwise
\end{tabular}}
\end{center}
For example, assume that $\cnf$, $\dnf$ are the classes of models given as propositional formulae in CNF and DNF, respectively. If $\varphi = \exists x \, \pos(x)$, then  $\logicEvaluation(\varphi, \cnf)$ is $\np$-complete and 
$\logicEvaluation(\varphi, \dnf)$ can be solved in polynomial time, as such problems correspond to the satisfiability problems for the propositional formulae in CNF and DNF, 
respectively. 

Given a model $\M$, it is important to notice that the size of the structure $\astruct_\M$ can be exponential in the size of $\M$. Hence, $\astruct_\M$ is a theoretical construction needed to formally define the semantics of $\foil$, but that should not be built when verifying in practice if a formula $\varphi$ is satisfied by $\M$. In fact, if we are 
aiming at finding tractable algorithms for $\foil$-evaluation, then
we need to design an algorithm 
that uses directly the encoding of $\M$ as a model (for example, as a binary decision tree) rather than as a logical structure. In other words, in order to evaluate a query $\varphi = \exists x \, \pos(x)$ over a model $\M$ of dimension $n$, one could certainly iterate over all $2^n$ instances $\es \in \{0, 1\}^n$  and evaluate $\M(\es)$. This of course impractical for even small-dimensional data. Therefore, evaluating formulas without iterating over the entire space of (partial) instances is the main technical challenge behind the results presented in this paper.



\section{Expressing Properties in the Logic}
\label{sec:examples}

{\bf Basic queries.}
We provide some formulas in $\foil$ to gain more insight into this logic. 
Fix a model $\M$ of dimension $n$. We can ask whether $\M$ assigns value 1 to some instance by using $\foil$-formula $\exists x \, \pos(x)$. Similarly,  formula $\exists y \, (\full(y) \wedge \neg \pos(y))$ can be used to check whether $\M$ assigns value 0 to some instance, where
\begin{eqnarray}
\label{eq-def-full}
\full(x) &=& \forall y \,(x \subseteq y \to x = y)
\end{eqnarray}
is used to verify whether all values in $x$ are known (that is, $\M \models \full(\es)$ if and only if $\es$ is an instance). Notice that formula $\full(y)$ has to be included in 
$\exists y \, (\full(y) \wedge \neg \pos(y))$
since $\M \models \neg \pos(\es)$ for each partial instance $\es$ with unknown values. 

Given an instance $\es$ such that $\M(\es) = 1$, we can ask if the values assigned to the first two features are necessary to obtain a positive classification. Formally, define $\es_{\{1,2\}}$ as a partial instance such that $\es_{\{1,2\}}[1] = \es_{\{1,2\}}[2] = \bot$ and $\es_{\{1,2\}}[i] = \es[i]$ for every $i \in \{3,\ldots,n\}$, and assume that
\begin{eqnarray*}
\varphi(x) &=& \forall y \, ((x \subseteq y  \wedge \full(y)) \to \pos(y)).
\end{eqnarray*}
If $\M \models \varphi(\es_{\{1,2\}})$, then the values assigned in $\es$ to the first two features are not necessary to obtain a positive classification. Notice that the use of unknown values in $\es_{\{1,2\}}$ is fundamental to reason about all possible assignments for the first two features, while keeping the remaining values of features unchanged. Besides, observe that a similar question can be expressed in $\foil$ for any set of features.

As before, we can ask if there is a completion of a partial instance $\es$ that is assigned value 1, by using $\foil$-formula $\psi(x) = \exists y \, (x \subseteq y  \wedge \full(y) \wedge \pos(y))$; that is, $\M \models \psi(\es)$ if and only if there is an assignment for the unknown values of $\es$ that results in an instance classified positively. 

{\bf Minimal sufficient reasons.}
Given an instance $\es$ and a partial instance $\es'$ that is subsumed by $\es$, consider the problem of verifying whether $\es'$ is a {\em sufficient reason} for $\es$ in the sense that every completion of $\es'$ is classified in the same way as $\es$~\cite{SCD18,BarceloM0S20, DBLP:journals/corr/abs-2010-11034}. The following query expresses this:
\begin{eqnarray}
\label{eq-def-sr}
\sr(x,y) & = & \full(x) \wedge y \subseteq x \wedge
\forall z \, [(y \subseteq z \wedge \full(z)) \to (\pos(x) \leftrightarrow \pos(z))],
\end{eqnarray}
given that $\M \models (\es, \es')$ if and only if $\es'$ is a sufficient reason for $\es$. Finally, it can also be expressed in $\foil$ the condition that $y$ is a {\em minimal} sufficient reason for $x$:
\begin{eqnarray*}
\msr(x,y) &=& \sr(x,y) \wedge \forall z \, ((z \subseteq y \wedge \sr(x,z)) \to z = y).
\end{eqnarray*}
That is, $\M \models (\es, \es')$ if and only if $\es'$ is a sufficient reason for $\es$, and there is no partial instance $\es''$ such that $\es''$ is a sufficient reason for $\es$ and $\es''$ is properly subsumed by $\es'$. Minimal sufficient reasons have also been called PI-explanations or abductive explanations in the literature~\cite{shih2018symbolic, DBLP:conf/nips/0001GCIN20, DBLP:journals/corr/abs-2010-11034, DBLP:journals/corr/abs-1811-10656}.

{\bf Bias detection queries.}
Let us consider an elementary approach to fairness based on 
\emph{protected} features, i.e., features from a set $P$ that 
should not be used for decision taking (e.g., gender, age, marital status, etc). We use a formalization of this notion proposed in \cite{DarwicheH20}, while noting it does not capture many other forms of biases and unfairness~\cite{10.1145/3457607}, and is thus to be taken only as an example.
Given a model $\M$ of dimension $n$, and a set of protected features $P \subseteq \{1, \ldots, n\}$, an instance $\es$ is said to be a {\em biased decision} of $\M$ if there exists an instance $\es'$ such that $\es$ and $\es'$ differ only on features from $P$ and $\M(\es) \neq \M(\es')$.
	A model $\M$ is {\em biased} if and only if there is an instance $\es$ that is a biased decision of $\M$. 
In what follows, we show how to encode queries relating to biased decisions in $\foil$. 

Let $S = \{1, \ldots, n\}$, and assume that $\vzero_S$ is an instance of dimension $n$ such that $\vzero_S[i] = 0$ for every $i \in S$, and $\vzero_S[j] = \bot$ for every $j \in \{1, \ldots, n\} \setminus S$. Moreover, define $\vone_S$ in the same way but considering value 1 instead of 0, and define
\begin{eqnarray*}
\match(x,y,u,v) & = & \forall z \, [(z \subseteq u \vee z \subseteq v) \to (z \subseteq x \leftrightarrow z \subseteq y)].
\end{eqnarray*}
When this formula is evaluated replacing $u$ by $\vzero_S$ and $v$ by $\vone_S$, it verifies whether $x$ and $y$ have the same value in each feature in $S$. More precisely, given a model $\M$ and instances $\es_1$, $\es_2$ of dimension $n$, we have that $\M \models \match(\es_1, \es_2, \vzero_S, \vone_S)$ if and only if $\es_1[i] = \es_2[i]$ for every $i \in S$. Notice that the use of free variables $u$ and $v$ as parameters allows us to represent the matching of two instances in the set of features $S$, as, in fact, such matching is encoded by the formula $\match(x,y,\vzero_S,\vone_S)$. The use of free variables as parameters is thus a useful feature of $\foil$.

With the previous terminology, we can define a query
\begin{multline*}
	\textsc{BiasedDecision}(x,u,v) \ = \ \full(x) \ \wedge \\ \exists y \, [\full(y) \wedge \match(x,y,u,v) \wedge (\pos(x) \leftrightarrow \neg \pos(y))].
\end{multline*}
To understand the meaning of this formula, assume that $N = \{1, \ldots, n\} \setminus P$ is the set of non-protected features. 
When $\textsc{BiasedDecision}(x,u,v)$ is evaluated replacing $u$ by $\vzero_N$ and $v$ by $\vone_N$, it verifies whether there exists an instance $y$ such that $x$ and $y$ have the same values in the non-protected features but opposite classification, so that $x$ is a biased decision. Hence, the formula
\begin{eqnarray*}
\textsc{BiasedModel}(u,v) & = & \exists x \, \textsc{BiasedDecision}(x,u,v)
\end{eqnarray*}
can be used to check whether a model $\M$ is biased with respect to the set $P$ of protected features, as $\M$ satisfies this property if and only if $\M \models \textsc{BiasedModel}(\vzero_N, \vone_N)$. 

A query of the form $\exists x \, \big(\pos(x) \land (\bot,  \bot, \bot, \bot, 0, 1, \bot ) \subseteq x \big)$ was included as an initial example in Section \ref{sec:introduction}.  According to the formal definition of $\foil$, such a query corresponds to $\varphi(u) = \exists x \, \big(\pos(x) \land u \subseteq x \big)$, and the desired answer is obtained when verifying whether $\varphi(\es)$ is satisfied by a model, where $\es[1] = \es[2] = \es[3] = \es[4] = \bot$, $\es[5] = 0$, $\es[6] = 1$ and $\es[7] = \bot$. Again, notice that the use of free variables as parameters is an important feature of $\foil$.

\section{Limits to Efficient Evaluation}
\label{sec:intractability}
Several important interpretability tasks have been shown to be tractable for the 
decision models we study in the paper~\cite{BarceloM0S20}, which has justified the informal claim that they 
are ``interpretable''. But this does not mean that all interpretability 
tasks are in fact tractable for these models. We try to formalize this idea by studying 
the complexity of evaluation for queries in $\foil$ over them. We show next
that the evaluation problem over the models studied in the paper can become intractable, even for some simple  
queries in the logic with a natural interpretability flavor.
This intractability result is of importance, in our view, as it sheds light on the limits of efficiency
for interpretability tasks over the models studied, and hence on the 
robustness of the folklore claims about them being ``interpretable''. 


\begin{theorem} \label{theo:main-dt} 
There exists a formula $\psi(x)$ in $\foil$ for which \logicEvaluation$(\psi(x), \dt)$ and 
\logicEvaluation$(\psi(x), \obdd)$ are $\np$-hard. 
\end{theorem}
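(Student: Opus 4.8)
The plan is to reduce from $3$-SAT. Given a $3$-CNF formula $\phi = C_1 \wedge \cdots \wedge C_r$ over variables $v_1,\dots,v_m$, I would build in polynomial time a single Boolean function $\M_\phi$ and a single partial instance, and exhibit one fixed $\foil$ formula $\psi(x)$ such that $\M_\phi \models \psi(\vone_S)$ (for a suitable $S$) holds if and only if $\phi$ is satisfiable. The function $\M_\phi$ has dimension $n = m + s$ with $s = \lceil \log_2 r \rceil$: the first $m$ features encode a truth assignment $\vec a$, and the last $s$ features encode a clause index $j \in \{1,\dots,r\}$ (selector values exceeding $r$ are remapped to $C_1$, so that every clause is named by some selector). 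I set $\M_\phi(\vec a, \vec b) = 1$ exactly when the clause named by $\vec b$ is satisfied by $\vec a$.

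Crucially, the function $\M_\phi$ is computed both by a polynomial-size decision tree and by a polynomial-size OBDD: first branch on the $s$ selector features (a complete binary tree with $O(r)$ leaves), and at the leaf naming clause $C_j$ test the at most three variables occurring in $C_j$. Reading the selectors first and then the assignment features in a fixed global order makes the very same construction a valid $\dt$ and a valid $\obdd$, so a single reduction yields $\np$-hardness for both classes at once (for $\dt$ I hand over the tree representation, for $\obdd$ the ordered-diagram representation of the same function).

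The fixed query expresses ``some assignment to the features marked by $x$ is a sufficient reason for a positive output'':
\[
\psi(x) \;=\; \exists y \, \Big[ \full(y) \wedge \forall z \big( ( \full(z) \wedge \mathrm{ag}(x,y,z) ) \to \pos(z) \big) \Big],
\]
where $\mathrm{ag}(x,y,z)$ abbreviates $\forall w\,\big( w \subseteq x \to (w \subseteq y \leftrightarrow w \subseteq z)\big)$. I would evaluate $\psi$ with $x$ bound to $\vone_S$ for $S = \{1,\dots,m\}$, the assignment features. A short lemma shows that for \emph{full} $y,z$, the subformula $\mathrm{ag}(\vone_S,y,z)$ holds precisely when $y$ and $z$ agree on $S$: every $w \subseteq \vone_S$ is supported inside $S$ with all its defined values equal to $1$, so the singleton probes force $y[i]=1 \leftrightarrow z[i]=1$, i.e.\ $y[i]=z[i]$, for each $i \in S$. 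Hence $\M_\phi \models \psi(\vone_S)$ iff there is an assignment $\vec a$ (the $S$-part of $y$) all of whose completions over the selector features are positive, i.e.\ every clause is satisfied by $\vec a$ --- exactly satisfiability of $\phi$.

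The step I expect to be the main obstacle, and would be most careful about, is keeping the parameter count down to the single free variable $x$ allowed by the statement: the general matching gadget $\match$ of Section~\ref{sec:examples} uses two parameters $\vzero_S,\vone_S$, but the observation above --- that for full instances the value-$1$ probes alone pin down agreement on $S$ --- lets me use the single parameter $\vone_S$. The remaining verifications are routine: that $\M_\phi$ and $\vone_S$ are computable in polynomial time, that $\M_\phi$ is genuinely a decision tree (no feature repeats on any root-to-leaf path) and genuinely an OBDD (the order selectors-then-variables is respected), and that both directions of the equivalence hold. Since $3$-SAT is $\np$-hard, both $\logicEvaluation(\psi(x),\dt)$ and $\logicEvaluation(\psi(x),\obdd)$ are $\np$-hard.
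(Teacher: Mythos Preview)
Your proposal is correct, but it follows a genuinely different route from the paper's proof. Both arguments reduce from $3$-SAT and construct an ordered decision tree (so one reduction serves $\dt$ and $\obdd$ simultaneously), but they diverge in two essential respects. First, the clause selector: you encode the clause index in $\lceil \log_2 r \rceil$ binary features, whereas the paper uses a unary selector with one feature per clause (feature $i$ set to $0$ diverts into a constant-size tree for $C_i$; all selectors set to $1$ leads to a $\true$ leaf). Second, the fixed $\foil$ formula: your $\psi(x)$ is a sufficient-reason style query (``there is a full $y$ such that every full $z$ agreeing with $y$ on the $x$-marked features is positive''), and the single-parameter agreement gadget $\mathrm{ag}(\vone_S,\cdot,\cdot)$ is a nice simplification of the paper's two-parameter $\match$. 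The paper instead uses a stability query,
\[
\psi(x) \;=\; \exists y\,\big(x \subseteq y \wedge \pos(y) \wedge \textsc{Stable}(y)\big),
\]
where $\textsc{Stable}(y)$ says that flipping any single feature of $y$ preserves the classification; with the unary selector, flipping the $i$-th selector from $1$ to $0$ forces clause $C_i$ to be checked, so stability at a positive completion is exactly satisfiability. Your approach is more direct and yields a smaller model; the paper's buys a hardness witness with an arguably more natural ``interpretability'' reading (robustness to single-feature perturbations), and in fact the authors note their $\psi$ can be rewritten with only two variables. Either argument fully establishes the theorem.
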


This result tell us that there exists a concrete property expressible in $\foil$ that cannot be solved in polynomial time for decision trees and OBDDs (unless $\text{P} = \np$). 
In what follows, we describe this property, 
and how it is represented as a formula $\psi(x)$ in $\foil$ (the complete proof of Theorem \ref{theo:main-dt} is provided in the appendix).

Assume that $x \subset y$ is the formula $x \subseteq y \wedge x \neq y$ that verifies whether $x$ is properly subsumed by $y$.
We first define the following auxiliary predicates: 
\begin{eqnarray*}
\textsc{Adj}(x,y) &=&  
x \subset y \wedge \neg \exists z \, (x \subset z \wedge z \subset y),
\\
\textsc{Diff}(x,y) &=& \full(x) \wedge \full(y) \wedge x \neq y \wedge \exists z \, (\textsc{Adj}(z,x) \wedge \textsc{Adj}(z,y)).
\end{eqnarray*}
More precisely, $\textsc{Adj}(x,y)$ is used to check whether a partial instance $x$ is adjacent to a partial instance $y$, in the sense that  
$x$ is properly subsumed by $y$
and there is no partial instance $z$ such that $x$ is properly subsumed by $z$ and $z$ is properly subsumed by $y$. Moreover, $\textsc{Diff}(x,y)$ is used to verify whether two instances $x$ and $y$ differ exactly in the value of one feature. By using these predicates, we define the following notion of {\em stability} for an instance: 
\begin{eqnarray*}
\textsc{Stable}(x) &=& \forall y \, [\textsc{Diff}(x,y) \to (\pos(x) \leftrightarrow \pos(y))].
\end{eqnarray*}
That is, an instance $x$ is said to be stable if and only if any change in exactly one feature of $x$ leads to the same classification. Then the formula $\psi(x)$ in Theorem \ref{theo:main-dt} is defined as follows:
\begin{eqnarray*}
\psi(x) & = & \exists y \, (x \subseteq y \wedge \pos(y) \wedge \textsc{Stable}(y)).
\end{eqnarray*}
Hence, given a partial instance $x$, formula $\psi(x)$ is used to check if there is a completion of $x$ that is stable and positive. Theorem \ref{theo:main-dt} states that checking this for decision trees and OBDDs is an intractable problem. Observe that 
the notion of stability used in $\psi(x)$ has a natural interpretability flavor: it identifies positive instances whose classification is not affected by the perturbation of a single feature.  Note as well that the supposed interpretability of decision trees has already been questioned and nuanced in the literature~\cite{lip10.1145/3236386.3241340, BarceloM0S20}, to which this result contributes.


\section{Tractable Restrictions}
\label{sec:tractability}
Theorem \ref{theo:main-dt} tells us that evaluation of $\foil$ queries can be an intractable problem, but of course this does not completely rule 
out the applicability of the logic. In fact, as we show in this section one can obtain tractability by either restricting the analysis to a useful syntactic 
fragment of $\foil$, or by considering a structural restriction on the class of models over which $\foil$ queries are evaluated.  

\subsection{A tractable fragment of \textrm{FOIL}} 
\label{sec:tractable_foil}

We present a fragment of $\foil$ that is simple enough to yield tractability, but which is at the same time 
expressive enough to encode natural interpretability problems. This is not a trivial challenge, though, 
as the proof of Theorem \ref{theo:main-dt} 
shows 
intractability 
of queries 
in a syntactically simple fragment of 
$\foil$ (in fact, only two quantifier alternations 
suffice for the result to hold). 

Our starting point in this search is $\efoil$,  which is the fragment of $\foil$ consisting of all formulas where no universal quantifier occurs and no existential quantifier appears under a negation (each such a formula can be rewritten into a formula of the form $\exists x_1 \cdots \exists x_k \, \alpha$, where $\alpha$ does not mention any quantifiers). Moreover, we consider the fragment $\afoil$ of $\foil$, which is defined in the same way as $\efoil$ but exchanging the roles of universal and existential quantifiers. Then 
we show the~following:

\begin{proposition}
Let $\varphi$ be a query in $\efoil$ or $\afoil$. Then $\logicEvaluation(\varphi, \dt)$ and $\logicEvaluation(\varphi, \obdd)$ can be solved in polynomial time.
\label{prop:dt-obdd-efoil}
\end{proposition}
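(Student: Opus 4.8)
The plan is to route all dependence on the model $\M$ through a single tractable ``completion'' primitive, and to tame the exponential domain $\{0,1,\bot\}^n$ by exploiting that the query $\varphi$ is fixed. First I would dispose of $\afoil$ by duality: if $\varphi = \forall x_1 \cdots \forall x_r\, \beta$ is in $\afoil$ with $\beta$ quantifier-free, then $\neg\varphi$ is equivalent to $\exists x_1 \cdots \exists x_r\, \neg\beta$, which lies in $\efoil$; since $\M \models \varphi(\bar{\es})$ iff $\M \not\models (\neg\varphi)(\bar{\es})$ and $\ptime$ is closed under complement, it suffices to handle $\efoil$. So fix $\varphi = \exists x_1 \cdots \exists x_r\, \alpha$ with $\alpha$ quantifier-free and put $\alpha$ in DNF. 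As $\varphi$ is fixed, this DNF has a constant number of disjuncts, each a conjunction of literals over the atoms $\pos(\cdot)$, $\cdot \subseteq \cdot$, and $\cdot = \cdot$, and $\varphi$ holds iff some disjunct is satisfiable. It therefore suffices to decide in polynomial time the satisfiability of a single conjunction $\gamma$ of literals, where the free variables are instantiated by the given partial instances and $x_1,\dots,x_r$ range over $\{0,1,\bot\}^n$.

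The only model-dependent atom is $\pos$: the relations $\subseteq$ and $=$ refer to the fixed subsumption lattice and are independent of $\M$. The single primitive I need is the \emph{completion query}: given a set of coordinates fixed to prescribed Boolean values, decide whether some full instance extending this pattern is classified positively by $\M$, together with its negative analogue. For both $\dt$ and $\obdd$ these are solvable in polynomial time, since after fixing the prescribed coordinates the question is exactly reachability of a $\true$-leaf (resp. a $\false$-leaf), i.e.\ the satisfiability test already known to be tractable for these classes (the same observation underlying the tractability of $\exists x\,\pos(x)$ noted earlier).

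I would then reduce $\gamma$ to polynomially many completion queries by guessing a polynomial amount of structural data, which is affordable precisely because $\gamma$ has a constant number of variables and literals. Concretely: (i) compute the equivalence classes induced by the $=$-literals and reject if some $\neq$-literal is thereby violated; (ii) for each literal $x_i \neq x_j$ or $\neg(x_i \subseteq x_j)$, guess a \emph{witness coordinate} (one of $n$) together with the finitely many value choices realizing the difference there; and (iii) for each variable in a literal $\neg\pos(x_j)$, guess whether it is made non-full (and at which coordinate it takes $\bot$) or full-but-negative. Since the number of literals is constant while each guess ranges over $O(n)$ coordinates and $O(1)$ values, only $n^{O(1)}$ configurations arise. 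For each configuration I propagate the remaining \emph{positive} subsumption and equality constraints coordinate-wise: each $\pos$-literal forces its variable to be full, subsumption between two full instances forces equality, and subsumption against a fixed parameter fixes coordinates. This yields for each variable a consistent pattern of fixed coordinates, or a detected clash (reject the configuration). Feasibility then reduces exactly to a constant number of completion queries of the tractable kind -- a positive completion for each variable marked positive, a negative completion for each marked full-negative -- plus local checks that the guessed witness coordinates are respected. Accepting iff some configuration of some disjunct passes gives the polynomial-time algorithm.

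The main obstacle is the exponential domain: each $x_i$ ranges over $3^n$ partial instances, so naive search is infeasible, and the disequality and non-subsumption literals appear to require pinpointing coordinates among $n$. The two ideas that resolve this are (a) channelling every use of $\M$ through the positive/negative completion queries, which are tractable for $\dt$ and $\obdd$, and (b) using the fixedness of $\varphi$ to bound the number of difference-witness and variable-type guesses by a constant, keeping the total search at $n^{O(1)}$. The delicate point to verify rigorously is that, once the witnesses and variable types are guessed and the positive constraints propagated, feasibility is captured \emph{exactly} by this bounded set of completion queries -- that is, that the interaction of several $\pos$-constrained variables with the subsumption and equality literals hides no residual exponential search.
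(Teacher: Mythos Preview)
Your proposal is correct and follows essentially the same approach as the paper: reduce $\afoil$ to $\efoil$ by duality, exploit the fixedness of $\varphi$ to enumerate a constant number of ``profiles'' (the paper guesses a type/containment assignment $(\tau,\gamma)$ rather than going through DNF, but this is the same thing), guess $O(1)$ witness coordinates for the negative $\subseteq$/$=$ literals, propagate the positive constraints, and finish with a bounded number of positive/negative completion queries, which are tractable reachability tests in $\fbdd$s. The paper supplies exactly the rigorous verification you flag as the ``delicate point'' via a greedy topological-order argument: processing variables along the $\subseteq$-DAG, it sets non-$\pos$ variables to $\bot$ on all remaining coordinates and picks an arbitrary accepting completion for $\pos$ variables, then shows by a minimal-counterexample swap that if any consistent assignment exists this greedy one also works --- so your concern that interacting $\pos$-constraints might hide residual search is unfounded, and in fact your ``full-but-negative'' guess is unnecessary in plain $\efoil$ (it becomes relevant only once $\full$ is added in $\efoilp$).
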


However, the fragment $\efoil$ has a limited expressive power since, for example, the predicate $\full(x)$ defined in (\ref{eq-def-full}) cannot be expressed in it (see Appendix \ref{sec:proof-full-efoil} for a formal proof of this claim).
To remedy this, 
we extend $\efoil$ by including predicate $\full(x)$ and two other unary predicates that are common in interpretability queries. More precisely, let 
$\allpos(x)$ and $\allneg(x)$ be unary predicates defined as follows:
\begin{eqnarray*}
\allpos(x) &=& \forall y \, \big((x \subseteq y \wedge \full(y)\big) \to \pos(y)),\\
\allneg(x) &=& \forall y \, \big(x \subseteq y \to \neg\pos(y)\big).
\end{eqnarray*}
Then $\efoil^+$ is defined as the fragment of $\foil$ consisting of all formulae where no universal quantifier occurs and no existential quantifier appears under a negation, and which are defined  over the extended vocabulary $\{\pos$, $\subseteq$, $\full$, 
$\allpos$, $\allneg\}$. In the same way, we define $\afoilp$ by exchanging the roles of universal and existential quantifiers.
Notice that the formula defining the notion of sufficient reason in (\ref{eq-def-sr}) 
is in $\afoilp$.
Similarly, the notion of minimal sufficient reason introduced in Section \ref{sec:examples} can be expressed in $\afoilp$:
\begin{multline*}
\msr(x,y) \ = \ \sr(x,y)  \wedge
\forall u \, [(u \subseteq y \wedge u \neq y \wedge \pos(x)) \to \neg \allpos(u)] \ \wedge\\
\forall v \, [(v \subseteq y \wedge v \neq y \wedge \neg \pos(x)) \to \neg \allneg(v)].
\end{multline*}
%
In what follows, we investigate the tractability of the fragments $\efoilp$ and $\afoilp$.  In particular, in the case of $\efoilp$, we show that the tractability for a class of models $\C$ can be characterized in terms of the tractability in $\C$ of two specific queries in $\efoilp$:
\begin{multline*}
\pap(x,y,z) \ = \
\exists u \, [x \subseteq u \wedge \allpos(u) \ \wedge\\
\exists v \, (y \subseteq v \wedge u \subseteq  v) \wedge \exists w \, (z \subseteq w \wedge u \subseteq w)],
\end{multline*}
and $\pan(x,y,z)$ that is defined exactly as $\pap(x,y,z)$ but replacing $\allpos(u)$ by $\allneg(u)$. More precisely, we have the following:
\begin{theorem}\label{theo:pap-pan}
For every class $\C$ of models, the following conditions are equivalent:
(a) $\logicEvaluation(\varphi ,\C)$ can be solved in polynomial time for each query $\varphi$ in $\efoilp$;
%
(b) $\logicEvaluation(\pap,\C)$ and $\logicEvaluation(\pan,\C)$ can be solved 
in polynomial~time.
\end{theorem}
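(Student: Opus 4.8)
We first dispose of the easy implication. For (a) $\Rightarrow$ (b), simply note that both $\pap$ and $\pan$ are themselves queries in $\efoilp$, so their tractability over $\C$ is a special case of (a).

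The content is in (b) $\Rightarrow$ (a). The plan is to show that, assuming oracle access to $\logicEvaluation(\pap,\C)$ and $\logicEvaluation(\pan,\C)$ (both polynomial by hypothesis), every fixed $\varphi \in \efoilp$ can be evaluated in polynomial time. The first step is normalization. Writing $\varphi$ in prenex form $\exists \vec u\,\alpha$ with $\alpha$ quantifier-free, putting $\alpha$ in DNF, and distributing the existential block over the disjunction, we reduce to evaluating a disjunction of conjunctive queries $\exists\vec u\,\gamma$, where $\gamma$ is a conjunction of literals. Since $\varphi$ is \emph{fixed}, the number of such conjunctive queries, of variables, and of literals are all constants; hence any transformation whose cost is exponential in the size of $\varphi$ but polynomial in $|\M|$ is free. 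Using this, I eliminate every model-dependent literal other than a positive $\allpos$ or $\allneg$, via the $\efoilp$-preserving equivalences $\pos(u)\equiv \full(u)\wedge\allpos(u)$, $\neg\pos(u)\equiv \neg\full(u)\vee\allneg(u)$ (distributing the disjunction), $\neg\allpos(u)\equiv\exists u'\,(u\subseteq u'\wedge\full(u')\wedge\allneg(u'))$, and $\neg\allneg(u)\equiv\exists u'\,(u\subseteq u'\wedge\full(u')\wedge\allpos(u'))$. After this, the only literals that consult $\M$ are positive occurrences $\allpos(u_i)$ and $\allneg(u_i)$; all remaining literals are \emph{structural} ($\subseteq$, $=$, and their negations, together with $\full$, $\neg\full$), i.e. they depend only on $n$.

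The heart of the argument is the reduction of a single model-touching variable to one oracle call, and this is exactly what dictates the shape of $\pap$ and $\pan$. Fix a variable $u$ carrying $\allpos(u)$, and suppose for the moment that every other variable constrained to $u$ has been fixed to a concrete partial instance. All lower-bound constraints $w\subseteq u$ collapse into the single requirement $x^\ast\subseteq u$, where $x^\ast$ is the join of the relevant $w$'s. All upper-bound and compatibility constraints on $u$ (each of the form $u\subseteq w$, or ``$u,w$ have a common upper bound'' $\exists v\,(w\subseteq v\wedge u\subseteq v)$) are, feature by feature, restrictions of $u[j]$ to a set $S_j\subseteq\{0,1,\bot\}$ with $\bot\in S_j$; every such ceiling is $\{\bot\}$, $\{\bot,0\}$, $\{\bot,1\}$, or $\{\bot,0,1\}$, and \emph{any} such profile is realized by exactly two compatibility constraints against suitable instances $y^\ast,z^\ast$ (take $y^\ast[j]=0,z^\ast[j]=1$ to force $u[j]=\bot$; $y^\ast[j]=z^\ast[j]=b$ to cap at $\{\bot,b\}$; $y^\ast[j]=z^\ast[j]=\bot$ to leave it free). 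This is precisely why $\pap$ carries two ``compatibility slots'' $y,z$ besides the lower bound $x$: a single call $\pap(x^\ast,y^\ast,z^\ast)$ decides whether an all-positive $u$ meeting all of its structural obligations exists, and symmetrically $\pan$ handles a variable with $\allneg(u)$. Negative structural literals ($\neq$, $\not\subseteq$, $\neg\full$) are ``there exists a feature such that\ldots'' assertions; since there are constantly many of them, I guess a witnessing feature for each (polynomially many choices) and fold the resulting forced coordinates into $x^\ast$ and the ceilings, after which the only global conditions left are the $\allpos/\allneg$ tests.

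The main obstacle is that the model-touching variables are in general \emph{not} independent: several of them may be linked to one another by subsumption, so I cannot literally ``fix all other variables'' before solving $u$. The plan to overcome this rests on two facts. First, $\allpos$ and $\allneg$ are monotone along $\subseteq$: if $u\subseteq u'$ then $\allpos(u)\Rightarrow\allpos(u')$ and $\allneg(u)\Rightarrow\allneg(u')$, while a mixed pair $\allpos(u)\wedge\allneg(u')$ with $u\subseteq u'$ is outright unsatisfiable and can be detected. Consequently the subsumption order on the constantly many model-touching variables can be processed from bottom to top: once the $\subseteq$-smaller variables are instantiated, the requirement on a larger one is either implied or reduces, via the gadget above, to a single oracle call whose parameters encode the already-fixed lower neighbours together with the guessed interface coordinates. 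Second, because there are only constantly many variables, I can afford to guess the qualitative $\subseteq$/$=$ pattern among them and a witnessing feature for each negative literal, turning the per-feature structural conditions into constant-size constraint problems over $\{0,1,\bot\}$ that are trivially solvable. The delicate point to verify carefully is that this bottom-up scheme is sound and complete: that whenever each individual oracle call succeeds under the chosen interface, the witnesses can be assembled into one global assignment satisfying $\gamma$, for which the monotonicity of $\allpos/\allneg$ and the fact that the oracle's box captures \emph{all} of a variable's structural obligations are exactly what is needed.
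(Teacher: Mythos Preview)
Your proposal is correct and follows essentially the same strategy as the paper: guess the qualitative $\subseteq$-pattern and unary ``types'' of the constantly many variables, guess a witnessing coordinate for each negative structural literal, and then resolve the model-dependent atoms by processing variables along the $\subseteq$-order using $\pap$/$\pan$ as oracles, relying on the upward monotonicity of $\allpos$ and $\allneg$ for soundness of the bottom-up sweep.

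The packaging differs in two minor ways. First, the paper introduces an intermediate problem $\textsc{DeterminizationAllPos}$ (and its negative dual) over four-valued instances $\{0,1,\bot,\diamondsuit\}^n$ and proves a separate lemma that it is polynomial-time equivalent to $\pap$; you bypass this by explaining directly why the three parameters of $\pap$ (one lower bound plus two compatibility slots) already realise every coordinatewise ``box'' constraint $S_j\ni\bot$ on the witness---your explanation of the two-slot trick is in fact more transparent than the paper's detour. Second, the paper guesses a full \emph{type} for each variable (which unary predicates from $\{\pos,\full,\expos,\exneg\}$ it satisfies) and then checks feasibility of the guessed type/containment profile, whereas you first normalise via the equivalences $\pos\equiv\full\wedge\allpos$, $\neg\allpos\equiv\exists(\cdot\wedge\full\wedge\allneg)$, etc., so that only positive $\allpos$/$\allneg$ atoms touch the model; this is the same reduction, just done syntactically rather than semantically. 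Both lead to the identical topological-order algorithm and the same correctness argument based on monotonicity.
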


This theorem gives us a concrete way to study the tractability of $\efoilp$ over a class of models. Besides, as the negation of a query in $\afoilp$ is a query in $\efoilp$, Theorem \ref{theo:pap-pan} also provides us with a tool to study the tractability of $\afoilp$. In fact, it is possible to prove the following for the class $\perceptrons$ of perceptrons.

\begin{proposition}\label{prop:pap-pan-ptron}
The problems
 $\logicEvaluation(\pap,\perceptrons)$ and
$\logicEvaluation(\pan,\perceptrons)$ can be solved in polynomial time.
\end{proposition}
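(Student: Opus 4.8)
The plan is to unpack what $\pap$ and $\pan$ actually assert over a perceptron $\M = (\vw, t)$ and reduce each to a polynomial-time feasibility check. First I would parse the structure of $\pap(\es_1, \es_2, \es_3)$: it asserts the existence of a partial instance $u$ with $\es_1 \subseteq u$, such that $\allpos(u)$ holds, and such that $u$ is \emph{compatible} with both $\es_2$ and $\es_3$ (the existence of common upper bounds $v, w$ in the subsumption order is exactly the statement that $u$ and $\es_2$ agree wherever both are defined, and likewise for $u$ and $\es_3$). So $\pap$ reduces to: does there exist a partial instance $u$ extending $\es_1$, agreeing with $\es_2$ and with $\es_3$ on their defined coordinates, such that every completion of $u$ is classified positively by $\M$? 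The key observation is that $\allpos(u)$ for a perceptron means $\min_{\es \text{ completion of } u} \vw \cdot \es \geq t$, and this minimum is computed coordinate-wise: on coordinates where $u$ is defined we are forced, and on free coordinates we choose the sign-minimizing value (set $\es[i] = 0$ if $w_i > 0$, else $\es[i] = 1$). Hence $\allpos(u)$ is checkable in linear time once $u$ is fixed.

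Next I would handle the quantifier over $u$. The constraints "$\es_1 \subseteq u$, $u$ agrees with $\es_2$ on $\mathrm{def}(\es_2)$, $u$ agrees with $\es_3$ on $\mathrm{def}(\es_3)$" are purely \emph{local per coordinate}: on each coordinate $i$, the value $u[i]$ is either forced to a specific bit (if $i$ is defined in any of $\es_1, \es_2, \es_3$, in which case we must also check these agree — if they conflict, output \textsc{No}) or is free to be $0$, $1$, or $\bot$. The crucial structural fact is that making $u$ \emph{more defined} only shrinks its set of completions, which can only help $\allpos(u)$; so for maximizing the chance that $\allpos(u)$ holds, we should leave every non-forced coordinate as $\bot$ — i.e.\ there is a unique "best" candidate $u^\star$, namely the partial instance forced exactly on $\mathrm{def}(\es_1) \cup \mathrm{def}(\es_2) \cup \mathrm{def}(\es_3)$ and $\bot$ elsewhere. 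Then $\pap$ holds iff the agreement constraints are consistent and $\allpos(u^\star)$ holds, both checkable in polynomial time. The argument for $\pan$ is symmetric, replacing the minimizing completion by the \emph{maximizing} one ($\es[i] = 1$ if $w_i > 0$, else $0$) and testing $\vw \cdot \es < t$ for all completions, i.e.\ $\max_{\es} \vw \cdot \es < t$.

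I expect the main obstacle to lie in rigorously justifying the "leave free coordinates undefined" reduction — specifically, establishing the monotonicity claim that if $u \subseteq u'$ (so $u'$ is more defined) then $\allpos(u) \Rightarrow \allpos(u')$, and arguing that among all $u$ satisfying the agreement constraints the maximally-undefined one $u^\star$ is the most permissive witness for $\allpos$. This requires being careful that defining a coordinate could either help or be neutral but never hurt, and that the compatibility constraints with $\es_2, \es_3$ force certain coordinates regardless. Once that monotonicity is in hand, the rest is the elementary coordinate-wise min/max computation of $\vw \cdot \es$ over completions, which is clearly linear-time and needs no real cleverness; the perceptron's linear decision boundary is exactly what makes the worst-case completion computable greedily rather than by searching exponentially many instances.
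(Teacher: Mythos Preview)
Your proposal contains two related errors that break the argument.

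\textbf{First}, the constraint analysis is wrong. You write that $u[i]$ is ``forced to a specific bit if $i$ is defined in any of $\es_1, \es_2, \es_3$''. This is true only for $\es_1$: the condition $\es_1 \subseteq u$ forces $u[i] = \es_1[i]$ whenever $\es_1[i] \neq \bot$. But compatibility of $u$ with $\es_2$ (existence of a common upper bound) says only that \emph{if} $u[i] \neq \bot$ then $u[i] = \es_2[i]$; it never forces $u[i]$ to be defined. So on a coordinate where $\es_1[i] = \bot$ and $\es_2[i] = b$, the admissible values for $u[i]$ are $\{\bot, b\}$, not $\{b\}$. In particular, if $\es_2[i] = 0$ and $\es_3[i] = 1$ the only admissible value is $\bot$.

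\textbf{Second}, and more seriously, your monotonicity conclusion is inverted. You correctly observe that making $u$ more defined shrinks its set of completions and hence can only \emph{help} $\allpos(u)$. The correct inference is therefore that the optimal witness $u^\star$ should be \emph{maximally} defined subject to the constraints, not minimally defined. Your choice of $u^\star$ (leaving all non-forced coordinates $\bot$) is the worst possible candidate for $\allpos$, not the best. Concretely: take $n=1$, $\vw = (1)$, $t = 1$, and $\es_1 = \es_2 = \es_3 = (\bot)$. Your $u^\star = (\bot)$ fails $\allpos$ (since $(0)$ is a negative completion), so your algorithm outputs \textsc{No}; but $u = (1)$ is a valid witness, so the correct answer is \textsc{Yes}.

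The fix is exactly what your monotonicity observation actually supports: on each coordinate, among the admissible values for $u[i]$, pick the one maximizing the worst-case contribution to $\vw \cdot \es$ over completions. Where $u[i]$ is forced to $\bot$ (e.g.\ $\es_2,\es_3$ conflict) the contribution is $\min(0,w_i)$; where $u[i]$ is unconstrained, set it to $1$ if $w_i \geq 0$ and $0$ otherwise, contributing $\max(0,w_i)$; where $u[i]$ is forced to a bit $b$, the contribution is $b\cdot w_i$. Sum and compare to $t$. This is essentially what the paper does (after first abstracting $\pap$ into an auxiliary problem with four-valued inputs), and your greedy min/max intuition for the perceptron is exactly right once the direction of the optimization over $u$ is corrected.
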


From this proposition and Theorem \ref{theo:pap-pan}, it is possible to establish the following tractability results for $\efoilp$ and $\afoilp$. 

\begin{corollary}
Let $\varphi$ be a query in $\efoilp$ or $\afoilp$. Then $\logicEvaluation(\varphi ,\perceptrons)$ can be solved in polynomial time.
\end{corollary}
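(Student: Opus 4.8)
The plan is to turn each existential-over-partial-instances query into a coordinatewise optimization, exploiting that for a perceptron the predicates $\allpos$ and $\allneg$ admit simple closed forms. Fix a perceptron $\M=(\vw,t)$ with $\vw=(w_1,\dots,w_n)$, and let $u$ be a partial instance with set of defined positions $D=\{i: u[i]\neq\bot\}$. A full completion of $u$ is obtained by assigning each undefined position a value in $\{0,1\}$, and the minimum and maximum of $\vw\cdot\es$ over all such completions $\es$ are attained by setting each free coordinate against or along the sign of $w_i$. Hence
\[ \min_{\es}\ \vw\cdot\es \;=\; \sum_{i\in D} w_i\,u[i] + \sum_{i\notin D}\min(0,w_i) =: m(u), \]
\[ \max_{\es}\ \vw\cdot\es \;=\; \sum_{i\in D} w_i\,u[i] + \sum_{i\notin D}\max(0,w_i) =: M(u). \]
Since $\pos(\es')$ can hold only when $\es'$ is a full instance, $\allpos(u)$ is equivalent to $m(u)\geq t$ and $\allneg(u)$ is equivalent to $M(u) < t$; both quantities are computable in time $O(n)$.

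Next I would describe the feasible set of witnesses $u$. In $\pap(x,y,z)$ the witness must satisfy $x\subseteq u$ and be jointly compatible with $y$ and $z$, where compatibility is exactly the per-coordinate condition: $\exists v\,(y\subseteq v \wedge u \subseteq v)$ holds iff $y[i]=u[i]$ whenever both are defined (take $v$ to be the union of the two, which exists precisely under this agreement). Consequently the constraints on $u$ decompose feature by feature. For each $i$ I would define the allowed set $A_i\subseteq\{0,1,\bot\}$: if $x[i]\neq\bot$ then $u[i]=x[i]$ is forced, and the instance is infeasible (so the answer is \textsc{No}) if $x[i]$ disagrees with a defined $y[i]$ or $z[i]$; if $x[i]=\bot$ then $\bot\in A_i$ always, and a value $b\in\{0,1\}$ lies in $A_i$ iff $b$ agrees with $y[i]$ and $z[i]$ wherever those are defined. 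The feasible witnesses are then exactly the partial instances with $u[i]\in A_i$ for all $i$, i.e.\ the product $\prod_i A_i$, and each $A_i$ is nonempty once the forced-coordinate check passes.

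The final step uses that the objective separates the same way. The contribution of feature $i$ to $m(u)$ is $w_i$, $0$, or $\min(0,w_i)$ according to whether $u[i]$ is $1$, $0$, or $\bot$, and analogously for $M(u)$ with $\max(0,w_i)$. Since $m$ (resp.\ $M$) is a sum of per-feature terms and the feasible region is the product $\prod_i A_i$, the extremum over all feasible $u$ is obtained greedily: for $\pap$ I would pick, at each feature, the value in $A_i$ maximizing its contribution to $m$, so that $\pap(x,y,z)$ holds iff the resulting maximum of $m(u)$ is at least $t$; for $\pan$ I would instead minimize, coordinate by coordinate, the contribution to $M$, and report that $\pan(x,y,z)$ holds iff the resulting minimum of $M(u)$ is strictly below $t$. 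Both procedures run in time $O(n)$ (plus reading $\M$ and the inputs), giving polynomial-time algorithms for $\logicEvaluation(\pap,\perceptrons)$ and $\logicEvaluation(\pan,\perceptrons)$.

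I expect the only real subtlety to be the coordinatewise reduction itself: checking that the three subsumption requirements ($x\subseteq u$ together with compatibility of $u$ with $y$ and with $z$) really are independent across features, and that the linear-threshold min/max formulas for $\allpos$ and $\allneg$ correctly turn an existential over exponentially many completions into the single quantities $m(u)$ and $M(u)$. Once this separation is in place, both the feasibility test and the optimization are immediate greedy computations.
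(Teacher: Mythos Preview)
Your argument correctly establishes that $\logicEvaluation(\pap,\perceptrons)$ and $\logicEvaluation(\pan,\perceptrons)$ are solvable in polynomial time, and the core idea---that both the $\allpos$/$\allneg$ predicates and the subsumption-compatibility constraints decompose coordinate by coordinate, so a greedy per-feature optimization is valid---is essentially the paper's idea as well. The paper routes through an auxiliary problem $\dap$ (determinize an instance over $\{0,1,\bot,\diamondsuit\}$ so that all completions are positive), shows $\pap$ and $\dap$ are polynomial-time equivalent (Lemma~\ref{lemma:pap-iff-dap}), and then solves $\dap$ for perceptrons by exactly the min/max closed form you wrote. Your direct analysis of $\pap$ skips that detour but arrives at the same computation; the per-feature sets $A_i$ you build play the role of the $\diamondsuit$-coordinates in the paper's encoding.

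There is, however, a genuine gap between what you prove and what the corollary asserts. The corollary claims tractability for \emph{every} query in $\efoilp$ and $\afoilp$, not only for $\pap$ and $\pan$. Your proof stops after handling these two specific formulas, and your opening sentence (``turn each existential-over-partial-instances query into a coordinatewise optimization'') does not describe a method for an arbitrary $\efoilp$ formula, which may combine several quantified variables, nested subsumptions, and Boolean structure in ways that do not obviously reduce to a single per-coordinate greedy pass. The missing step is precisely Theorem~\ref{theo:pap-pan}: tractability of $\pap$ and $\pan$ over a class $\C$ is \emph{equivalent} to tractability of every $\efoilp$ query over $\C$. You must invoke it. For $\afoilp$, observe that the negation of an $\afoilp$ query lies in $\efoilp$, so tractability transfers. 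As written, your argument proves Proposition~\ref{prop:pap-pan-ptron} but not the corollary.
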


In fact, 
a
more general corollary holds: 
$\logicEvaluation(\varphi ,\perceptrons)$  is tractable as long as $\varphi$ is a {\em Boolean combination} of queries in $\efoil^+$ (which covers the case of $\afoil^+$).
Unfortunately, these queries turn out to be intractable over decision trees and OBDDs.

\begin{proposition}
Let $\C$ be $\obdd$ or $\dt$. The problems
 $\logicEvaluation(\pap,\C)$ and
$\logicEvaluation(\pan,\C)$ are $\np$-hard.
\label{prop:hardness-dt-ef}
\end{proposition}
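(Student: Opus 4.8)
The plan is to reduce from $3$-SAT. Given a formula $\phi = C_1 \wedge \cdots \wedge C_m$ over propositional variables $p_1, \ldots, p_k$, I would build a model $\M$ of dimension $n = k + b$ with $b = \lceil \log_2 m \rceil$: features $1, \ldots, k$ encode an assignment $\alpha$ to the $p_i$, and features $k+1, \ldots, k+b$ encode a ``clause selector'' $\beta$. The model $\M$ first reads $\beta$ to decode a clause index $j$, then inspects the (at most three) variables occurring in $C_j$ and outputs $\true$ iff $C_j$ is satisfied by $\alpha$; for indices $j \geq m$ that name no clause it outputs $\true$. As a decision tree this is a complete binary tree of depth $b$ over the selector features (of size $O(m)$) with a constant-size clause-checking gadget hung under each leaf, so $\M \in \dt$ has size $O(m)$. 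By construction, for a fixed assignment $\alpha$ we have $\M(\alpha, \beta) = 1$ for \emph{all} $\beta$ iff every clause of $\phi$ is satisfied by $\alpha$.

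The second ingredient is to pick the parameters so that $\pap$ is forced to search exactly over cubes of this shape. I would take $x = \bot^n$; let $y$ be $0$ on the selector features $k+1, \ldots, k+b$ and $\bot$ elsewhere; and let $z$ be $1$ on the selector features and $\bot$ elsewhere. Recall that $\exists v\,(y \subseteq v \wedge u \subseteq v)$ holds iff $u$ and $y$ admit a common refinement, i.e. agree on every commonly-defined feature, and likewise for $z$. Since $y$ and $z$ disagree on every selector feature, any $u$ compatible with both must satisfy $u[i] = \bot$ on the whole selector block, while the assignment features are left unconstrained (as $x,y,z$ are all $\bot$ there). Hence $\pap(x,y,z)$ holds iff there is a partial instance $u$ that is $\bot$ on the selector block and satisfies $\allpos(u)$.

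For correctness I would argue both directions. If $\alpha^\ast \models \phi$, take $u$ equal to $\alpha^\ast$ on the assignment features and $\bot$ on the selector block; its completions are exactly the instances $(\alpha^\ast, \beta)$, all positive, so $\allpos(u)$ holds. Conversely, from any witness $u$ I would refine $u$ by fixing every still-undefined assignment feature (say to $0$); this only shrinks the set of completions, and $\allpos$ is monotone under refinement, yielding a $u'$ that fixes a full assignment $\alpha^\ast$ on the assignment block, is $\bot$ on the selector block, and satisfies $\allpos(u')$. Then $\M(\alpha^\ast, \beta) = 1$ for all $\beta$, so $\alpha^\ast \models \phi$. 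This gives $\np$-hardness of $\logicEvaluation(\pap, \dt)$; evaluating $\allneg$ in place of $\allpos$ is handled identically after swapping the $\true$/$\false$ leaves of $\M$, giving $\logicEvaluation(\pan, \dt)$.

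Finally, I would adapt the construction to $\obdd$ by fixing the order that reads all selector features first and then $p_1 < \cdots < p_k$. The selector block is decoded by an OBDD of size $O(m)$ reaching one of $m$ states, one per clause; from each state the diagram passes through $p_1, \ldots, p_k$ in order, branching only at the (at most three) variables of that clause while tracking the constant-size ``is the clause satisfied yet'' progress. Thus each level $p_i$ carries $O(m)$ live nodes, giving an OBDD of size $O(mk)$, and the same parameter gadget and correctness argument (with the leaf swap for $\pan$) apply verbatim. I expect the main obstacle to be precisely this OBDD step — keeping the ordered diagram polynomial in width while still checking an arbitrary clause selected at run time — whereas the decision-tree gadget and the $x,y,z$ parameter encoding are comparatively direct.
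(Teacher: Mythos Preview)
Your proposal is correct and follows essentially the same reduction from $3$-SAT as the paper: selector features pick a clause, assignment features encode a truth assignment, and the question ``is there an assignment-only cube all of whose completions are positive'' is exactly satisfiability.

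Two differences are worth noting. First, the paper does not encode the $\bot$-constraint on the selector block through the $y,z$ parameters of $\pap$ directly; instead it proves a standalone lemma that $\logicEvaluation(\pap,\C)$ is polynomially equivalent to the problem $\dap(\C)$ (given an instance over $\{0,1,\bot,\unknown\}$, decide whether the $\unknown$-positions can be fixed so that $\allpos$ holds), and then reduces $3$-SAT to $\dap$. Your direct encoding via $y,z$ is perfectly fine and arguably more self-contained; the paper's detour buys a reusable characterization. Second, you split into separate $\dt$ and $\obdd$ constructions, but this is unnecessary: if you order the clause-checking gadgets to read their three variables in the global order $p_1<\cdots<p_k$, your decision tree is already ordered under the order ``selector block first, then $p_1<\cdots<p_k$'', hence lies in $\dt\cap\obdd$ and settles both cases at once with size $O(m)$ rather than $O(mk)$. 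The paper takes exactly this shortcut.
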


\subsection{A structural restriction ensuring tractability}

We now look into the other direction suggested before, and identify a structural restriction on 
OBDDs that ensures tractability of 
evaluation for each query in $\foil$. This 
restriction is based on the usual notion of {\em width} of an OBDD
\cite{Bollig14,capelli_et_al:LIPIcs:2019:10257}. 
An OBDD $\M$ over a set $\{1,\dots,n\}$ of features 
is {\em complete} if each path from the root of $\M$ to one of its leaves includes every feature in $\{1,\dots,n\}$. The {\em width} of $\M$, denoted by $\width(\M)$, is defined as the maximum value $n_i$ for $i \in \{1, \ldots, n\}$, where $n_i$ is the number of nodes of $\M$ labeled by feature $i$.
Then, given $k \geq 1$, 
$\kcobdd$ is defined as
the class of complete OBDDs $\M$ such that $\width(\M) \leq k$. By building on techniques from \cite{capelli_et_al:LIPIcs:2019:10257}, we prove that:
\begin{theorem}\label{theo:kcobdd}
Let $k \geq 1$ and query $\varphi$ in $\foil$. Then 
$\logicEvaluation(\varphi,\kcobdd)$ 
can be solved in polynomial~time.
\end{theorem}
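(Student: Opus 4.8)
The plan is to prove this by an automata-theoretic dynamic program that walks through the bounded-width OBDD layer by layer, in the spirit of the composition (Feferman--Vaught) method and of the tractable manipulations of bounded-width complete OBDDs from \cite{capelli_et_al:LIPIcs:2019:10257}. First I would normalize: assuming without loss of generality that the variable order of $\M$ is $1 < 2 < \cdots < n$, completeness guarantees a clean layered shape --- every root-to-leaf path reads features $1, \ldots, n$ in this order, layer $i$ consists of the at most $k$ nodes labeled $i$, and the node reached after reading the first $i$ features of a full instance is well defined. Thus $\M$ is a width-$k$ layered (nonuniform) automaton whose acceptance of a full instance $\es$ coincides with $\M(\es) = 1$; it is exactly completeness that buys us this per-layer regularity, which is why the restriction appears in the statement. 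For a set $V$ of variables I would then identify a tuple $(\es_v)_{v \in V}$ of partial instances of dimension $n$ with a word of length $n$ over the finite alphabet $\Sigma_V = \{0,1,\bot\}^V$, whose $i$-th letter records the $i$-th feature of each $\es_v$. Under this encoding $\es_v \subseteq \es_w$ and $\es_v = \es_w$ become conjunctions of purely local, letter-by-letter constraints, while $\pos(\es_v)$ asserts that $\es_v$ is full and that $\M$, run on the $v$-track of the word, ends in a $\true$ leaf.

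\textbf{Key lemma.} The technical core is to show, by induction on the structure of a $\foil$ formula $\theta$ with free variables $V$, that its set of satisfying assignments (viewed as words in $\Sigma_V^{\,n}$) is recognized by a deterministic layered automaton $A_\theta$ whose number of states per layer is bounded by a constant depending only on $\theta$ and $k$ (never on $n$), and whose layer-$i$ transitions are computable in polynomial time from the layer-$i$ structure of $\M$. For atoms this is immediate: $\pos(v)$ tracks the current $\M$-node reached along the $v$-track (one of $\le k$ possibilities) and checks fullness and the final leaf, while $v \subseteq w$ and $v = w$ reject on the first offending letter. Boolean connectives are handled by the product construction and by swapping accepting status, which keeps the per-layer state count constant because a fixed $\theta$ has only constantly many atoms. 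The interesting cases are the quantifiers: $\exists v\, \theta'$ is obtained by projecting the $v$-coordinate out of the alphabet, yielding a nondeterministic layered automaton that I then re-determinize by a layer-wise subset construction; since the per-layer state space is constant, its powerset is again constant, restoring determinism and bounded width. Universal quantification is dualized as $\neg \exists v\, \neg(\cdot)$, using complementation of the deterministic layered automaton.

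\textbf{From the lemma to the theorem.} Because $\varphi$ is fixed, the induction has constant depth and produces an automaton $A_\varphi$ of constant per-layer size. Given an input $\M \in \kcobdd$ together with the parameter instances $\es_1, \ldots, \es_k$, I would fix the parameter tracks of the alphabet to the concrete letters dictated by the $\es_i$ at each position, leaving a layered automaton over the already quantified-away coordinates, and run it across the $n$ layers of $\M$, reading each layer's transitions off the OBDD in polynomial time. The run accepts if and only if $\M \models \varphi(\es_1, \ldots, \es_k)$, and it takes time $n$ times a constant, hence $\logicEvaluation(\varphi, \kcobdd)$ is solvable in polynomial time.

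\textbf{Main obstacle.} I expect the crux to be controlling the automata under quantification in the nonuniform setting: making the layer-wise determinization after existential projection precise, verifying that complementation for the universal case is legitimate for these position-dependent layered automata, and bounding the accumulated state blowup so that it remains a function of $\varphi$ and $k$ alone and never of $n$. Confirming that all the closure operations (product, projection, complement) simultaneously preserve determinism and the property that transitions are directly readable from $\M$, while carrying the OBDD-node trackers correctly through each step, is the heart of the argument.
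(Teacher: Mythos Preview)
Your proposal is correct and follows essentially the same route as the paper: encode tuples of partial instances over a product alphabet indexed by the variables, build by induction on the formula a bounded-width layered representation of the satisfying set (product for Booleans, projection followed by determinization for $\exists$, complement for $\forall$), and observe that the per-layer width stays a function of $\varphi$ and $k$ alone. The paper packages the same algorithm in the language of complete ordered \emph{ternary} decision diagrams rather than layered automata, building an explicit COTDD over an interleaved variable order and invoking the Capelli--Mengel bound on $\exists_S$ (which is exactly your subset construction) instead of phrasing it as re-determinization; the resulting tower-of-exponentials width bound and the overall argument are identical.
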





\section{Practical Implementation}
\label{sec:implementation}
The \foil\ language has at least two downsides from a usability point of view.
First, in \foil\ every query is constructed using a minimal set of basic logical constructs.
Moreover, the variables 
in queries are instantiated by feature vectors that may have hundreds of components.
This implies that some simple queries may need fairly long and complicated \foil\ expressions.
Second, \foil\ is designed to only work over models with binary input features.
These downsides are a consequence of our design decisions that were reasonable for a detailed theoretical analysis but may hamper \foil\ usage in more general scenarios, in particular when models have (many) categorical or numerical input features.

In this section, we describe a simple high-level syntax and implementation of a more user-friendly language (\emph{à la} SQL) to query general decision trees, and we show how to compile it into \foil\ queries to be evaluated over a suitable binarization of the queried model.
As a whole, the pipeline requires several pieces that we explain in this section: (i) a working and efficient query-evaluation implementation of a fragment of \foil\  over a suitable sub-class of Binary Decision Diagrams (BDDs), (ii) a transformation from the high-level syntax to \foil\ queries, and (iii) a transformation from a general decision tree to a BDD 
over which the \foil\ query can be efficiently evaluated.
%
We only present here the main ideas and intuitions of the implemented methods. 
A detailed exposition along with our implementation and a set of real examples can be found in the supplementary material.


\subsection{Implementing and testing core \foil}
\label{sec:core-implementation}

We implemented 
a version of the algorithm derived from Section \ref{sec:tractable_foil} for evaluating existential and universal \foil\ queries that is proven to work over a suitable sub-class of
BDDs. 
The method receives a query as a plain text file and a BDD in JSON format.
We tested the efficiency of our implementation varying three different parameters: the number of input features, the number of leaves of the decision tree, and the size of the input queries. 
We created a set of trees trained with random input data with input feature dimensions in the range $[10,350]$, and of $100$, $500$ and $1000$ leaves ($24$ different decision trees).
We note that the best performing decision trees over standard datasets~\cite{OpenML} rarely contain more than $1000$ total nodes~\cite{DTs}, thus the trees that we tested can be considered of standard size. 
We created a set of random queries with $1$ to $4$ quantified variables, and a varying number of operators ($60$ different queries).
We run every query $5$ times over each tree, and averaged the execution time to obtain the running time of one case.
From all our tests no case required more than $2.5$ seconds for its complete evaluation with a total average execution time of $0.213$ seconds and standard deviation of $0.169$ in the whole dataset. 
Figure~\ref{fig:avg_time} shows the average time (average over different queries) for all settings. 
We observed that some queries where specially more time consuming than others.
Figure~\ref{fig:max_time} shows the maximum execution time over all queries for each setting.
The most important factor when evaluating queries is the number of input features, which is consistent with a theoretical worst case analysis. All experiments where run on a personal computer with a 2.48GHz Intel N3060 processor and 2GB RAM. The exact details of the machine are presented in the supplementary material.
\begin{figure}
    \begin{subfigure}{0.3\textwidth}
    \includegraphics[width=0.95\linewidth]{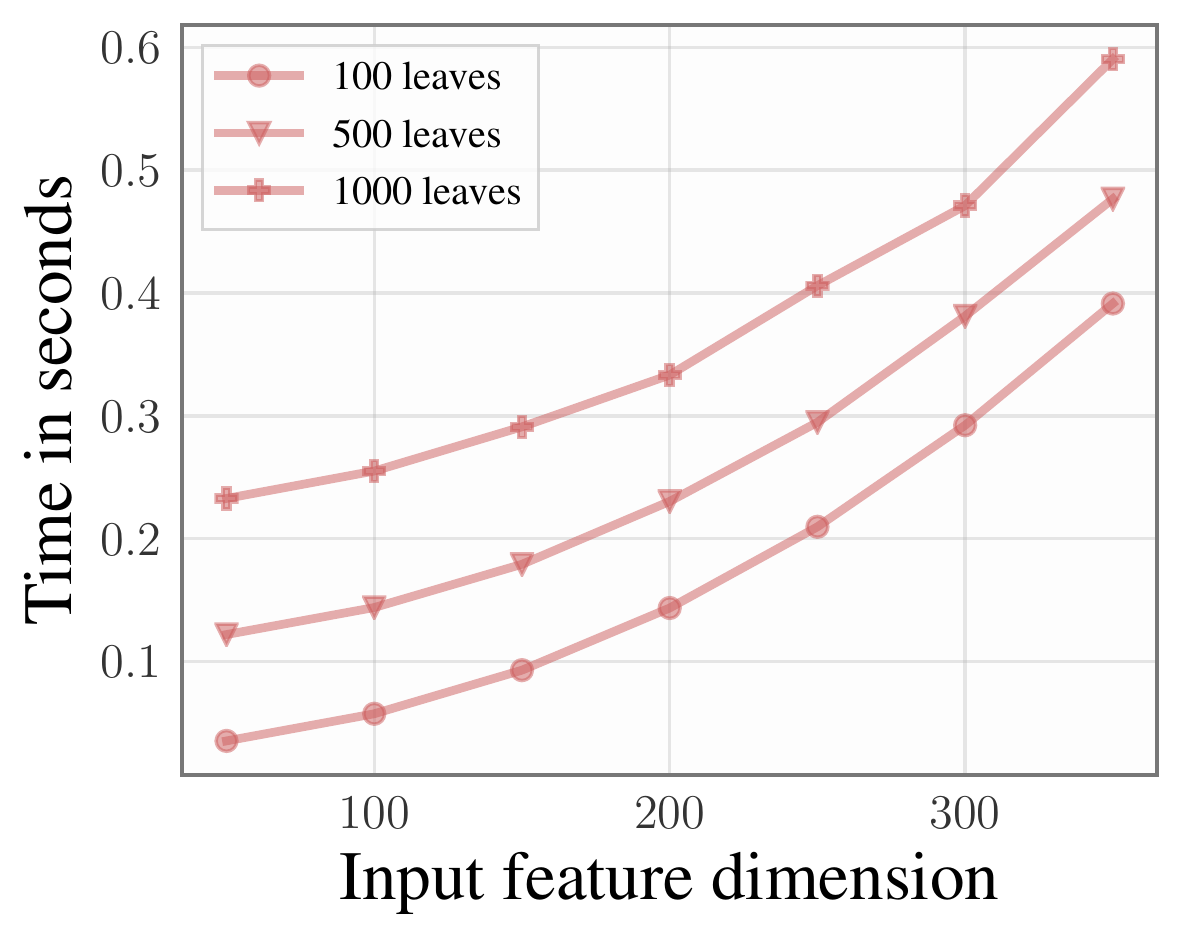}
    \caption{Average time for 60 random \foil\ queries over Decision Trees trained with random data.}
    \label{fig:avg_time}
    \end{subfigure}
    \hspace*{5pt}
    \begin{subfigure}{0.3\textwidth}
    \includegraphics[width=0.95\linewidth]{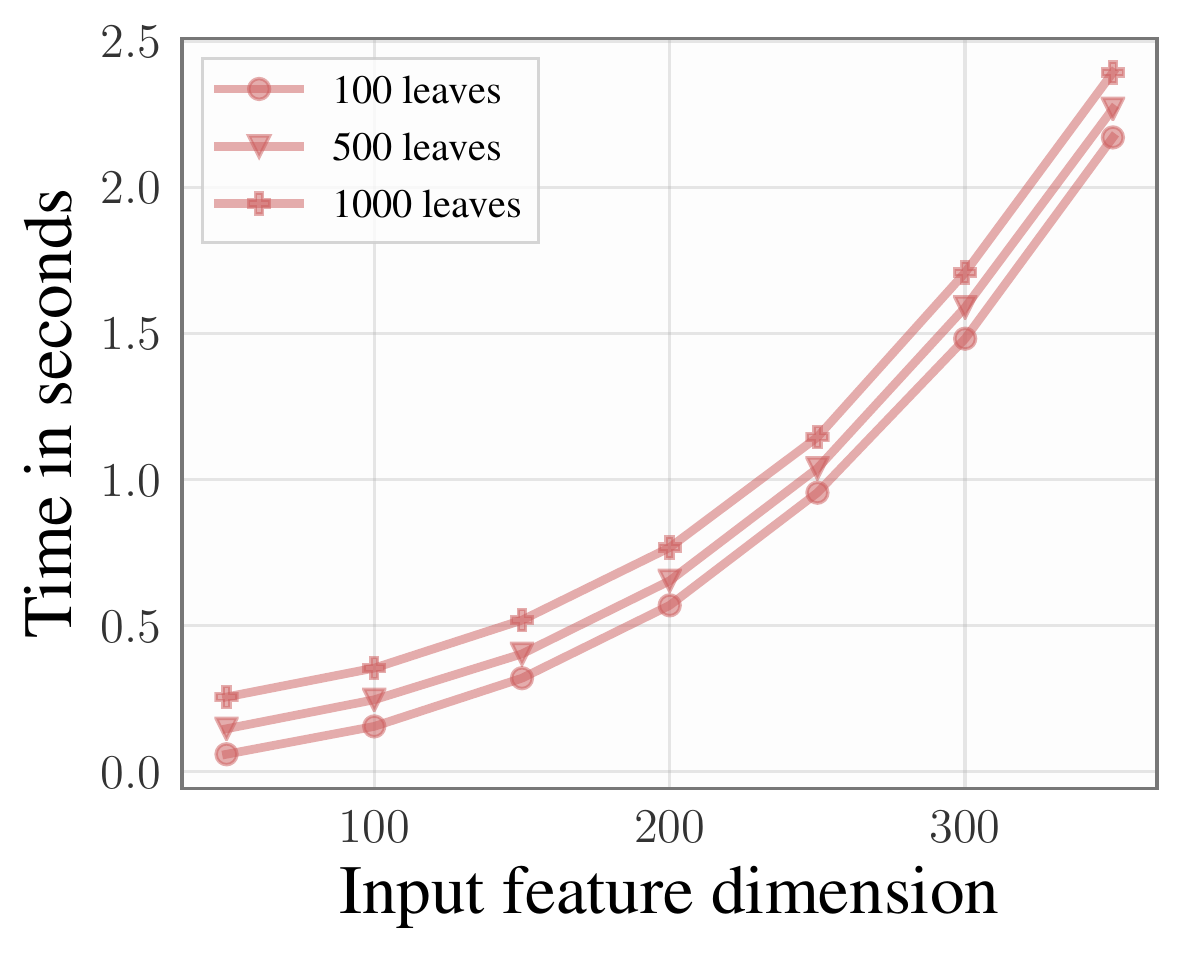}
    \caption{Maximum time for 60 random \foil\ queries over Decision Trees trained with random data.}
    \label{fig:max_time}
    \end{subfigure}
    \hspace*{5pt}
    \begin{subfigure}{0.3\textwidth}
    \small
    \begin{minted}{SQL}
    
> exists student, 
    student.age <= 18 and 
    (student.internetAtHome or
     student.male) and
    goodGrades(student)
    \end{minted}
    \caption{Example of a query in our system executed over a model trained in the dataset in~\cite{student}.}
    \label{fig:real_time}
    \end{subfigure}
    \caption{Execution time for \foil\ queries and a high-level practical syntax.}
    \label{fig:time}
    \vspace{-1mm}
\end{figure}

\subsection{Interpretability symbolic queries in practice}
\label{sec:user}


{\bf High-level features.}
We designed and implemented a prototype system for user-friendly interpretability queries.
Figure~\ref{fig:real_time} shows a real example query that can be posed in our system for a model trained over the \emph{Student Performance Data Set}~\cite{student}.
Notice that our syntax allow named features, names for the target class (\texttt{goodGrades} in the example) and the comparison with numerical thresholds which goes beyond the \foil\ formalization.
Our current implementation allows for numerical and logical comparisons, as well as handy logical shortcuts such as \texttt{implies} and \texttt{iff}. 
Moreover we implemented a wrapper to directly import Decision Trees trained in the Scikit-learn~\cite{scikit-learn} library.
{\bf Binarizing models and queries.}
One of the main issues when compiling these new queries into \foil\ is how to binarize numerical features.
Choi et al.~\cite{DBLP:journals/corr/abs-2007-01493} 
describe in extensive detail an approach to encode general decision trees into binary ones.
The key observation is that one can separate numerical values into equivalence classes depending on the thresholds used by a decision tree.
For example, assume a tree with an \emph{age} feature that learns nodes with thresholds $\text{age} \leq 16$ and $\text{age} \leq 24$.
It is clear that such a tree cannot distinguish an $\text{age}=17$ from an $\text{age}=19$. 
In general, every tree induces a finite number of equivalence classes for each numerical feature and one can take advantage of that to produce a binary version of the tree~\cite{DBLP:journals/corr/abs-2007-01493}.
In our case, we also need to take the query into account.
For instance, when evaluating a query with a condition \texttt{student.age <= 18}, ages $17$ and $19$ become distinguishable.
Considering all  these thresholds we have intervals $(- \infty, 16]$, $(16, 18]$, $(18,24]$, $(24, \infty)$ and we can use four binary features to encode in which interval an age value lies.
It is worth noting that this process creates extra artificial features, and thus, the decision tree that learned real thresholds needs to be binarized in the new feature space accordingly.
One can show that a naive implementation would imply an exponential blow up in the size of the new tree.
To avoid this our binarization process transforms the real-valued decision tree into a binary FBDD, over which we prove that our polynomial algorithms from Section~\ref{sec:tractable_foil} are still applicable.

{\bf Performance tests.}
We tested a set of $20$ handcrafted queries over decision trees with up to 400 leaves trained for the Student Performance Data Set~\cite{student}, which combines Boolean and numerical features.  
Our results show that natural queries can be evaluated over decision trees of standard size~\cite{DTs} in less than a second on a standard personal machine,
thus validating the practical usability of our prototype.

\section{Final Remarks and Future Work}
\label{sec:final}
In several aspects the logic $\foil$ is limited in expressive power for interpretability purposes. This was a design decision for this paper, 
in order to start with a ``minimal'' logic that would allow highlighting the benefits of having a declarative language for interpretability tasks, and at the same time allowing to carry out a clean theoretical analysis of its evaluation complexity. However, a genuinely practical declarative language should include other functionalities 
that allow more sophisticated queries to be expressed. As an example, consider the notion of SHAP-score \cite{LundbergL17} that has a predominant place in the literature on interpretability issues today. In a nutshell, for a decision model $\M$ with $\dim(\M) = n$ 
and instance $\es \in \{0,1\}^n$, this score corresponds to a weighted sum of expressions of the form $\# \pos_S(\es)$, for $S \subseteq \{1,\dots,n\}$, where $\# \pos_S(\es)$ is the number of instances $\es'$ for which $\M(\es') = 1$ and $\es'$ coincides with $\es$ over all features in $S$. Expressing this query, hence, requires extending $\foil$ with a recursive mechanism that permits to iterate over the subsets $S$ of $\{1,\dots,n\}$, and a feature for counting the number of positive completions of a partial instance; e.g., in the form of a ``numerical'' query $\phi(x) := 
\# y. (x \subseteq y \wedge \pos(y))$. Logics of this kind abound in computer science logic (c.f., \cite{Libkin04,arenasCounting}), and one could use all this knowledge in order to build a suitable extension of $\foil$ for dealing with
this kind of interpretability tasks.    
One can also envision a language facilitating the comparison of different models by providing separate $\pos$ predicates for each of them. Then, for example, one can ask whether two models are equivalent, or if they differ for a particular kind of instances. 
Such an extension can affect the complexity of evaluation in nontrivial ways. 


Arguably, interpretability measures the degree in which \emph{humans} can understand decisions made by \emph{machines}.
One of our main calls in this paper is to build more \emph{symbolic} interpretability tools, and thus, make them closer to how humans reason about facts and situations.
Having a symbolic high-level interpretability language to inspect ML models and their decisions is thus a natural and challenging way of pursuing this goal.
We took a step further in this paper presenting theoretical and practical results, but several problems remain open.
A particularly interesting one is whether a logical language can effectively interact with intrinsically non-symbolic models, and if so, what mechanisms could allow for practical tractability without sacrificing provable correctness.


\begin{acksection}
This work was partially funded by ANID
- Millennium Science Initiative Program - Code ICN17\_002. Arenas is funded by Fondecyt grant
1191337, while Barcel\'o and P\'erez are funded by Fondecyt grant 1200967.
\end{acksection}

\bibliographystyle{abbrv}
\bibliography{main}

\newpage

\appendix
\begin{center}
   {\huge \textbf{Appendix}}
\end{center} 

\paragraph{Organization}

The supplementary material is organized as follows: Section~\ref{sec:review_fo} presents a brief review of the concepts concerning first-order logic that are used in our work. Section~\ref{sec:evaluation} presents a proof of Theorem~\ref{theo:main-dt}, our negative result concerning decision trees and OBDDs, while Section~\ref{sec:tractable-efoil} is devoted to our positive result. Section~\ref{sec:proof-full-efoil} proves that $\efoilp$ is strictly more expressive than $\efoil$, justifying its independent study. Then, Section~\ref{sec:tractable-efoilp} is devoted to the tractability of $\efoil$ and $\efoilp$; it includes proofs both for Theorem~\ref{theo:pap-pan} and Proposition~\ref{prop:pap-pan-ptron}, which together imply the tractability of $\efoilp$ for perceptrons.  Next, Section~\ref{sec:tractable-kcobdd} presents a proof of Theorem~\ref{theo:kcobdd}, implying the full tractability of $\foil$ for a restricted class of OBDDs.  Section~\ref{sec:supp-implementation} discusses details of the practical implementation, while Section~\ref{sec:experimentation} explains the the methodology of our experiments. Then, Section~\ref{sec:high-level} discusses details of the high-level version we implemented, and also presents several examples of queries for the \emph{Student Performance Data Set} which serve to show the usability of our implementation in practice. Finally Section~\ref{sec:binarization} explains the binarization process for real-valued decision trees and high-level queries.
A repository with code with our implementation for \foil\ and the high-level syntax as well as examples and scripts to replicate our experiments can be found at
\begin{center}
\url{https://github.com/AngrySeal/FOIL-Prototype}
\end{center}

\section{Syntax and semantics of first-order logic} 
\label{sec:review_fo}

We review the definition of first-order logic (\FO) over vocabularies consisting only of relations.

\paragraph{Syntax of \FO.} A {\em vocabulary} $\sigma$ is a finite set $\{R_1,\dots,R_m\}$, where each $R_i$ is a relation symbol with associated arity $n_i > 0$, for $i \in \{1, \ldots, m\}$.
We assume the existence of a countably infinite set of variables $\{x,y,z,\dots\}$, possibly with subscripts. 
The set 
of {\em \FO-formulas over $\sigma$} is inductively defined as follows. 
\begin{enumerate}
\item If $x,y$ are variables,
then $x = y$ is an $\FO$-formula over $\sigma$. 
\item If relation symbol $R \in \sigma$ has arity $n > 0$ and $x_1,\dots,x_n$ are 
variables, then $R(x_1,\dots,x_n)$ is an $\FO$-formula over $\sigma$.
\item If $\varphi,\psi$ are $\FO$-formulas over $\sigma$,
then $(\neg \varphi)$, $(\varphi \vee \psi)$, and $(\varphi \wedge \psi)$ are $\FO$-formulas over $\sigma$.
\item If $x$ is a variable and $\varphi$ is an $\FO$-formula over $\sigma$,
then $(\exists x \, \varphi)$ and $(\forall x \, \varphi)$ are $\FO$-formulas over $\sigma$.
\end{enumerate}
\FO-formulas of type (1) and (2) are called {\em atomic}. A variable $x$ in $\FO$-formula $\varphi$ appears {\em free}, if there is an occurrence of $x$ in $\varphi$ that is not in the scope of a quantifier 
$\exists x$ or $\forall x$. 
An {\em $\FO$-sentence}  is an $\FO$-formula without free variables. We often write $\varphi(x_1, \ldots, x_k)$ to denote that $\{x_1, \ldots, x_k\}$ is the set of free variables of 
$\varphi$. 

\paragraph{Semantics of \FO.} $\FO$-formulae over a vocabulary $\sigma$ are interpreted over {\em $\sigma$-structures}. Formally, a $\sigma$-structure is 
a tuple 
\begin{eqnarray*}
\astruct & = & \langle A,\,R_1^\astruct,\cdots,R_m^\astruct\rangle,
\end{eqnarray*}
where $A$ is the {\em domain} of $\astruct$, and for each relation symbol $R \in \sigma$ of arity $n$,
we have that $R^\astruct$ is an $n$-ary relation over $A$. 
We call $R_i^\astruct$ 
the {\em interpretation} of $R_i$ 
in $\astruct$. 

Let $\varphi$ be an $\FO$-formula over a vocabulary $\sigma$, and 
$\astruct$ a $\sigma$-structure.
Consider a mapping $\nu$ that associates an element in $A$ to each variable.
We formally define the {\em satisfaction of $\FO$-formula $\varphi$ over the pair $(\astruct,\nu)$}, denoted by $(\astruct,\nu) \models \varphi$, as follows.  
\begin{enumerate}
\item If $\varphi$ is an atomic formula of the form $x = y$, 
then 
$(\astruct,\nu) \models \varphi \, \Leftrightarrow \, \nu(t_1) = \nu(t_2)$. 

\item If $\varphi$ is an atomic formula of the form $R(x_1,\ldots,x_n)$ for some $R \in \sigma$, 
then 
$(\astruct,\nu) \models \varphi \, \Leftrightarrow \, (\nu(x_1),\dots,\nu(x_n)) \in R^\astruct$. 

\item If $\varphi$ is of the form $(\neg \psi)$, then 
$(\astruct,\nu) \models \varphi \, \Leftrightarrow \, (\astruct,\nu) \not\models \psi$.
 \item If $\varphi$ is of the form $(\psi \vee \psi')$, then 
$(\astruct,\nu) \models \varphi$ iff $(\astruct,\nu) \models \psi$ or $(\astruct,\nu) \models \psi'$.
 \item If $\varphi$ is of the form $(\psi \wedge \psi')$, then 
$(\astruct,\nu) \models \varphi$ iff $(\astruct,\nu) \models \psi$ and $(\astruct,\nu) \models \psi'$.
\item If $\varphi$ is of the form $(\exists x \, \psi)$, then $(\astruct,\nu) \models \varphi$ iff there exists $a \in A$ for which 
$(\astruct,\nu[x/a]) \models \psi$. Here, $\nu[x/a]$ is a mapping that takes the same value as $\nu$ 
on every variable $y \neq x$, and takes value $a$ on $x$. 
\item If $\varphi$ is of the form $(\forall x \, \psi)$, then $(\astruct,\nu) \models \varphi$ iff for every $a \in A$ we have that 
\mbox{$(\astruct,\nu[x/a]) \models \psi$.}
\end{enumerate}
For an \FO-formula $\varphi(x_1,\dots,x_k)$ and assignment $\nu$ such that $\nu(x_i) = a_i$, for each $i \in \{1, \ldots, k\}$,  
we  write $\astruct \models \varphi(a_1, \ldots, a_k)$
to denote that $(\astruct,\nu) \models \varphi$. 
If $\varphi$ is a sentence, we write simply $\astruct \models \varphi$, as for any pair of mappings $\nu_1$, $\nu_2$ for the variables, it holds that $(\astruct,\nu_1) \models \varphi$ iff
$(\astruct,\nu_2) \models \varphi$.

\section{Proof that the FULL predicate cannot be expressed in the existential fragment of FOIL}
\label{sec:proof-full-efoil}


This proof requires some background in model theory. Namely, it uses the following ideas:

\begin{itemize}
    \item Given a structure $\mathfrak{A}$ with domain $A$, a set $S \subseteq A$ induces a \emph{sub-structure} $\mathfrak{A}'$ such that the domain of $\astruct'$ is $A'$ and $R^{\astruct'} = R^{\astruct} \cap S^n$ for every relation $R \in \sigma$ of arity $n$.
    \item Let $\mathfrak{A}$, $\mathfrak{B}$ be two structures over a vocabulary $\sigma$ with domains $A$ and $B$, respectively.
    An \emph{isomorphism} $f: A \to B$ between $\mathfrak{A}$ and $\mathfrak{B}$ satisfies the following property for every FO-formula $\varphi$ over $\sigma$, and every mapping $\nu$:
    \[
        (\mathfrak{A}, \nu) \models \varphi \ \iff \ (\mathfrak{B}, f \circ \nu) \models \varphi
    \]
    where $(f \circ \nu)$ is a mapping that associates $f(\nu(x))$ to each variable $x$.
    \item If $\varphi(x_1,\ldots,x_k)$ is an existential FO-formula over a vocabulary $\sigma$, $\mathfrak{A}$ is a $\sigma$-structure with domain $A$, $\mathfrak{A}'$ is an induced sub-structure of $\mathfrak{A}$ with domain $A'$, and $a_1,\ldots,a_k \in A'$:
    \[
    \mathfrak{A}' \models \varphi(a_1, \ldots, a_k) \ \implies \  \mathfrak{A} \models \varphi(a_1, \ldots, a_k)
    \]
\end{itemize}

All these ideas are standard, and can be found for example in the reference book of Chang and Keisler~\cite{chang1990model}.

We now proceed with the actual proof. For the sake of contradiction, assume that $\full(x)$ can be expressed in $\efoil$. More precisely, assume that $\varphi(x)$ is a formula in $\efoil$ such that for every $n \geq 1$, every model $\M$ of dimension $n$, and every partial instance $\es$ of dimension $n$:
\begin{eqnarray}\label{eq-n-full-efoil}
    \astruct_\M \models \varphi(\es) & \text{ if and only if} & \es \text{ is an instance}.
\end{eqnarray}
Let $\M_1$ be a model of dimension 1 such that $\M_1(\es) = 0$ for every instance $\es$. Then we have that:
\begin{eqnarray*}
\astruct_{\M_1} &=& \langle \{\bot,0,1\}, \pos^{\astruct_{\M_1}}, \subseteq^{\astruct_{\M_1}} \rangle,
\end{eqnarray*}
where $\pos^{\astruct_{\M_1}} = \emptyset$. Moreover, given condition \ref{eq-n-full-efoil}, we also know that $\astruct_{\M_1} \models \varphi((0))$. Let $\M_2$ be a model of dimension 2 such that $\M_2(\es) = 0$ for every instance $\es$. Then we have that:
\begin{eqnarray*}
\astruct_{\M_2} &=& \langle \{\bot,0,1\}^2, \pos^{\astruct_{\M_2}}, \subseteq^{\astruct_{\M_2}} \rangle,
\end{eqnarray*}
where $\pos^{\astruct_{\M_2}} = \emptyset$. Moreover, let $\astruct'$ be the sub-structure of $\astruct_{\M_2}$ induced by the set of instances $\{(\bot,\bot), (0,\bot), (1,\bot)\}$. Then we have that function 
$f : \{(\bot), (0), (1)\} \to \{(\bot,\bot), (0,\bot), (1,\bot)\}$ defined as $f((x)) = (x, \bot)$ is an isomorphism from $\astruct_{\M_1}$ to $\astruct'$ such that $f((0)) = (0,\bot)$. Hence, given that $\varphi(x)$ is a formula in first-order logic and $\astruct_{\M_1} \models \varphi((0))$, we conclude that $\astruct' \models \varphi((0,\bot))$. Moreover, given that $\astruct'$ is an induced sub-structure of $\astruct_{\M_2}$ and $\varphi(x)$ is an existential formula in first-order logic, we have that $\astruct_{\M_2} \models \varphi((0,\bot))$. Notice that this contradicts condition \ref{eq-n-full-efoil}, as $(0,\bot)$ is not an instance.

\section{Proof of Theorem \ref{theo:main-dt}}
\label{sec:evaluation}



Let us restate the theorem for the reader's convenience.
\begin{thmbis}{theo:main-dt}
There exists a formula $\psi(x)$ in $\foil$ for which \logicEvaluation$(\psi(x), \dt)$ and 
\logicEvaluation$(\psi(x), \obdd)$ are $\np$-hard. 
\end{thmbis}
\begin{proof}
We show that the problem is $\np$-hard by reducing from the
satisfiability problem for propositional formulas in 3-CNF. We will in fact show that hardness holds already for the class $\dt \cap \obdd$, which proves both cases a once. Let
$\varphi = C_1 \wedge \cdots \wedge C_n$ be a propositional formula,
where each $C_i$ is a disjunction of three literal and does
not contain repeated or complementary literals. Moreover, assume that
$\{x_1, \ldots, x_m\}$ is the set of variables occurring in $\varphi$, and the proof will use partial instances of dimension $n+m$. Notice that the last $m$ features of such a partial instance $\es$ naturally define a truth assignment for the propositional formula $\varphi$. More precisely, for every $i \in \{1, \ldots, n\}$, we use notation $\es(C_i) = 1$ to indicate that there is a disjunct $\ell$ of $C_i$ such that $\ell = x_j$ and $\es[n+j] = 1$, or $\ell = \neg x_j$ and $\es[n+j] = 0$, for
some $j \in \{1,\ldots,m\}$. Furthermore, we say $\es(\varphi) = 1$ if $\es(C_i) = 1$ for every $i \in \{1, \ldots, n\}$.

We will build an \emph{ordered decision tree} (thus belonging to $\dt \cap \obdd)$, over the natural ordering $1 < 2 < \cdots < n+m-1 < n+m$. Let us denote this ordering with ${\prec}$ in order to avoid confusion.  For each clause $C_i$ ($i \in \{1, \ldots, n\}$), let  $\T_{C_i}$ be
a decision tree of dimension $n+m$ (but that will only use features $n+1, \ldots, n+m$) such that for every entity $\es$:
$\T_{C_i}(\es) = 1$ if and only $\es(C_i) = 1$. Moreover, we require each $\T_{C_i}$ to be ordered with respect to ${\prec}$, Notice that  $\T_{C_i}$
can be constructed in constant time as it only needs to contain at most
eight paths of depth 3. For example, assuming that $C = (x_1 \vee x_2 \vee x_3)$,
a possible decision tree $\T_C$ is depicted in the following figure:
\begin{center}
\begin{tikzpicture}[every node/.style={font=\footnotesize, scale=0.8}]
\node[circle,draw=black] (n) {$n+1$};
\node[circle,draw=black,below left = 5mm and 28mm of n] (n0) {$n+2$};
\node[circle,draw=black,below right = 5mm and 28mm of n] (n1) {$n+2$};
\node[circle,draw=black,below left = 5mm and 10mm of n0] (n00) {$n+3$};
\node[circle,draw=black,below right = 5mm and 10mm of n0] (n01) {$n+3$};
\node[circle,draw=black,below left = 5mm and 10mm of n1] (n10) {$n+3$};
\node[circle,draw=black,below right = 5mm and 10mm of n1] (n11) {$n+3$};
\node[circle,draw=black,below left = 5mm and 5mm of n00] (n000) {$\false$};
\node[circle,draw=black,below right = 5mm and 5mm of n00] (n001) {$\true$};
\node[circle,draw=black,below left = 5mm and 5mm of n01] (n010) {$\true$};
\node[circle,draw=black,below right = 5mm and 5mm of n01] (n011) {$\true$};
\node[circle,draw=black,below left = 5mm and 5mm of n10] (n100) {$\true$};
\node[circle,draw=black,below right = 5mm and 5mm of n10] (n101) {$\true$};
\node[circle,draw=black,below left = 5mm and 5mm of n11] (n110) {$\true$};
\node[circle,draw=black,below right = 5mm and 5mm of n11] (n111) {$\true$};

\path[arrout] (n) edge node[above] {$0$} (n0);
\path[arrout] (n) edge node[above] {$1$} (n1);
\path[arrout] (n0) edge node[above] {$0$} (n00);
\path[arrout] (n0) edge node[above] {$1$} (n01);
\path[arrout] (n1) edge node[above] {$0$} (n10);
\path[arrout] (n1) edge node[above] {$1$} (n11);
\path[arrout] (n00) edge node[above] {$0$} (n000);
\path[arrout] (n00) edge node[above] {$1$} (n001);
\path[arrout] (n01) edge node[above] {$0$} (n010);
\path[arrout] (n01) edge node[above] {$1$} (n011);
\path[arrout] (n10) edge node[above] {$0$} (n100);
\path[arrout] (n10) edge node[above] {$1$} (n101);
\path[arrout] (n11) edge node[above] {$0$} (n110);
\path[arrout] (n11) edge node[above] {$1$} (n111);
\end{tikzpicture}
\end{center} 
Moreover, define $\T_\varphi$ as the following decision tree, clearly ordered with respect to ${\prec}$:
\begin{center}
\begin{tikzpicture} 
\node[circle,draw=black] (c1) {$1$};
\node[below left = 6mm and 6mm of c1] (tc1) {$\T_{C_1}$};
\node[circle,draw=black,below right = 6mm and 6mm of c1] (c2) {$2$};
\node[below left = 6mm and 6mm of c2] (tc2) {$\T_{C_2}$};
\node[circle,draw=black,below right = 6mm and 6mm of c2] (c3) {$3$};
\node[below left = 6mm and 6mm of c3] (tc3) {$\T_{C_3}$};
\node[below right = 6mm and 6mm of c3] (d) {$\cdots$};
\node[circle,draw=black,below right = 6mm and 6mm of d] (cn) {$n$};
\node[below left = 6mm and 6mm of cn] (tcn) {$\T_{C_n}$};
\node[circle,draw=black,below right = 6mm and 6mm of cn, minimum size=8mm] (o) {$\true$};

\path[arrout] (c1) edge node[above] {$0$} (tc1);
\path[arrout] (c1) edge node[above] {$1$} (c2);
\path[arrout] (c2) edge node[above] {$0$} (tc2);
\path[arrout] (c2) edge node[above] {$1$} (c3);
\path[arrout] (c3) edge node[above] {$0$} (tc3);
\path[arrout] (c3) edge node[above] {$1$} (d);
\path[arrout] (d) edge node[above] {$1$} (cn);
\path[arrout] (cn) edge node[above] {$0$} (tcn);
\path[arrout] (cn) edge node[above] {$1$} (o);
\end{tikzpicture}
\end{center} 

Finally, define $\es$ as a partial instance of dimension $n+m$ such that $\es[i] = 1$ for every $i \in \{1,\ldots,n\}$, and $\es[n+j] = \bot$ for every $j \in \{1, \ldots, m\}$, and define $\psi(x)$ as the following formula in \foil~(equivalent to the formula presented in the body of the paper):
\begin{multline}\label{eq-def-fo2}
\psi(x) \ =\ \exists y \, (x \subseteq y \wedge \full(y) \wedge
\forall z \, ((z \subseteq y \wedge \neg y \subseteq z)\ \rightarrow\\
\exists u \, (z \subseteq u \wedge \neg u \subseteq z \wedge \neg \full(u)) \vee
\forall v \, ((z \subseteq v \wedge \neg v \subseteq z) \rightarrow \pos(v)))).
\end{multline}
Interestingly, $\psi(x)$ can be rewritten by using only two variables, which proves that an even more restricted fragment of $\foil$ is hard to evaluate. The following two-variable formula is equivalent to $\psi(x)$:
\begin{multline*}
\exists y \, (x \subseteq y \wedge \full(y) \wedge
\forall x \, ((x \subseteq y \wedge \neg y \subseteq x)\ \rightarrow\\
\exists y \, (x \subseteq y \wedge \neg y \subseteq x \wedge \neg \full(y)) \vee
\forall y \, ((x \subseteq y \wedge \neg y \subseteq x) \rightarrow \pos(y)))).
\end{multline*}

In what follows, we prove that $\varphi$ is
satisfiable if and only if $\astruct_{\T_\varphi} \models \psi(\es)$,
from which we conclude that the theorem holds.
\begin{itemize}
\item[$(\Leftarrow)$] Assume that $\astruct_{\T_\varphi} \models \psi(\es)$, and assume that $\es_1$ is a witness for the variable $y$, that is, 
\begin{multline*}
\astruct_{\T_\varphi} \ \models \ \es \subseteq \es_1 \wedge \full(\es_1) \wedge
\forall z \, ((z \subseteq \es_1 \wedge \neg \es_1 \subseteq z) \ \rightarrow\\
\exists u \, (z \subseteq u \wedge \neg u \subseteq z \wedge \neg \full(u)) \vee
\forall v \, ((z \subseteq v \wedge \neg v \subseteq z) \rightarrow \pos(v))).
\end{multline*}
In what follows, we show that $\es_1(\varphi) =1$, from which we
conclude that $\varphi$ is satisfiable. Fix an arbitrary $i \in \{1, \ldots, n\}$.
Then, let $\es_2$ be a partial instance of dimension $n+m$ such that (i) $\es_2[i] = \bot$; (ii)
$\es_2[j] = 1$ for each $j \in \{1, \ldots, n\}$ with $j \neq i$, and (iii) $\es_2[n+k] = \es_1[n+k]$ for each $k \in \{1, \ldots, m\}$. Then given that $(\es_2 \subseteq \es_1 \wedge \neg \es_1 \subseteq \es_2)$, we have that:
\begin{multline*}
\astruct_{\T_\varphi} \ \models \ 
\exists u \, (\es_2 \subseteq u \wedge \neg u \subseteq \es_2 \wedge \neg \full(u)) \ \vee \\
\forall v \, ((\es_2 \subseteq v \wedge \neg v \subseteq \es_2) \rightarrow \pos(v)).
\end{multline*}
Therefore, given that $\es_2$ assigns value $\bot$ to exactly one feature, we conclude that:
\begin{eqnarray}\label{eq-es2}
\astruct_{\T_\varphi} & \models &
\forall v \, ((\es_2 \subseteq v \wedge \neg v \subseteq \es_2) \rightarrow \pos(v)).
\end{eqnarray}
Define $\es_3$ as an instance of dimension $n+m$ such that $\es_3[i] = 0$,
$\es_3[j] = 1$ for each $j \in \{1, \ldots, n\}$ with $j \neq i$,
and $\es_3[n+k] = \es_2[n+k]$ for each $k \in \{1, \ldots, m\}$.
Then, by considering that
$(\es_2 \subseteq \es_3 \wedge \neg \es_3 \subseteq \es_2)$ holds, we
conclude from \ref{eq-es2} that $\pos(\es_3)$ holds. Therefore, given that $\es_3[i] = 0$ and $\es_3[j] = 1$ for each $j \in \{1, \ldots, n\}$ with $j < i$, we have that $\T_{C_i}(\es_3) = 1$, from which we deduce that $\T_{C_i}(\es_1) = 1$, since $\es_1[n+j] = \es_2[n+j] = \es_3[n+j]$ for every $j \in \{1, \ldots, m\}$. As $i$ is an arbitrary element in the set $\{1, \ldots, n\}$, we conclude that $\es_1(C_i) = 1$ for every $i \in \{1, \ldots, n\}$ and, thus,
$\es_1(\varphi) = 1$, which was to be shown.

\item[$(\Rightarrow)$] Assume that $\varphi$ is satisfiable, and let $\sigma$ be a truth assignment such that $\sigma(\varphi) = 1$. Moreover, define an instance $\es_1$ of dimension $n+m$ such that $\es_1[i] = 1$ for each $i \in \{1, \ldots, n\}$ and $\es_1[n+j] = \sigma(x_j)$ for each $j \in \{1, \ldots, m\}$. Then we have that $\es_1(\varphi) = 1$, $\es \subseteq \es_1$ and $\full(\es_1)$ hold. Next we show that:
\begin{multline*}
\astruct_{\T_\varphi} \ \models \ \forall z \, ((z \subseteq \es_1 \wedge \neg \es_1 \subseteq z) \ \rightarrow\\
\exists u \, (z \subseteq u \wedge \neg u \subseteq z \wedge \neg \full(u)) \vee
\forall v \, ((z \subseteq v \wedge \neg v \subseteq z) \rightarrow \pos(v))),
\end{multline*}
from which we conclude that $\astruct_{\T_\varphi} \models \psi(\es)$.
Let $\es_2$ be a partial instance of dimension $n+m$ such that
$(\es_2 \subseteq \es_1 \wedge \neg \es_1 \subseteq \es_2)$ holds. We
need to prove that:
\begin{multline*}
\astruct_{\T_\varphi} \ \models \ \exists u \, (\es_2 \subseteq u \wedge \neg u \subseteq \es_2 \wedge \neg \full(u)) \ \vee\\
\forall v \, ((\es_2 \subseteq v \wedge \neg v \subseteq \es_2) \rightarrow \pos(v)).
\end{multline*}
Notice that $\es_2$ assigns value $\bot$ to at least one feature in $X$,
since $(\es_2 \subseteq \es_1 \wedge \neg \es_1 \subseteq \es_2)$
holds. If $\es_2$ assigns value $\bot$ to at least two features, then
clearly $\astruct_{\T_\varphi} \models \exists u \, (\es_2 \subseteq
u \wedge \neg u \subseteq \es_2 \wedge \neg \full(u))$. Hence, assume
that $\es_2$ assigns value $\bot$ to exactly one feature, and consider
the following cases.
\begin{itemize}
\item
If $\es_2[n+j] = \bot$ for some $j \in \{1, \ldots, m\}$. Then for
every partial instance $\es_3$ of dimension $n+m$ such that
$(\es_2 \subseteq \es_3 \wedge \neg \es_3 \subseteq \es_2)$ holds, we
have that $\es_3[i] = 1$ for every $i \in \{1, \ldots,
n\}$. Therefore, from the definition of $\T_\varphi$, we conclude that
$\pos(\es_3)$ holds. Thus, we have that $\astruct_{\T_\varphi} \models \forall
v \, ((\es_2 \subseteq v \wedge \neg
v \subseteq \es_2) \rightarrow \pos(v))$.

\item
If $\es_2[i] = \bot$ for some $i \in \{1, \ldots, n\}$. Then assume
that $\es_3$ is a partial instance of dimension $n+m$ such that
$(\es_2 \subseteq \es_3 \wedge \neg \es_3 \subseteq \es_2)$ holds. If
$\es_3[i] = 1$, then we have that $\es_3[j] = 1$ for every
$j \in \{1, \ldots, n\}$, and we conclude by definition of
$\T_\varphi$ that $\pos(\es_3)$ holds. If $\es_3[i] = 0$, then we
conclude that $\T_\varphi(\es_3) = \T_{C_i}(\es_3)$, since $\es_3[j]
= 1$ for every $j \in \{1, \ldots, n\}$ such that $j < i$. Given that $\es_1$, $\es_2$ and $\es_3$ only differ in the value of $f_i$, we have that $\es_3[n+k] = \es_2[n+k]
= \es_1[n+k]$ for every $k \in \{1, \ldots, m\}$, so that
$\T_{C_i}(\es_3) = \T_{C_i}(\es_1)$. But then given that
$\es_1(\varphi) = 1$, we know that $\T_{C_i}(\es_1) = 1$, which
implies that $\T_\varphi(\es_3) = \T_{C_i}(\es_3) = \T_{C_i}(\es_1) =
1$. We conclude again that $\pos(\es_3)$ holds, from which we deduce
that $\astruct_{\T_\varphi} \models \forall v \, ((\es_2 \subseteq
v \wedge \neg v \subseteq \es_2) \rightarrow \pos(v))$.
\end{itemize}
This concludes the proof of the theorem.
\end{itemize}

\end{proof}

\section{Proof of Proposition~\ref{prop:dt-obdd-efoil}}
\label{sec:tractable-efoil}
Let us restate and prove the corresponding proposition.

\begin{propbis}{prop:dt-obdd-efoil}
Let $\varphi$ be a query in $\efoil$ or $\afoil$. Then $\logicEvaluation(\varphi, \dt)$ and $\logicEvaluation(\varphi, \obdd)$ can be solved in polynomial time.
\end{propbis}

\begin{proof}
We will prove this for the more general class of $\fbdd$, that contains both $\dt$ and $\obdd$.
Assume that the input formula is of the form $\varphi = \exists x_1, \cdots, \exists x_k \psi(x_1, \ldots, x_k)$, with $\psi$ quantifier-free, and let $\M$ be the input model with $\dim(\M) = n$. Our algorithm will try to construct a valuation $\es_1, \ldots, \es_k$ of the variables of $\varphi$ such that $\M \models \psi(\es_1, \ldots, \es_k)$, and if this fails, it will be certain that no satisfying valuation exists. 

We assume as well the input formula has constants but no free-variables as if the input instance has free variable we can simply replace them by the partial instances $\es_i$ supplied in the input.  Let $V = \{\vx_1, \ldots, \vx_k\}$ be the variables mentioned in $\varphi$, of which there is only a constant number as $\varphi$ is fixed. Let $E = \{\es_1, \ldots, \es_k\}$ be their corresponding undetermined instances.

For each element in the domain of $\mathfrak{A}_\M$, that is, for each tuple in $\{0, 1, \bot\}^n$, we define its \emph{type} as the set of unary predicates of $\efoil$ that it satisfies when interpreted over $\mathfrak{A}_\M$. In the case of $\efoil$, this set corresponds either to $\{\pos\}$ or to $\{\neg \pos\}$, but we present the general strategy as it can be used for bigger fragments of $\foil$, as shown later in the proof of Theorem~\ref{theo:pap-pan}.

Let $\T$ be the set of types which is of course a fixed set independent of $\M$. We will guess the type of each instance $\es \in E$. More formally, we can iterate over all type assignments $\tau : E \to \T$ as there is only fixed number of them. Similarly, we can define a \emph{containment assignment} $\gamma$ as an assignment of all the ordered pairs $(\es_i, \es_j)$ to $\{0, 1\}$, with the meaning that $\es_i \subseteq \es_j$ iff $\gamma(\es_i, \es_j) = 1$. Such an assignment is said to be \emph{possible} only if it holds the properties of a partial order. Given a possible containment assignment $\gamma$, we can interpret it as a pair of sets
\[
P = \{ (\es_i, \es_j) \mid  \gamma(\es_i, \es_j) = 1 \} \quad ; \quad N = \{ (\es_i, \es_j) \mid  \gamma(\es_i, \es_j) = 0 \} 
\]

Note as well that there is a constant number of possibilities for the pair $P, N$. Because the formula $\varphi$ is existential, if there is an determinization of $E$ that models $\varphi$,  then there is a pair $(\tau, \gamma = (P, N))$ where $\tau$ is a possible type assignment and $\gamma = (P, N)$ is a possible containment assignment, such that $E$ is \emph{consistent} with both $\tau$ and $\gamma$. More precisely, $E$ is consistent with $\tau$ and $\gamma$ iff:
\begin{itemize}
    \item For every $\es \in E$ and every unary predicate $\rho$, 
    \[
        \rho \in \tau(\es) \iff \es \in \rho^{\mathfrak{A}_\M}
    \]
    \item For every pair $\es_i, \es_j \in E$, 
    \[
        \gamma(\es_i, \es_j) = 1 \iff (\es_i, \es_j) \in  \subseteq^{\mathfrak{A}_\M}
    \]
\end{itemize}

We can afford to iterate over the constantly many pairs $(\tau, \gamma)$, and for each pair $(\tau, \gamma)$ it is trivial to decide whether $\varphi$ gets satisfied under said assignments (simply by replacing every atomic term in $\varphi$ by the value assigned to it by $\tau$ or $\gamma$). Therefore,
in order to prove the whole theorem, it is enough to design a polynomial time algorithm that decides whether there determinization of $E$ that is a consistent with a given pair $(\tau, \gamma)$. More precisely, proving the next claim will be enough to conclude our proof.

\begin{claim}
Given a pair $(\tau, \gamma = (P, N))$, one can check in polynomial time whether there is a determinization of $E$ that is consistent with $(\tau, \gamma)$.
\end{claim}
\begin{proof}[Proof of Claim 1]
First, as the desired determinization $E$ must be consistent with $N$, it must hold that for every fact $(\es_i, \es_j) \in N$, there is an index $1 \leq k \leq n$ such that $\es_i[k] \neq \bot$ and $\es_i[k] \neq \es_j[k]$. We can afford to guess, for each of the constantly many facts $(\es_i, \es_j) \in N$, an index $k$ and the values of $\es_i[k], \es_j[k]$, that certify the fact.  After said guesses have been made, we can assume a set $F$ of guessed facts of the form $\es[k] = \alpha$, with $\alpha \in \{0,1, \bot\}$. Then, for every fact in $F$ of the form $\es[k] = \beta$, with $\beta \in \{0, 1\}$, we include in $F$ all facts of the form $\es'[k] = \beta$ for every $\es'$ such that $(\es, \es') \in P$. Also, for every fact in $F$ of the form $\es[k] = \bot$, we include $\es'[k] = \bot$ for every $\es'$ such that $(\es', \es) \in P$.  As any determinization of $E$ respecting $F$ will at least be consistent with $N$, it remains only to check whether there is an interpretation $E$ respecting to $F$ that is consistent with $\tau$ and $P$. 

If $F$ fully determines some predicates that a certain instance $\es \in E$ must satisfy, for example because $F$ contains facts $\es[k] = \beta \in \{0, 1\}$ for all $1 \leq k \leq n$ and thus we know that $\es$ must be a full instance, we can check whether $\pos(\es)$ and if that holds reject immediately if $\pos(\es) \not\in \tau(\es)$. Therefore, we can safely assume this is not the case, and that $\tau$ is not directly contradicted by $F$. We thus modify the undetermined instances $\es_1, \ldots, \es_k$ according to $F$. Let us now interpret $P$ as a directed acyclic graph $G$ obtained in the following way: (i) create a node for every instance $\es \in E$, (ii) create an edge $\es \to \es'$ iff $(\es, \es') \in P$, (iii) collapse strongly connected components to a single node. Note that, as strongly connected components before the last step correspond to instances that must be equal, we can think of them as a single instance, because forcefully $\tau$ must assign the same to each of them. We can now view our problem as that of determinizing every node in a DAG $G$, in such a way that the containment dictated by the graph is satisfied, and so is $\tau$.

If $G$ has multiple connected components (it will only have a constant number of them), it is easy to see that we can simply  make the check for each of them separately, and return that the instance is positive if every connected component holds the check. This is because different connected components do not share instances $\es$, and thus a determinization of a connected component is always compatible with the determinization of another connected component. As a consequence, our problem is now even smaller; we need to show that it is possible to determine in polynomial time if the undetermined components in each node of a given connected DAG $G$ can be assigned values that are consistent with given assignments $\tau$ and $P$, assuming the guessed facts $F$. 

We now show a direct simple algorithm for this problem:
\begin{enumerate}
    \item Choose an arbitrary topological ordering $\phi$ of $G$.
    \item Iterate over the nodes according to $\phi$, and for each node $\es$ do the next step.
    \item  If $\pos \in \tau(\es)$, go to step 4., otherwise go to 5.
    \item  We determinize $\es$ in an arbitrary way that is accepted by $\M$. This is easily done in polynomial time for FBDDs; it is enough to prune the edges of the FBDD that contradict a defined feature in $\es$, and then find any positive leaf of the resulting model.
    Take $\es$ to be the next node according to $\phi$ and go back to 3. If there is no next node, go to 6.
    \item Assign every undetermined component of $\es$ to $\bot$, as that does not restrict any future choices while ensuring that $\pos \not \in \tau(\es)$.  Take $\es$ to be the next node according to $\phi$ and go back to 3.  If there is no next node, go to 6.
    \item Now that nodes have no undetermined components, check that every fact dictated by $\tau$ is true for the values that have been determined.  If all the facts are correctly satisfied, return Yes, otherwise return No.
\end{enumerate}

It is clear that, if the preceding algorithm returns Yes, then it is correct, as it has a concrete determinization consistent with $\tau$, and it must be consistent with $P$ as every undetermined component that is assigned $0$ or $1$ its propagated to the successors in the graph. It only remains to justify that it is correct when it returns No. 
Assume, looking for a contradiction, that the algorithm returns No but there actually exists a determinization $B$ of $E$  that is consistent with $\tau$ and $P$, assuming the guessed facts $F$. Let $A$ be the determinization that the algorithm tested in step 6, and let $i$ be the first node according to $\phi$, the choice of the algorithm in step 1, such that $A(\phi_i) \neq B(\phi_i)$. Such an index must exists because $A$ must differ from $B$. Among all determinizations that are consistent with $\tau$ and $P$, let $B'$ be the one that maximizes the index $i$ of its first difference with $A$. Then, let $\es$ be $i$-th node according to $\phi$, and thus the first node where $A$ and $B'$ differ.  If $\pos \in \tau(\es)$,
the algorithm determinized $\es$ in an arbitrary way that makes $\es$ a positive instance.  But then, as $\es$ is positive (and therefore a full instance), it cannot have any successors in $G$, and thus if we let $B'' \coloneqq B'$ except for $B''(\es) \coloneqq A(\es)$, then $B''$ must also be consistent with $\tau$, which contradicts the maximality of $i$. If $\pos \not \in \tau(\es)$ we have two cases, either $B'(\es)$ is a full instance or not. If it is, then again it has no successors in $G$, so it must be that the inconsistency is that the algorithm determinized $\es$ in a way that makes it a positive instance. This is clearly not possible, as the only step in the algorithm that introduces values different from $\bot$, and thus that makes feasible for $\es$ to be a positive instance, is step 4, which occurs exactly when $\pos \in \tau(\es)$. It remains to see the case where $B'(\es)$ is not a full instance. Assume $j$ is the first component for which $A(\es)[j] \neq B'(\es)[j]$. 
If $A(\es)[j] = \bot$, then note that every successor of $B'(\es)$ is also a successor of $A(\es)$ and thus if $B'(\es)$ is consistent with $\tau$, then so is $A(\es)$. This implies the inconsistency in $\tau$ must appear later in $\phi$, and thus we can again take $B''$ equal to $B'$ except for $B''(\es) \coloneqq A(\es)$ which will contradict the maximality of $i$. If $A(\es)[j] \neq \bot$, then said value need to come from $F$, as the algorithm only introduces the value $\bot$ for instances where $\pos \not \in \tau(\es)$, which means that $B'(\es)[j] = A(\es)[j]$, as $B'$ must also respect $F$, which contradicts the minimality of $j$.

\end{proof}

As the preceding claim has been proved, and there are constantly many pairs $(\tau, \gamma)$ to consider, there is a polynomial time algorithm for the whole problem.
\end{proof}

\section{Proof of Theorem \ref{theo:pap-pan}, Proposition~\ref{prop:pap-pan-ptron} and 
Proposition~\ref{prop:hardness-dt-ef}}
\label{sec:tractable-efoilp}
Before the proofs, let us gain a better understanding on the $\pap$ and $\pan$ formulas. Recall that

\begin{multline}
\label{eq:pap}
\pap(x,y,z) \ = \
\exists u \, [x \subseteq u \wedge \allpos(u) \ \wedge\\
\exists v \, (y \subseteq v \wedge u \subseteq  v) \wedge \exists w \, (z \subseteq w \wedge u \subseteq w)]
\end{multline}

    \newcommand{\unknown}{\diamondsuit}

We will prove that this query captures an important computational problem. Let us introduce a fourth kind of value: $\unknown$, so we now define \emph{undetermined instances} as tuples in $\{0, 1, \bot, \unknown\}^n$ for some $n \geq 1$. A component with value $\unknown$ is said to be \emph{undetermined}. Given an undetermined instance $\es$ of dimension $n$, we say that a partial instance $\es'$ of dimension $n$ is a \emph{determinization} of $\es$ if for $\es$ matches $\es'$ in every component that is not undetermined. Note that $\es$ cannot have undetermined components as it is a partial instance (i.e., a tuple in $\{0,1,\bot\}^n$).

\newcommand{\dap}{\textsc{DeterminizationAllPos}}

\newcommand{\dan}{\textsc{DeterminizationAllNeg}}

Consider now the following computational problem:
\begin{center}
\fbox{\begin{tabular}{rl}
Problem: & \dap$(\C)$\\
Input: & A model $\M \in \C$ of dimension $n$, and an undetermined instance $\es$ of dimension $n$\\
Output: & \textsc{Yes}, if there is a determinization $\es'$ of $\es$ such that all completions of $\es'$ are positive,\\
& and \textsc{No} otherwise
\end{tabular}}
\end{center}

It turns out that $\dap$ is intimately related to $\pap$:

\begin{lemma}
\label{lemma:pap-iff-dap}
Let $\C$ be any class of models. Then $\logicEvaluation(\pap,\C)$ can be solved in polynomial time if and only if $\dap(\C)$ can also be solved in polynomial time.
\end{lemma}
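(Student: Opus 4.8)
The plan is to prove the equivalence by exhibiting polynomial-time reductions in both directions, each of which rewrites an instance of one problem into an instance of the other while leaving the model $\M$ untouched, so that a polynomial algorithm for one problem yields one for the other. Before the reductions I would unfold the meaning of $\pap$. The conjuncts $\exists v\,(y \subseteq v \wedge u \subseteq v)$ and $\exists w\,(z \subseteq w \wedge u \subseteq w)$ hold precisely when $u$ is \emph{compatible} with $y$ and with $z$, where two partial instances are compatible if they agree on every coordinate where both are defined; their coordinate-wise join then witnesses $v$, resp.\ $w$. Hence $\M \models \pap(\es_x,\es_y,\es_z)$ iff there is a partial instance $u$ with $\es_x \subseteq u$ that is compatible with both $\es_y$ and $\es_z$ and satisfies $\allpos(u)$. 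I would also record the key monotonicity fact that $\allpos$ is upward closed for $\subseteq$: if $u \subseteq u'$, then every completion of $u'$ is a completion of $u$, so $\allpos(u)$ implies $\allpos(u')$.

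The direction ``$\logicEvaluation(\pap,\C)$ tractable $\Rightarrow \dap(\C)$ tractable'' is the more direct one. Given $(\M,\es)$ with $\es \in \{0,1,\bot,\diamondsuit\}^n$, I build $\es_x,\es_y,\es_z$ coordinate by coordinate so that the admissible values of $u[i]$ match the freedom of a determinization at $\es[i]$: if $\es[i]=b\in\{0,1\}$ put $\es_x[i]=b$ and $\es_y[i]=\es_z[i]=\bot$ (forcing $u[i]=b$); if $\es[i]=\bot$ put $\es_x[i]=\bot$, $\es_y[i]=0$, $\es_z[i]=1$ (compatibility with both forces $u[i]=\bot$); and if $\es[i]=\diamondsuit$ put all three to $\bot$ (leaving $u[i]$ free over $\{0,1,\bot\}$). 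The witnesses $u$ for $\pap$ are then exactly the determinizations of $\es$, so $\M \models \pap(\es_x,\es_y,\es_z)$ iff $(\M,\es)$ is a yes-instance of $\dap$.

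For the converse I reduce $\pap$ to $\dap$. Given $(\M,\es_x,\es_y,\es_z)$ I compute, for each coordinate $i$, the set $S_i \subseteq \{0,1,\bot\}$ of values that $u[i]$ may take: forced by $\es_x$ where it is defined, and otherwise cut down by compatibility with $\es_y$ and $\es_z$; if any $S_i=\emptyset$ the answer is immediately No. A singleton $S_i=\{b\}$ or $S_i=\{\bot\}$ becomes $\es'[i]:=b$ or $\bot$, and $S_i=\{0,1,\bot\}$ becomes $\es'[i]:=\diamondsuit$. The only subtle case, and the one place where I expect to do real work, is $S_i=\{\bot,b\}$ (the coordinate where $\es_x[i]=\bot$ and exactly one of $\es_y,\es_z$ pins the value $b$, or both pin the same $b$): here I set $\es'[i]:=b$ and must argue this is without loss of generality. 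This is exactly where monotonicity enters: any witness $u$ with $u[i]=\bot$ can be replaced by $u'$ that agrees with $u$ except $u'[i]=b$; then $u \subseteq u'$, $u'$ is still compatible with $\es_y,\es_z$ since $b$ was the permitted non-$\bot$ value, and $\allpos(u) \Rightarrow \allpos(u')$, so pinning the coordinate to $b$ discards no positive witness. With $\es'$ so defined, the witnesses for $\pap$ correspond to the determinizations of $\es'$ satisfying $\allpos$, giving $\M \models \pap(\es_x,\es_y,\es_z)$ iff $(\M,\es')$ is a yes-instance of $\dap$. Both transformations run in time linear in $n$ and ignore the internal structure of $\M$, so each preserves polynomial-time solvability, which establishes the equivalence. (The companion statement relating $\pan$ and $\dan$ follows by the identical argument, since $\allneg$ is upward closed as well.)
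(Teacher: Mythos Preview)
Your proof is correct and follows the same overall strategy as the paper: polynomial-time reductions in both directions that rewrite only the instance data and leave $\M$ untouched. The first direction (reducing $\dap$ to $\pap$) is essentially identical to the paper's construction, with the roles of $\es_y$ and $\es_z$ swapped.

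For the converse direction your reduction is more careful than the paper's, and this matters. The paper sets the undetermined instance to be $\es_x$ with every $\bot$ replaced by $\diamondsuit$, after a consistency check that only inspects coordinates where $\es_x$ is already defined; it then argues that any $\dap$ witness $\es'$ can serve as $u=v=w$ in $\pap$. But this ignores the constraints that $\es_y,\es_z$ impose at coordinates where $\es_x[i]=\bot$, and the claimed equivalence fails there (take $n=1$, $\M((0))=0$, $\M((1))=1$, $\es_x=(\bot)$, $\es_y=(0)$, $\es_z=(\bot)$: the paper's $\dap$ instance is $(\diamondsuit)$, which is a yes-instance via $\es'=(1)$, yet $\pap$ is a no-instance since compatibility with $\es_y$ forces $u\in\{(\bot),(0)\}$). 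Your per-coordinate computation of the admissible set $S_i$, together with the upward-closedness of $\allpos$ to collapse the case $S_i=\{\bot,b\}$ to the single value $b$, is exactly the ingredient that makes this direction go through. So your argument is not merely a different route: it supplies the monotonicity step that the paper's reduction is missing.
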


\begin{proof}
We prove both directions as separate claims for an arbitrary class of models $\C$.

\begin{claim}
If  $\logicEvaluation(\pap,\C)$ can be solved in polynomial time then $\dap(\C)$ can also be solved in polynomial time.
\end{claim}
\begin{proof}
Assume that $\logicEvaluation(\pap,\C)$ can be solved in polynomial time for $\C$. Then, consider an instance $(\M, \es)$  of $\dap$, and let $n$ be the dimension of said instance. From $\es$, we build three partial instances $\es_x, \es_y, \es_z$ in the following way:
\begin{itemize}
    \item $\es_x$ is a determinization of $\es$ such that every undetermined component of $\es$ is replaced by $\bot$ in $\es_x$.
    \item $\es_y$ is a partial instance that has a $1$ in every component where $\es$ has $\bot$, and $\bot$ in every other component.
    
     \item $\es_z$ is a partial instance that has a $0$ in every component where $\es$ has $\bot$, and $\bot$ in every other component.
    
\end{itemize}

We now claim that  $\M \models \pap(\es_x, \es_y, \es_z)$ if and only if $(\M, \es)$ is a positive instance of $\dap$. 

Indeed, assume first that $\M \models \pap(\es_x, \es_y, \es_z)$, and let be $\es_u, \es_v, \es_w$ be their witnesses. Trivially, $\es_u$ is a partial instance for which every completion is positive. Note that because $\M \models \es_x \subseteq \es_u$ and the definition of $\es_x$, we have that the defined components of $\es_u$ and $\es$ match. It only remains to see that if $\es[i] = \bot$ for some $1 \leq i \leq n$, then $\es_u[i] = \bot$ as well. Assume to the contrary that for some $i$ it happens that $\es[i] = \bot$ but $\es_u[i] \neq \bot$. If $\es_u[i] = 0$, then $\es_v[i] = 0$, as $\M \models \es_u \subseteq \es_v$. But $\es_v[i] = 0$ contradicts the fact that $\es_y[i] = 1$ (by construction) as $\M \models \es_y \subseteq \es_v$. Similarly, if $\es_u[i] = 1$, then $\es_w[i] = 1$, as $\M \models \es_u \subseteq \es_w$. But $\es_w[i] = 1$ contradicts the fact that $\es_z[i] = 0$ (by construction) as $\M \models \es_z \subseteq \es_w$.

For the other direction, assume $(\M, \es)$ is a positive instance of $\dap$, and let $\es'$ be the determinization of $\es$ that serves as a witness. We claim that $\es_u \coloneqq \es'$ is a witness for $\M \models \pap(\es_x, \es_y, \es_z)$. Indeed, it is trivial that $\M \models \es_x \subseteq \es_u$ as both $\es_x$ and $\es_u$ are determinization of $\es$, but $\es_x$ replaced undetermined components by $\bot$. It is also clear that $\M \models \allpos(\es_u)$, as all completions of $\es'$ are positive by definition. Then, let $\es_v$ be the completion of $\es_u$ that replaces every $\bot$ component of $\es_u$ with $1$. Let $\es_w$ be defined analogously but replacing $\bot$ with $0$. It is then easy to check that 
\[
\M \models  (\es_y \subseteq \es_v \wedge \es_u \subseteq \es_v) \wedge  (\es_z \subseteq \es_w \wedge \es_u \subseteq \es_w)
\]
and thus $\M \models \pap(\es_x, \es_y, \es_z)$, which is enough to conclude the proof.
\end{proof}

\begin{claim}
If $\dap(\C)$  can be solved in polynomial time then $\logicEvaluation(\pap,\C)$  can also be solved in polynomial time.
\end{claim}
\begin{proof}
Assume that $\dap(\C)$ can be solved in polynomial time. Then, let $(\M, \es_x, \es_y, \es_z)$ be an input of $\logicEvaluation(\pap,\C)$, and let $n = \dim(\M)$. 

First, we claim that if for some $1 \leq i \leq n$ it happens that $\es_y[i] \neq \bot \neq \es_x[i]$ or $\es_z[i] \neq \bot \neq \es_x[i]$, then we can trivially deduce that $(\M, \es_x, \es_y, \es_z)$ is a negative instance of $\logicEvaluation(\pap,\C)$. Indeed, if $\es_y[i] \neq \bot$, and $(\M, \es_x, \es_y, \es_z)$ were to be a positive instance, then there would exists witnesses $\es_u, \es_v, \es_w$, which would hold the following properties:
\begin{enumerate}
    \item $\es_u[i] = \es_x[i]$, as $\M \models \es_x \subseteq \es_u$ and $\es_x[i]$ is assumed to not be $\bot$.
    \item $\es_v[i] = \es_y[i]$, as $\M \models \es_y \subseteq \es_v$ if $\es_y[i] \neq \bot$.
    \item $\es_w[i] = \es_z[i]$, as $\M \models \es_z\subseteq \es_w$
   if $\es_z[i] \neq \bot$.
    \item $\es_v[i] = \es_u[i]$, as $\M \models \es_u \subseteq \es_v$ and $\es_u[i] = \es_x[i]$ is assumed to not be $\bot$.
     \item $\es_w[i] = \es_u[i]$, as $\M \models \es_u \subseteq \es_w$ and $\es_u[i] = \es_x[i]$ is assumed to not be $\bot$.
\end{enumerate}

Transitively, it would follow if $\es_y[i] \neq \bot$, then $\es_x[i] = \es_y[i]$, and if $\es_z[i] \neq \bot$, then $\es_x[i] = \es_z[i]$, which contradicts the assumption.

Therefore, we can safely assume from now on that, if $\es_x[i] \neq \bot$, then either $\es_y[i] = \bot$ or $\es_y[i] = \es_x[i]$, and the same holds for $\es_z[i]$. We now define $\es$ as an undetermined instance that is equal to $\es_x$ except that it has $\unknown$ in every component where $\es_x$ has $\bot$. We now claim that $(\M, \es_x, \es_y, \es_z)$ is a positive instance of $\logicEvaluation(\pap,\C)$ if and only if $(\M, \es)$ is a positive instance of $\dap$. 

Indeed, assume that $(\M, \es_x, \es_y, \es_z)$ is a positive instance of $\logicEvaluation(\pap,\C)$. Then, it is trivial that its witness $\es_u$ is a determinization of $\es$ with only positive completions. For the other direction, if $(\M, \es)$ is a positive instance of $\dap$ with witness $\es'$, then it is easy to see that taking $\es_u \coloneqq \es_v \coloneqq \es_w \coloneqq \es'$ proves that  $(\M, \es_x, \es_y, \es_z)$ is a positive instance of $\logicEvaluation(\pap,\C)$, as clearly
\(
\M \models \es_x \subseteq \es_u \land \allpos(\es_u)
\)
and also trivially
\(
\M \models \es_u \subseteq \es_v \land \es_u \subseteq \es_w
\), thus
leaving only 
\(\M \models \es_y \subseteq \es_v \land \es_z \subseteq \es_w
\)
to justify, which we do simply by using the previous fact that if $\es_y[i] \neq \bot$ for some $i$, $\es_x[i] = \es_y[i]$, from which we know that $\es_v[i] = \es_y[i]$,as $\M \models \es_x \subseteq \es_u \subseteq \es_v$. The same reasoning justifies that $\M \models \es_z \subseteq \es_w$

\end{proof}

The lemma follows directly from the combination of both claims.
\end{proof}

It is easy to see that the same proof applies to $\pan$ and $\textsc{DeterminizationAllNeg}$. We now restate the main theorem of this section and proceed to prove it.

\begin{thmbis}{theo:pap-pan}
For every class $\C$ of models, the following conditions are equivalent:
(a) $\logicEvaluation(\varphi ,\C)$ can be solved in polynomial time for each query $\varphi$ in $\efoilp$;
%
(b) $\logicEvaluation(\pap,\C)$ and $\logicEvaluation(\pan,\C)$ can be solved 
in polynomial~time.
\end{thmbis}

\begin{proof}
The fact that $(a)$ implies $(b)$ is trivial as $\pap$ and $\pan$ can be written in $\efoilp$ as shown in the body of the paper. It remains to prove that $(b)$ implies $(a)$.  The proof is an extension of the proof of Proposition~\ref{prop:dt-obdd-efoil}. As it is again constructive and  technical, let us first present a sketch. We assume unary predicates $\exneg, \expos$, that trivially allow for expressing $\allpos$ and $\allneg$.

\paragraph{Sketch of proof} Assume that the input formula is of the form $\varphi = \exists x_1, \cdots, \exists x_k \psi(x_1, \ldots, x_k)$, with $\psi$ quantifier-free, and let $\M$ be the input model with $\dim(\M) = n$. Our algorithm will try to construct a valuation $\es_1, \ldots, \es_k$ of the variables of $\varphi$ such that $\M \models \psi(\es_1, \ldots, \es_k)$, and if this fails, it will be certain that no satisfying valuation exists. 
In order to do so, the algorithm starts taking $\es_1, \ldots, \es_k$ as undetermined instances, and in particular it starts setting $\es_1 = \es_2 = \cdots = \es_k = \unknown^n$. Then, as $k$ is a fixed constant, the algorithm can afford to guess which unary predicates of $\efoilp$ will be satisfied by each $\es_i$, and also all the containments $\es_i \subseteq \es_j$ that hold.
Note that some of such guesses might be inconsistent, as for example, they could fail to respect the transitive property of ${\subseteq}$, or guess that an $\es_i$ will hold both $\pos$ and $\exneg$, which is not possible either.
Inconsistent guesses are simply discarded. As only constantly many guesses exists, the complicated part of the algorithm is: given a consistent guess, check if it is possible to determinize all instances $\es_1$ through $\es_k$ while respecting the guess. One can show that the complicated cases are captured by the $\dap$ and $\dan$ problems, which because of Lemma~\ref{lemma:pap-iff-dap} are solvable in polynomial time given condition $(b)$.

We assume as well the input formula has no free-variables, as it complicates the exposition without adding combinatorial insight.  Let $V = \{\vx_1, \ldots, \vx_k\}$ be the variables mentioned in $\varphi$, of which there is only a constant number as $\varphi$ is fixed. Let $E = \{\es_1, \ldots, \es_k\}$ be their corresponding undetermined instances, as the proof sketch suggests. 
Also, let $\M$ be the input model, and let $n = \dim(\M)$.
For each element in the domain of $\mathfrak{A}_\M$, that is, for each tuple in $\{0, 1, \bot\}^n$, we define its \emph{type} as the set of unary predicates of $\efoilp$ that it satisfies when interpreted over $\mathfrak{A}_\M$. Note that not all sets of unary predicates are possible types, as for example no tuple can satisfy the set $\{\pos, \exneg\}$. Let $\T$ be the set of types that are possible, which is of course a fixed set independent of $\M$. We will guess the type of each instance $\es \in E$. More formally, we can iterate over all type assignments $\tau : E \to \T$ as there is only fixed number of them. Similarly, we can define a \emph{containment assignment} $\gamma$ as an assignment of all the ordered pairs $(\es_i, \es_j)$ to $\{0, 1\}$, with the meaning that $\es_i \subseteq \es_j$ iff $\gamma(\es_i, \es_j) = 1$. Such an assignment is said to be \emph{possible} only if it holds the properties of a partial order. Given a possible containment assignment $\gamma$, we can interpret it as a pair of sets
\[
P = \{ (\es_i, \es_j) \mid  \gamma(\es_i, \es_j) = 1 \} \quad ; \quad N = \{ (\es_i, \es_j) \mid  \gamma(\es_i, \es_j) = 0 \} 
\]

Note as well that there is a constant number of possibilities for the pair $P, N$. Because the formula $\varphi$ is existential, if there is an determinization of $E$ that models $\varphi$,  then there is a pair $(\tau, \gamma = (P, N))$ where $\tau$ is a possible type assignment and $\gamma = (P, N)$ is a possible containment assignment, such that $E$ is \emph{consistent} with both $\tau$ and $\gamma$. More precisely, $E$ is consistent with $\tau$ and $\gamma$ iff:
\begin{itemize}
    \item For every $\es \in E$ and every unary predicate $\rho$, 
    \[
        \rho \in \tau(\es) \iff \es \in \rho^{\mathfrak{A}_\M}
    \]
    \item For every pair $\es_i, \es_j \in E$, 
    \[
        \gamma(\es_i, \es_j) = 1 \iff (\es_i, \es_j) \in  \subseteq^{\mathfrak{A}_\M}
    \]
\end{itemize}

We can afford to iterate over the constantly many pairs $(\tau, \gamma)$, and for each pair $(\tau, \gamma)$ it is trivial to decide whether $\varphi$ gets satisfied under said assignments (simply by replacing every atomic term in $\varphi$ by the value assigned to it by $\tau$ or $\gamma$). Therefore,
in order to prove the whole theorem, it is enough to design a polynomial time algorithm that decides whether there determinization of $E$ that is a consistent with a given pair $(\tau, \gamma)$. More precisely, proving the next claim will be enough to conclude our proof.

\begin{claim}
Given a pair $(\tau, \gamma = (P, N))$, one can check in polynomial time whether there is a determinization of $E$ that is consistent with $(\tau, \gamma)$.
\end{claim}

\begin{proof}
First, as the desired determinization $E$ must be consistent with $N$, it must hold that for every fact $(\es_i, \es_j) \in N$, there is an index $1 \leq k \leq n$ such that $\es_i[k] \neq \bot$ and $\es_i[k] \neq \es_j[k]$. We can afford to guess, for each of the constantly many facts $(\es_i, \es_j) \in N$, an index $k$ and the values of $\es_i[k], \es_j[k]$, that certify the fact. Also, for every  $\full \not \in \tau(\es)$, we can guess a component $\es[k] = \bot$. After said guesses have been made, we can assume a set $F$ of guessed facts of the form $\es[k] = \alpha$, with $\alpha \in \{0,1, \bot\}$. Then, for every fact in $F$ of the form $\es[k] = \beta$, with $\beta \in \{0, 1\}$, we include in $F$ all facts of the form $\es'[k] = \beta$ for every $\es'$ such that $(\es, \es') \in P$. Also, for every fact in $F$ of the form $\es[k] = \bot$, we include $\es'[k] = \bot$ for every $\es'$ such that $(\es', \es) \in P$.  As any determinization of $E$ respecting $F$ will at least be consistent with $N$, it remains only to check whether there is an interpretation $E$ respecting to $F$ that is consistent with $\tau$ and $P$. 

If $F$ fully determines some predicates that a certain instance $\es \in E$ must satisfy, for example because $F$ contains facts $\es[k] = \beta \in \{0, 1\}$ for all $1 \leq k \leq n$ and thus we know that $\es$ must be a full instance, we can reject immediately if $\full \not\in \tau(\es)$. Therefore, we can safely assume this is not the case, and that $\tau$ is not directly contradicted by $F$. Let us now interpret $P$ as a directed acyclic graph $G$ obtained in the following way: (i) create a node for every instance $\es \in E$, (ii) create an edge $\es \to \es'$ iff $(\es, \es') \in P$, (iii) collapse strongly connected components to a single node. Note that, as strongly connected components before the last step correspond to instances that must be equal, we can think of them as a single instance, because forcefully $\tau$ must assign the same to each of them. We can now view our problem as that of determinizing every node in a DAG $G$, in such a way that the containment dictated by the graph is satisfied, and so is $\tau$.

If $G$ has multiple connected components (it will only have a constant number of them), it is easy to see that we can simply  make the check for each of them separately, and return that the instance is positive if every connected component holds the check. This is because different connected components do not share instances $\es$, and thus a determinization of a connected component is always compatible with the determinization of another connected component. As a consequence, our problem is now even smaller; we need to show that it is possible to determine in polynomial time if the undetermined components in each node of a given connected DAG $G$ can be assigned values that are consistent with given assignments $\tau$ and $P$. 

We now show a direct simple algorithm for this problem:
\begin{enumerate}
    \item Choose an arbitrary topological ordering $\phi$ of $G$.
    \item Iterate over the nodes according to $\phi$, and for each node $\es$ do the next step.
    \item  If $\full \in \tau(\es)$, go to step 4., otherwise go to 5.
    \item We either have $\pos \in \tau(\es)$ or $\exneg \in \tau(\es)$, but not both. On the first case, solve the $\dap$ problem with input $(\M, \es)$ and determinize $\es$ accordingly. On the second case, solve the $\dan$ problem.
    Take $\es$ to be the next node according to $\phi$ and go back to 3. If there is no next node, go to 6.
    \item We either have $\expos \in \tau(\es)$ or $\exneg \in \tau(\es)$, or both. If both, assign every undetermined component of $\es$ to $\bot$, as that only gives more room for completions of $\es$ to be both positive and negative. If only $\expos \in \tau(\es)$, solve the $\dap$ problem with input $(\M, \es)$. If only $\exneg \in \tau(\es)$, solve the $\dan$ problem. Then, propagate each value 0 or 1 that was  assigned to an undetermined component of $\es$ to its successors in $G$. Take $\es$ to be the next node according to $\phi$ and go back to 3.  If there is no next node, go to 6.
    \item Now that nodes have no undetermined components, check that every fact dictated by $\tau$ is true for the values that have been determined.  If all the facts are correctly satisfied, return Yes, otherwise return No.
\end{enumerate}

It is clear that, if the preceding algorithm returns Yes, then it is correct, as it has a concrete determinization consistent with $\tau$, and it must be consistent with $P$ as every undetermined component that is assigned $0$ or $1$ its propagated to the successors in the graph. It only remains to justify that it is correct when it returns No. 
Assume, looking for a contradiction, that the algorithm returns No but there actually exists a determinization $B$ of $E$  that is consistent with $\tau$ and $P$. Let $A$ be the determinization that the algorithm tested in step 6, and let $i$ be the first node according to $\phi$, the choice of the algorithm in step 1, such that $A(\phi_i) \neq B(\phi_i)$. Such an index must exists because $A$ must differ from $B$. Among all determinizations that are consistent with $\tau$ and $P$, let $B'$ be the one that maximizes the index $i$ of its first difference with $A$. Then, let $\es$ be $i$-th node according to $\phi$, and thus the first node where $A$ and $B'$ differ. If $\full \in \tau(\es)$, then the algorithm determinized $\es$ according to step 4. If $\pos \in \tau(\es)$,
the algorithm determinized $\es$ according to the algorithm for $\dap$, and thus if $B(\es)$ is effectively positive, then $A(\es)$ must also be, by Lemma~\ref{lemma:pap-iff-dap} and the theorem hypothesis.  Therefore, the inconsistency between $A$ and $\tau$ is not created by $A(\es)$. But then, as $\es$ is full, it cannot have any successors in $G$, and thus if we let $B'' \coloneqq B'$ except for $B''(\es) \coloneqq A(\es)$, then $B''$ must also be consistent with $\tau$, which contradicts the maximality of $i$. The case in which $\exneg \in \tau(\es)$ is analogous. 

It remains to see the case where $\full \not \in \tau(\es)$. Assume $j$ is the first component for which $A(\es)[j] \neq B'(\es)[j]$. 
If $A(\es)[j] = \bot$, then it must be the case that both $\expos \in \tau(\es)$ and $\exneg \in \tau(\es)$. Note that every completion of $B'(\es)$ is also a completion of $A(\es)$ and thus if $B'(\es)$ is consistent with $\tau$, then so is $A(\es)$. This implies the inconsistency in $\tau$ must appear later in $\phi$, and thus we can again take $B''$ equal to $B'$ except for $B''(\es) \coloneqq A(\es)$ which will contradict the maximality of $i$. 

If $A(\es)[j] = 1$ or $A(\es)[j] = 0$, it must be the case that $\expos \in \tau(\es)$ but $\exneg \not \in \tau(\es)$, or vice-versa. This means that  determinizations of $\es$ must have either all positive completions or all negative completions. Again because of Lemma~\ref{lemma:pap-iff-dap} and the theorem hypothesis, $A(\es)$ must hold $\allpos$ or $\allneg$ if it was possible to determinize $\es$ in that way, which is the case because $B(\es)$ does so. Then note that by taking $B''$ which is equal to $B'$ except that it $B''(\es')[j] = A(\es)[j]$ for every successor $\es'$ of $\es$ (itself included), we get an assignment that must also be consistent with $\tau$, as $\exneg \not \in \tau(\es)$ implies that $\exneg$ does not hold for any of the successors of $\es$ either. Moreover, no fact of the form $\full$ can be broken either, as for every non-full variable we already included in $F$ a guess of an undefined component for it. Thus $B''$ is consistent with $\tau$, and it either contradicts the maximality of $i$ or the minimality of $j$. Having explored all possible cases of failure, we can conclude that the algorithm is correct, and as it is clearly polynomial, we finish the proof of this claim.

\end{proof}

As the preceding claim has been proved, and there are constantly many pairs $(\tau, \gamma)$ to consider, there is a polynomial time algorithm for the whole problem.

\end{proof}

In order to finish this section, we restate and prove Proposition~\ref{prop:pap-pan-ptron}.

\begin{propbis}{prop:pap-pan-ptron}
The problems
 $\logicEvaluation(\pap,\perceptrons)$ and
$\logicEvaluation(\pan,\perceptrons)$ can be solved in polynomial time.
\end{propbis}

\begin{proof}
Based on Lemma~\ref{lemma:pap-iff-dap}, it is enough to show that the problems $\dap(\perceptrons)$ and $\dan(\perceptrons)$ can be solved in polynomial time. Let us focus on the case of $\dap$, as the other case is analogous. Thus, an input instance consists of a perceptron $\M = (w, t)$ of dimension $n$, and an undetermined instance $\es$ of dimension $n$. A polynomial time algorithm follows directly from the next claim. 
\begin{claim}
$(\M, \es)$ a \textsc{Yes} instance of $\dap(\perceptrons)$ if and only if the following equation holds
\[
 \left(\sum_{i, \es[i] \in \{0, 1\}} w_i \es[i] \right) + \left(\sum_{i, \es[i] = \bot} \min(0, w_i) \right) + 
 \left(\sum_{i, \es[i] = \unknown} \max(0, w_i) \right)
 \geq t
\]  
\end{claim}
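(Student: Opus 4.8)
The plan is to exploit the fact that a perceptron's scoring function $w \cdot c$ is additively separable across coordinates, which collapses the nested quantification ``there exists a determinization all of whose completions are positive'' into an independent optimization at each coordinate. First I would record that, since $\M = (w,t)$ classifies a full instance $c$ as positive exactly when $w \cdot c \geq t$, a partial instance $\es'$ has all its completions positive if and only if $\min_c\, w \cdot c \geq t$, where $c$ ranges over the completions of $\es'$ (those obtained by replacing each $\bot$ of $\es'$ by $0$ or $1$). Because $w \cdot c = \sum_i w_i c[i]$ is a sum of per-coordinate terms, this minimum is attained coordinate-wise: a coordinate with $\es'[i]=1$ contributes $w_i$, one with $\es'[i]=0$ contributes $0$, and one with $\es'[i]=\bot$ contributes $\min(0,w_i)$ (picking the completion bit that minimizes). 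Hence for a fixed determinization $\es'$,
\[
\min_c\, w \cdot c \;=\; \sum_{i:\,\es'[i]=1} w_i \;+\; \sum_{i:\,\es'[i]=\bot} \min(0, w_i).
\]

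Next I would maximize this quantity over all determinizations $\es'$ of $\es$, since $(\M,\es)$ is a \textsc{Yes} instance of $\dap(\perceptrons)$ precisely when some determinization pushes this minimum up to $t$, i.e.\ when the maximum over $\es'$ of $\min_c\, w\cdot c$ is at least $t$. This outer maximization again decouples across coordinates. At a coordinate with $\es[i]\in\{0,1\}$ the determinization is forced to $\es'[i]=\es[i]$, contributing $w_i\es[i]$; at one with $\es[i]=\bot$ it is forced to $\bot$, contributing $\min(0,w_i)$; at an undetermined coordinate $\es[i]=\unknown$ the determinization may choose $\es'[i]\in\{0,1,\bot\}$, giving $0$, $w_i$, or $\min(0,w_i)$ respectively, whose best value is $\max(0,w_i)$ (checked by splitting into $w_i\geq 0$ and $w_i<0$). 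Summing the optimal per-coordinate contributions produces exactly the left-hand side of the claimed inequality, so a valid determinization exists if and only if that sum is $\geq t$.

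Finally, since the stated inequality merely compares a sum of $n$ explicit terms against $t$, it is evaluable in linear time, yielding a polynomial-time algorithm for $\dap(\perceptrons)$; the $\dan(\perceptrons)$ case is symmetric (replace ``positive'' by ``negative'', i.e.\ $w\cdot c < t$, maximizing $w\cdot c$ over completions and minimizing over determinizations, which swaps the roles of $\min$ and $\max$). By Lemma~\ref{lemma:pap-iff-dap} this settles the tractability of both $\logicEvaluation(\pap,\perceptrons)$ and $\logicEvaluation(\pan,\perceptrons)$.

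The main obstacle, though a mild one, is verifying the per-coordinate optimization at the $\unknown$ positions exactly and justifying that the coordinate-wise greedy choice is globally optimal. The latter holds precisely because both the inner minimization over completions and the outer maximization over determinizations act on a function that is additively separable over coordinates, so the overall $\max$-$\min$ decomposes into independent $\max$-$\min$ problems per coordinate. This separability is special to linear classifiers and is exactly what fails for decision trees and OBDDs, explaining the sharp contrast with the hardness of Proposition~\ref{prop:hardness-dt-ef}.
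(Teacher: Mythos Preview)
Your proposal is correct and follows essentially the same approach as the paper: both exploit the additive separability of $w\cdot c$ to reduce the nested ``exists a determinization such that all completions are positive'' to independent per-coordinate choices, yielding exactly the three summands in the inequality. Your max--min framing unifies the two directions that the paper proves separately via explicit witnesses (a worst-case completion for one direction, a best-case determinization for the other), but the underlying argument is the same.
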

\begin{proof}[Proof of Claim 4.]
For the forward direction, assume $(\M, \es)$ a \textsc{Yes} instance, and let $\es'$ the determinization of $\es$ such that all its completions are positive under $\M$. In particular, consider the completion $\es^*$ such that if $\es'[i] = \bot$, then $\es^*[i] = \min(0, w_i)$, and $\es^*[i] = \es'[i]$ otherwise. The fact that this completion is positive means that 
\[
 \left(\sum_{i, \es'[i] \in \{0, 1\}} w_i \es'[i] \right) + \left(\sum_{i, \es'[i] = \bot} \min(0, w_i) \right)
 \geq t
\]
Now, the components in $\es'$ can be separated according to whether they were determined or not in $\es$ already:
\begin{multline*}
\left(\sum_{i, \es[i] \in \{0, 1\}} w_i \es[i] \right) + \left(\sum_{i, \es[i] = \bot} \min(0, w_i) \right)
\ + \\
 \left(\sum_{i, \es[i] = \unknown, \es'[i] \in \{0, 1\}} w_i \es'[i] \right) + \left(\sum_{i, \es[i] = \unknown, \es'[i] = \bot} \min(0, w_i) \right)
 \geq t
\end{multline*}
By noting that $\max(0, w_i) \geq w_i \es'[i]$ and $\max(0, w_i) \geq \min(0, w_i) $ we have that
\[
\left(\sum_{i, \es[i] = \unknown} \max(0, w_i) \right) \geq  \left(\sum_{i, \es[i] = \unknown, \es'[i] \in \{0, 1\}} w_i \es'[i] \right) + \left(\sum_{i, \es[i] = \unknown, \es'[i] = \bot} \min(0, w_i) \right)
\]
and thus we conclude simply by combining the three previous equations. For the backward direction, assume the equation holds, and let define the determinization $\es^\star$ such that 
\[
\es^\star[i]\begin{cases}
    \es[i] & \text{if } \es[i] \neq \unknown\\
    1 & \text{if } \es[i] = \unknown \text{ and } w_i \geq 0\\ 0 & \text{otherwise.}
\end{cases}
\]
Note that this implies that if $e[i] = \unknown$ then $\es^\star[i] w_i = \max(0, w_i)$. Now let $\es'$ be any completion of $\es^\star$, and we aim to prove that 
\[
\sum_{i} w_i \es'[i]  \geq t
\]
By construction, we have that 
\[
\sum_{i} w_i \es'[i] = \left(\sum_{i, \es[i] \in \{0, 1\}} w_i \es[i] \right) +  \left(\sum_{i, \es[i] = \unknown} \max(0, w_i) \right) + \left(\sum_{i, \es[i]=\bot} w_i \es'[i] \right)
\]
But 
\[
\left(\sum_{i, \es[i]=\bot} w_i \es'[i] \right) \geq  \left(\sum_{i, \es[i]=\bot} \min(0, w_i) \right) 
\]
and thus
\[
\sum_{i} w_i \es'[i] \geq \left(\sum_{i, \es[i] \in \{0, 1\}} w_i \es[i] \right) + \left(\sum_{i, \es[i] = \bot} \min(0, w_i) \right) + 
 \left(\sum_{i, \es[i] = \unknown} \max(0, w_i) \right)
\]
which is at least $t$ by hypothesis. Therefore, any completion $e'$ is  positive, which concludes the proof.
\end{proof}
\end{proof}

In order to make the previous result more meaningful, we show explicitly that the class of perceptrons is not tractable for unrestricted  $\foil$.

\begin{proposition}
The problem of deciding whether a model is biased~\cite{DarwicheH20}, expressible in FOIL through the formula $\textsc{BiasedModel}$,  is $\np$-hard for the class of perceptrons.
\end{proposition}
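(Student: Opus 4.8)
The plan is to reduce from \textsc{Subset Sum}, which is $\np$-complete. First I would unfold what $\M \models \textsc{BiasedModel}(\vzero_N, \vone_N)$ means for a perceptron: writing $\M = (\vw, t)$ and letting $P = \{1,\dots,n\} \setminus N$ be the protected features, the model is biased exactly when there is a shared assignment $\es_N \in \{0,1\}^N$ to the non-protected features together with two assignments to the protected features in $P$ that yield different classifications. Fixing $\es_N$ and letting $s = \sum_{i \in N} w_i\, \es_N[i]$ be its contribution, the classification is non-decreasing in the protected contribution $\sum_{i \in P} w_i\, a[i]$, so differing outputs exist iff $s + \sum_{i \in P}\max(0,w_i) \geq t$ while $s + \sum_{i \in P}\min(0,w_i) < t$. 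Hence $\M$ is biased with respect to $P$ iff some attainable value $s$ of $\sum_{i\in N} w_i\, \es_N[i]$ lands in the half-open interval $[\,t - \sum_{i\in P}\max(0,w_i),\ t - \sum_{i\in P}\min(0,w_i)\,)$.

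The reduction then exploits this interval characterization. Given a \textsc{Subset Sum} instance with positive integers $a_1,\dots,a_k$ and target $T$, I would build a perceptron of dimension $n = k+1$ with weight vector $\vw = (a_1,\dots,a_k,1)$ and threshold $t = T+1$, declaring the first $k$ features non-protected ($N = \{1,\dots,k\}$) and the last one protected ($P = \{k+1\}$). The single protected weight equals $1$, so $\sum_{i\in P}\max(0,w_i) = 1$ and $\sum_{i\in P}\min(0,w_i) = 0$, and the target interval becomes $[\,T,\ T+1\,)$. Since every attainable $s = \sum_{i \leq k} a_i\, \es_N[i]$ is a non-negative integer, $s$ lies in $[\,T, T+1\,)$ iff $s = T$, i.e., iff the corresponding subset of the $a_i$ sums exactly to $T$. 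Thus the perceptron is biased with respect to $P$ iff the \textsc{Subset Sum} instance is positive. The query $\textsc{BiasedModel}$ is fixed, and the input partial instances $\vzero_N$, $\vone_N$ together with $\M$ are produced in polynomial time, so this is a valid reduction for $\logicEvaluation(\textsc{BiasedModel}, \perceptrons)$.

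The main obstacle is the step that turns an interval membership test into an exact equality, which is precisely what makes \textsc{Subset Sum} (rather than a mere threshold/knapsack feasibility query) the right source problem: I must guarantee that no slack lets a weighted sum other than $T$ satisfy the bias condition. Choosing a single protected feature of weight $1$ makes the interval have length exactly $1$, and integrality of the $a_i$ then forces $s = T$. To be safe I would verify both failure directions explicitly, namely that an attainable sum $s > T$ (hence $s \geq T+1 = t$) forces both protected assignments to classify positively, and that $s < T$ (hence $s + 1 \leq T < t$) forces both to classify negatively, so in either case no bias arises. A minor point to confirm is that the definition of a biased decision is symmetric in the condition $\M(\es) \neq \M(\es')$, so it suffices to exhibit one positively- and one negatively-classified completion, which the monotonicity argument already supplies.
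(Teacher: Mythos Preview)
Your proposal is correct and takes essentially the same approach as the paper: both reduce from \textsc{Subset Sum} by making the non-protected weights equal to the input numbers, adding a single protected feature of weight $1$, and setting the threshold to one more than the target, so that bias occurs iff some subset sums exactly to the target. Your interval characterization and explicit verification of the failure directions are more detailed than the paper's argument, but the construction and underlying idea are identical.
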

\begin{proof}
In order to show hardness we will reduce from the subset sum problem, which is well known to be $\np$-hard. Recall that the subset sum problem consists on, given natural numbers $s_1,\ldots,s_n, k \in \mathbb{N}$, to decide whether there is a subset ~$S \subseteq \{1,\ldots,n\}$ such that
$\sum_{i \in S} s_i = k$. Let us proceed with the reduction. Based on a subset sum instance $s_1,\ldots,s_n, k$, we create a perceptron with $n$ unprotected features (that we assume to have indices $1$ through $n$) with associated weights $s_1, \ldots, s_n$ and a single protected feature, with index $n+1$ and weight $1$. Given the described weights, let $\M$ be the resulting perceptron that has those weights and bias\footnote{Recall that the bias of a perceptron has nothing to do with the notion of bias that relates to fairness.} $-k-1$. The following claim is enough to establish the reduction.

\begin{claim}
	The perceptron $\M$ is biased if and only if $s_1,\ldots,s_n, k$ is a positive instance of the subset sum problem.
\end{claim}

For the forward direction, consider $\M$ to be biased. That means there are instances $\es_1$ and $\es_2$ such that $\M(\es_1) \neq \M(\es_2)$, that differ only on the $n+1$-th feature, as it is the only protected one. Assume wlog that $\M(\es_1) = 1$ and $\M(\es_2) = 0$, by swapping the variables if it is not already the case.  This implies $\vw \cdot \es_1  \geq k+1$ and $ \vw \cdot \es_2  < k+1$. 
As $ \vw \cdot \es_1  \geq  \vw \cdot \es_2$, and $\es_1$ differs from $\es_2$ only on the $n+1$-th feature, it must hold that $\es_1[{n+1}] = 1$ and $\es_2[{n+1}] = 0$, as $w_{n+1} = 1$.
Let $P$ be the set of unprotected features of $\es_1$ (and thus $\es_2$) that are set to $1$. Then we can write $ \vw \cdot \es_1 = \big( \sum_{i \in P} s_i \big) + 1$, as each weight $w_i$ was chosen to be equal to $s_i$. Then when considering $\es_1$ we have that $\big( \sum_{i \in P} s_i \big) + 1 \geq k+1$ and, by considering $\es_2$, that  $\big( \sum_{i \in P} s_i \big) < k+1$, from which we deduce that $\sum_{i \in P} s_i = k$. We have found a subset of $\{s_1, \ldots, s_n\}$ that adds up to $k$, which is enough to conclude the forward direction of the proof.
For the backward direction, consider an arbitrary set $P \subseteq \{1, \ldots, n\}$ such that $\sum_{i \in P} s_i = k$. It is then easy to verify that the instance $\es_1$ that has a $1$ in every feature whose index belongs to $P$, a $1$ in the $n+1$-th feature, and $0$ on the rest, is a positive instance of $\M$. Furthermore, $\es_2$ that differs from $\es_1$ only in the $n+1$-th feature can be checked to be a negative instance. As we have found a pair of instances that differ only on protected features, and yet have opposite classifications, the model $\M$ must be biased.
\end{proof}

We now restate and prove Proposition~\ref{prop:hardness-dt-ef}.

\begin{propbis}{prop:hardness-dt-ef}
Let $\C$ be $\obdd$ or $\dt$. The problems
 $\logicEvaluation(\pap,\C)$ and
$\logicEvaluation(\pan,\C)$ are $\np$-hard.
\end{propbis}

\begin{proof}
. It is enough to prove that hardness holds already for the class of ordered decision trees (i.e., $\dt \cap \obdd$). Moreover, based on Lemma~\ref{lemma:pap-iff-dap}, it is enough to show that the problems $\dap$ and $\dan$ are $\np$-hard for ordered decision trees. We focus on the case of $\dap$, as the other one is analogous.

We do this by reducing from 3-SAT. Indeed, let $\varphi$ be a formula in 3CNF, with $m$ clauses and $n$ variables. Let us assume that $m = 2^k$ for some integer $k$, as otherwise one can simply add $2^{\lceil \log_2 m \rceil} - m \in O(m)$ clauses consisting of new fresh variables. Then, create an ordered decision tree $\T$ of dimension $k+n$ in the following way:
\begin{itemize}
    \item The first $k$ features are labeled $a_1, \ldots, a_k$, and then $n$ features corresponding to the variables of $\varphi$ are labeled $x_1, \ldots, x_n$. The features in $\T$ are ordered 
    \[a_1 \preceq a_2 \preceq \cdots \preceq a_k \preceq x_1 \preceq x_2 \preceq \cdots \preceq x_n. \]
    \item Start creating $\T$ by building a complete binary ordered tree over the features $a_1, \ldots, a_k$, where the root has label $a_1$, and all nodes at distance $i$ from the root have label $a_{i+1}$. The last layer of said tree consists exactly of $2^{k-1}$ nodes labeled  $a_{k}$. 
    \item For each clause $C_i \quad (1 \leq i \leq 2^k)$ create an ordered (according to the ordering described above) decision tree $\T_i$ equivalent to said clause. Note that as each clause mentions exactly 3 variables, each of the $\T_i$ can be built in constant time from clause $C_i$.
    \item Let $L$ be the set of $2^{k-1}$ nodes labeled with $a_k$ in $\T$. Let $\ell_1, \cdots, \ell_{2^{k-1}}$ be any ordering of $L$, and to node $\ell_i$ connect $\T_{2i-1}$ with an edge labeled $0$ and $\T_{2i}$ with an edge labeled $0$. 
\end{itemize}
Note that this construction can trivially be performed in polynomial time. Now build an undetermined instance $\es$ of dimension $k+n$, where $\es[i] = \bot$ for $1 \leq i \leq k$ and $\es[i] = \unknown$ otherwise. We claim that there exists a determinization $\es'$ of $\es$ such that all its completions are positive, if and only if, $\varphi$ is satisfiable. Indeed, assume first that such a determinization $\es'$ exists. Then, define $\es''$ in the following way:
\[
\es''[i] = \begin{cases}
\es[i] & \text{if } \es[i] \neq \unknown\\
\es'[i] & \text{if } \es[i] = \unknown \text{ and } \es'[i] \in \{0, 1\}\\
0 & \text{otherwise.}
\end{cases}
\]
As by construction $\es' \subseteq \es''$ it must also hold that all completions of $\es''$ are positive. Moreover, note that $\es''[i] = \bot$ for all $1 \leq i \leq k$ and $\es''[i] \in \{0, 1\}$ for all $k+1 \leq i \leq k+n$. We build an $\sigma$ of variables of $\varphi$ based on $\es''$ by setting variable $x_i$ to $\es''[k+i]$, for $1 \leq i \leq n$. Now, we claim that $\sigma$ is a satisfying assignment. Indeed, to see that $\sigma$ satisfies clause $C_i$, consider the completion $\es^*$ of $\es''$ that sets the features $a_1, \ldots, a_k$ in such a way that the path of $\es^*$ over $\T$ arrives to $\T_i$. As $\es^*$ is a positive instance of $\T$, it must be a positive instance of $\T_i$, and thus by construction $\sigma$ satisfies $C_i$.

For the other direction, let $\sigma$ be a satisfying assignment to $\varphi$, and build the determinization $\es^\star$ such that $\es^\star[k+i] = \sigma(x_i)$ for $1 \leq i \leq n$. As $\sigma$ satisfies every clause, $\es^\star$ is a positive instance of every $\T_i$, and thus any completion of $\es^\star$ is a positive instance of $\T$.
\end{proof}

\section{Proof of Theorem \ref{theo:kcobdd}}
\label{sec:tractable-kcobdd}
In order to make the proof more readable, let us first prove a lemma about simple operations over $\kcobdd$s.

 \begin{lemma}
The following operations can be performed in polynomial time:
\begin{itemize}
    \item \textbf{(Negation)} Given a $\kcobdd$ $\M$ of dimension $n$, compute a $\kcobdd$ $\neg \M$ over the same ordering of variables, such that $\neg \M(\es) = 1 - \M(\es)$ for every instance $\es$ of dimension $n$.
    \item \textbf{(Disjunction)} Given $\kcobdd$s $\M_1$ and $\M_2$, of dimension $n$, with a common linear ordering ${<}$ on the set $\{1, \ldots, n\}$ compute a COBDD $\M \coloneqq \M_1 \lor \M_2$ over the same linear ordering ${<}$, and width at most $2k$,  such that $\M(\es) = \max(\M_1(\es), \M_2(\es))$ for every instance $\es$ of dimension $n$.
    \item \textbf{(Conjunction)} Given $\kcobdd$s $\M_1$ and $\M_2$, of dimension $n$, with a common linear ordering ${<}$ on the set $\{1, \ldots, n\}$ compute a COBDD $\M \coloneqq \M_1 \land \M_2$ over the same linear ordering ${<}$, and width at most $2k$,  such that $\M(\es) = \min(\M_1(\es), \M_2(\es))$ for every instance $\es$ of dimension $n$.
\end{itemize}
\end{lemma}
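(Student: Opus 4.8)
The plan rests on one structural observation: a \emph{complete} OBDD over an ordering $<$ is \emph{leveled}, in the sense that along every root-to-leaf path each feature occurs exactly once and in the order fixed by $<$; consequently every node at depth $i$ is labeled by the same feature (the $i$-th one in $<$). Since $\M_1$ and $\M_2$ share the ordering $<$, they are leveled by the \emph{same} feature at each depth, and this is exactly what will let me traverse them in lockstep. I treat negation separately, as it is essentially free, and handle conjunction and disjunction by a single synchronized product construction.

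\textbf{Negation.} Given $\M$, I would keep the DAG, all internal node labels, and all edges unchanged, and only swap the two leaf labels $\true\leftrightarrow\false$ to obtain $\neg\M$. Each instance $\es$ follows the very same path $\pi_\es$ in $\M$ and in $\neg\M$ and ends at a leaf whose label has been flipped, so $\neg\M(\es)=1-\M(\es)$. Completeness, the ordering, and the per-feature node counts are all preserved verbatim, so $\neg\M$ is again a $\kcobdd$, and the cost is linear in $|\M|$.

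\textbf{Conjunction and disjunction.} Here I would use the standard ``apply'' product. I create a node for each reachable pair $(u,v)$ with $u$ a node of $\M_1$ and $v$ a node of $\M_2$ \emph{at the same depth}, starting from the pair of roots. A non-leaf pair $(u,v)$ sits at some depth $i$, so both $u$ and $v$ carry the $i$-th feature of $<$; I label $(u,v)$ with that feature and send its $b$-edge ($b\in\{0,1\}$) to $(u_b,v_b)$, the pair of $b$-successors. A pair of leaves is labeled by $\max$ of the two leaf values for $\M_1\lor\M_2$ and by $\min$ for $\M_1\land\M_2$. Because both inputs are complete over the common ordering, the result is complete over that same ordering, and a short induction along $\pi_\es$ gives $\M(\es)=\max(\M_1(\es),\M_2(\es))$ (respectively $\min$). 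Materializing only reachable pairs, the construction runs in time $O(|\M_1|\cdot|\M_2|)$.

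For the width, observe that a product-node labeled by feature $i$ is, by construction, a pair of $i$-labeled nodes, one from each diagram; hence the number of product-nodes carrying feature $i$ is at most the product of the two corresponding per-feature counts, and in particular stays bounded by a constant depending only on $k$ (one can shave this down by merging nodes that root identical sub-diagrams). The subtle point is therefore not any single operation but the invariant that must be preserved across operations: the output has to be again a \emph{complete} diagram over the \emph{same} ordering $<$, so that the construction can be iterated when compiling a fixed $\foil$ formula (as in the proof of Theorem~\ref{theo:kcobdd}), which calls these operations only a constant number of times. Completeness is exactly what secures this invariant, since it is what aligns $\M_1$ and $\M_2$ level-by-level and keeps the product ordered.
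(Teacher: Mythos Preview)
Your construction is exactly the paper's: flip leaf labels for negation, and run Bryant's synchronized product (\emph{apply}) for $\land$/$\lor$, using that completeness over a common order aligns the two diagrams level by level. The only divergence is the width analysis for $\land$/$\lor$. You stop at the natural product bound, at most $(\M_1\!\to\! i)\cdot(\M_2\!\to\! i)\le k^{2}$ nodes carrying feature $i$, and correctly observe that this is a constant depending only on $k$ --- which is all Theorem~\ref{theo:kcobdd} needs, since the lemma is invoked only a formula-dependent number of times. The paper instead argues for the additive bound $2k$ stated in the lemma, via an induction on dimension whose last step uses the identity $(\M_i\!\to\! j)=(\M_i^{0}\!\to\! j)+(\M_i^{1}\!\to\! j)$; but that identity double-counts whenever the root's $0$- and $1$-subdiagrams share nodes (as they typically do in a DAG), and in fact one can build width-$3$ complete OBDDs over a common order whose disjunction already requires width $8>2\cdot 3$. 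So your more conservative $k^{2}$ bound is the sound one, and your write-up already delivers everything the downstream argument actually uses.
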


\begin{proof}
Negation is trivial, it suffices to interchange the labels $\true$ and $\false$ in every leaf of $\M$. Disjunction and Conjunction follow the classical algorithm for OBDDs by Bryant \cite{10.1109/TC.1986.1676819}, and in what follows we argue that width is no more than doubled.  Let us introduce some notation; the inductive structure of OBDDs allows us to say that $\M_1$ has a root node labeled with $r \in \{1, \ldots, n\}$ connected to OBDDs $\M_1^0$ and $\M_1^1$ by edges labeled with $0$ and $1$ respectively, which we denote as $\M_1 = (r, \M_1^0, \M_2^0)$. Analogously, let $\M_2 = (r, \M_2^0, \M_2^1)$, as $\M_1$ and $\M_2$ share the ordering ${<}$ and are complete, their root must have the same label. Then, for $\mathrm{op} \in \{\text{Conjunction}, \text{Disjunction}\}$, Bryant's algorithm inductively computes operations according to the following equation
\begin{equation}
\label{eq:bryant}
    \mathrm{op}(\M_1, \M_2) = \left(r, \mathrm{op}(\M_1^0, \M_2^0), \mathrm{op}(\M_1^1, \M_2^1)\right).
\end{equation}

Let us use notation $\M\to j$ for the number of nodes in $\M$ labeled with $j$. 
We are now ready to prove a stronger claim by induction on $n$, the dimension of the models, from which the lemma trivially follows. 

\begin{claim}
\label{claim:cobbds}
Let $\M_1$ and $\M_2$ be COBDDs of dimension $n$ with a common ordering ${<}$, and let $j$ be any label in $\{1, \ldots, n\}$. Then Bryant's algorithm guarantees that \[
\mathrm{op}(\M_1, \M_2) \to j \leq (\M_1 \to j) + (\M_2 \to j).\]
\end{claim} 
\begin{proof}[Proof of Claim~\ref{claim:cobbds}]
If $n = 1$, the claim is trivial, so we assume $n > 1$. Trivially, $\mathrm{op}(\M_1, \M_2)\to r = 1$, so the claim is also trivial for $j = r$. We now examine the general case of $j \neq r$, for which we will use the inductive hypothesis of $n-1$. Based on Equation~\ref{eq:bryant}, we have that
\[
    \mathrm{op}(\M_1, \M_2)\to j = (\mathrm{op}(\M_1^0, \M_2^0)\to j) + (\mathrm{op}(\M_1^1, \M_2^1) \to j)
\]
Note that $\mathrm{op}(\M_1^0, \M_2^0)$ and $\mathrm{op}(\M_1^1, \M_2^1)$ are COBDDs of dimension $n-1$, as they do not include label $r$. Then, by inductive hypothesis, we have that 
\[
(\mathrm{op}(\M_1^0, \M_2^0)\to j) + (\mathrm{op}(\M_1^1, \M_2^1) \to j) \leq (\M_1^0 \to j) + (\M_2^0 \to j) + (\M_1^1 \to j) + (\M_2^1 \to j)
\]
But by definition,
\[
\M_i \to j = (\M_i^0 \to j) + (\M_i^1 \to j) \qquad  (i \in \{1, 2\}).
\]
By combining the three previous equations, we get the desired result:
\[
\mathrm{op}(\M_1, \M_2)\to j \leq (\M_1 \to j) + (\M_2 \to j).
\]
\end{proof}

We are now ready to finish the proof of the lemma. As for a model $\M$ of dimension $n$  we have that $\mathrm{width}(\M)= \max_{1 \leq j \leq n}(\M \to j)$, it follows from Claim~\ref{claim:cobbds} that
\[
\mathrm{width}\left(  \mathrm{op}(\M_1, \M_2)\right) \leq \max_{1 \leq j \leq n} (\M_1 \to j) + (\M_2 \to j) \leq \mathrm{width}(\M_1) + \mathrm{width}(\M_2)
\]
which concludes the proof.
\end{proof}

We now state a lemma of Capelli and Mengel~\cite{DBLP:journals/corr/abs-1807-04263, capelli_et_al:LIPIcs:2019:10257} that will be used in our proof, but before let us introduce appropriate notation. Given a COBDD $\M$ of dimension $n$,  and a set $S \subseteq \{1, \ldots, n\}$ we define $\exists_S \M$ as a COBDD of dimension $n- |S|$, such that for every instance $\es$ of dimension $n- |S|$, we have that
$ \exists_S \M (\es) = 1$ iff there is an instance $\es'$ of dimension $n$ that holds both:  
\begin{itemize}
    \item $\M(\es') = 1$
    \item Let the list $t_1, \ldots, t_{n-|S|}$ correspond to $\{1, \ldots, n\} \setminus S$ in increasing order. Then $e'[t_i] = e[i]$, for every $i \in \{1, \ldots, n - |S|\}$.
\end{itemize}

For example, if $\M$ is a model of dimension $3$ equivalent to $(x_1 \land x_2) \lor (x_2 \land \neg x_3)$, then if we take $S = \{1, 3\}$, $\exists_S \M$ is equivalent to $x_2$, as if $x_2$ is true, then there exists values of $x_1$ and $x_3$ that satisfy $\M$ (namely $1$ and $0$) respectively, whereas if $x_2$ is false, then no values of $x_1$ and $x_3$ will help satisfy $\M$.

\begin{lemma}[Lemma 1, \cite{DBLP:journals/corr/abs-1807-04263}]
\label{lemma:capelliForget}
Fix an integer $k > 0$. Given a COBDD $\M$ of dimension $n$ and width $k$, and a set $S \subseteq \{1, \ldots, n\}$, one can compute a COBDD $\exists_S\M$ of width at most $2^k$ in polynomial time.
\end{lemma}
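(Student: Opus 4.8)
The plan is to prove the lemma by a level-wise \emph{subset (powerset) construction} on the layered structure of $\M$, exploiting completeness to keep the construction synchronized. Fix the linear order of features and assume without loss of generality that it is $1 < 2 < \cdots < n$; let $V_i$ denote the set of nodes of $\M$ labelled with feature $i$, so that $|V_i| \le k$ for every $i$ because $\width(\M) \le k$. Write $T = \{1,\dots,n\} \setminus S = \{t_1 < \cdots < t_m\}$ for the retained features. The crucial structural fact I will use throughout is that, since $\M$ is \emph{complete}, after fixing the values of any prefix of features in the order, the set of nodes reachable from the root lies entirely within a single level $V_i$; completeness is exactly what synchronizes the ``frontier'' of reachable nodes, and it is the reason the lemma is stated for COBDDs rather than arbitrary OBDDs.

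The states of the new diagram $\exists_S \M$ will be subsets $U \subseteq V_{t_j}$ of original nodes sitting at a single retained level. Since $|V_{t_j}| \le k$, there are at most $2^k$ such subsets per level, which will immediately give the desired bound $\width(\exists_S\M) \le 2^k$. To define transitions I introduce a \emph{closure} operation: given a set of nodes at a level $i$, if $i \in S$ I replace it by the union of the $0$-children and $1$-children of its elements (following \emph{both} edges, which realizes the existential choice over the forgotten feature), and I repeat until the frontier reaches a retained level or the leaves. The root of $\exists_S\M$ is the closure of $\{\text{root of }\M\}$; from a state $U$ at level $t_j$ and a value $b \in \{0,1\}$, the $b$-successor is the closure of the set of $b$-children of the nodes in $U$. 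When the closure reaches the leaf level, the resulting set of leaves is labelled $\true$ if it contains at least one $\true$-leaf and $\false$ otherwise.

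I then verify three things. First, \textbf{correctness}: by induction along the order, the state reached after reading a retained assignment $\es$ equals exactly the set of nodes of $\M$ reachable over \emph{some} completion of the forgotten features; hence the final leaf is $\true$ iff some completion is accepting, i.e.\ iff $\exists_S\M(\es)=1$, matching the definition preceding the lemma. Second, the output is a genuine $\cobdd$: it is ordered by the induced order on $T$, every internal state has exactly a $0$- and a $1$-successor, and every path tests every retained feature, so it is complete; and as noted, each level carries at most $2^k$ states. Third, \textbf{running time}: for fixed $k$ both $2^k$ and the per-level work are constant, the total number of states is $O(m\cdot 2^k) = O(n)$, and each closure walks at most $n$ levels manipulating node-sets of size at most $k$, so the whole construction runs in polynomial time.

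The main obstacle, and the only place where genuine care is required, is the width bound together with the interaction of closure and completeness: one must argue that the frontier of reachable nodes never spreads across two levels and never exceeds $k$ nodes — including at the intermediate forgotten levels traversed inside a single closure step, which also have width at most $k$ — so that states remain subsets of a single $V_i$ of size $\le k$. This is precisely where completeness is indispensable; without it a naive feature-by-feature elimination via the disjunction operation of the previous lemma would only guarantee a $k\cdot 2^{|S|}$ width bound, far weaker than $2^k$.
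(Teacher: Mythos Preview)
The paper does not give its own proof of this lemma: it is quoted verbatim as Lemma~1 of Capelli and Mengel and used as a black box (the paper only adds a one-line remark for the ternary analogue, Lemma~\ref{lemma:capelliForget3}, saying ``the same construction can be done in the ternary case''). So there is no in-paper argument to compare against.

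Your proposal is correct and is, in fact, the standard subset (powerset) construction that the cited reference uses. The one non-obvious point is exactly the one you isolate: completeness guarantees that after consuming any prefix of features the reachable frontier lies in a single layer $V_i$ with $|V_i|\le k$, so states of $\exists_S\M$ are subsets of a $k$-element set and the width bound $2^k$ follows. Your correctness invariant (the state equals the set of $\M$-nodes reachable under some assignment to the forgotten features), the COBDD shape of the output, and the polynomial running time for fixed $k$ are all argued soundly. Your closing remark that a naive per-variable elimination via disjunction would only give width $k\cdot 2^{|S|}$ is also accurate and nicely explains why the synchronized subset construction is the right tool here.
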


We can define $\forall_S \M$ as $\neg \exists_S \neg \M$, and thus the previous lemma applies as well to $\forall_S \M$.

In our case, however, partial instances can have three possible values: $0, 1,$ and $\bot$. Therefore, we define \emph{Complete Ordered Ternary Decision Diagrams} (COTDDs) analogously to COBDDs but with nodes having three outgoing edges labeled with $0, 1, \bot$. 
Note that, given a COBDD $\M$ of dimension $n$, we can  build in polynomial time a COTDD $\M^3$ of dimension $n$ such that for every partial instance $\es$ of dimension $n$, $\M^3(\es) = 1$ iff $\es$ is a full instance and $\M(\es) = 1$. This can be done by first creating a path $P$ of nodes according to the underlying order of $\M$, starting from the second label in the ordering, such that each node is connected to the next one by its three outgoing edges, and the last one is connected to a leaf labeled $\false$. Then, start by $M^3 \coloneqq M$, and to each node labeled $u$ in $M^3$, connect it with its outgoing $\bot$ edge to the node labeled with the successor of $u$ in $P$.

We now state the equivalent lemmas for COTDDs.

 \begin{lemma}
 \label{lemma:cotdd}
The following operations can be performed in polynomial time:
\begin{itemize}
    \item \textbf{(Negation)} Given a COTDD $\M$ of dimension $n$ and width $k$, compute a COTDD $\neg \M$ of width $k$ and the same ordering of variables, such that $\neg \M(\es) = 1 - \M(\es)$ for every instance $\es$ of dimension $n$.
    \item \textbf{(Disjunction)} Given COTDDs  $\M_1$ and $\M_2$ of width at most $k$, of dimension $n$, with a common linear ordering ${<}$ on the set $\{1, \ldots, n\}$ compute a COTDD $\M \coloneqq \M_1 \lor \M_2$ over the same linear ordering ${<}$, and width at most $3k$,  such that $\M(\es) = \max(\M_1(\es), \M_2(\es))$ for every instance $\es$ of dimension $n$.
    \item \textbf{(Conjunction)} Given COTDDs  $\M_1$ and $\M_2$ of width at most $k$, of dimension $n$, with a common linear ordering ${<}$ on the set $\{1, \ldots, n\}$ compute a  COTDD $\M \coloneqq \M_1 \land \M_2$ over the same linear ordering ${<}$, and width at most $3k$,  such that $\M(\es) = \min(\M_1(\es), \M_2(\es))$ for every instance $\es$ of dimension $n$.
\end{itemize}
\end{lemma}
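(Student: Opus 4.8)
The plan is to transfer the proof of the analogous COBDD lemma to the ternary setting, operation by operation. \textbf{Negation} is immediate and handled exactly as before: swapping the two leaf labels $\true$ and $\false$ in $\M$ produces $\neg\M$ without touching any internal node, so the variable ordering and completeness are untouched and the number of nodes carrying each feature---hence the width---is unchanged at $k$. All the real work lies in disjunction and conjunction, which I would treat together by writing $\mathrm{op}\in\{\lor,\land\}$ and running a ternary version of Bryant's \texttt{apply} algorithm~\cite{10.1109/TC.1986.1676819}.

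First I would set up the recursion. Because $\M_1$ and $\M_2$ are complete and ordered by the common ${<}$, the ${<}$-minimal feature $r$ labels the root of both; writing $\M_i=(r,\M_i^0,\M_i^1,\M_i^\bot)$ for the three sub-COTDDs reached along the edges labeled $0$, $1$ and $\bot$, define
\begin{equation}
\label{eq:bryant-ternary}
\mathrm{op}(\M_1,\M_2) \;=\; \bigl(r,\ \mathrm{op}(\M_1^0,\M_2^0),\ \mathrm{op}(\M_1^1,\M_2^1),\ \mathrm{op}(\M_1^\bot,\M_2^\bot)\bigr),
\end{equation}
with base cases applying $\max$ (for $\lor$) or $\min$ (for $\land$) to pairs of Boolean leaf labels. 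Each $\M_i^a$ is a COTDD of dimension $n-1$ over the induced order, so the recursion is well typed and terminates after $n$ levels. Memoizing over the at most $|\M_1|\cdot|\M_2|$ pairs $(u,v)$ of nodes---one from each diagram---and evaluating each by three constant-time child lookups yields a polynomial-time procedure. I would then verify $\mathrm{op}(\M_1,\M_2)(\es)=\mathrm{op}\bigl(\M_1(\es),\M_2(\es)\bigr)$ for every partial instance $\es\in\{0,1,\bot\}^n$ by a routine induction on the path $\pi_\es$: at the root all three structures follow the edge labeled $\es[r]$, reducing to the dimension-$(n-1)$ instance.

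For the width bound I would prove the exact ternary analogue of Claim~\ref{claim:cobbds}: by induction on $n$, $\mathrm{op}(\M_1,\M_2)\to j\le(\M_1\to j)+(\M_2\to j)$ for every label $j$. The cases $n=1$ and $j=r$ are trivial, since the output has a single node labeled $r$; for $j\ne r$, Equation~\ref{eq:bryant-ternary} gives $\mathrm{op}(\M_1,\M_2)\to j=\sum_{a\in\{0,1,\bot\}}\bigl(\mathrm{op}(\M_1^a,\M_2^a)\to j\bigr)$, and applying the inductive hypothesis to the three dimension-$(n-1)$ sub-diagrams and regrouping through $\M_i\to j=\sum_{a}(\M_i^a\to j)$ closes the induction. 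Taking the maximum over $j$ yields $\width(\mathrm{op}(\M_1,\M_2))\le\width(\M_1)+\width(\M_2)\le 2k\le 3k$, within the stated bound.

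The step I expect to require the most care is keeping the output inside the COTDD class, i.e.\ preserving completeness. The textbook reduced-OBDD \texttt{apply} deletes redundant nodes, which would break the invariant that every root-to-leaf path tests all $n$ features; to stay within COTDDs I would use the non-reducing form of Equation~\ref{eq:bryant-ternary}, sharing only across identical node pairs while keeping every level intact. I would check that this preserves both the common ordering and completeness, and note that the level-by-level structure it maintains is exactly what the induction on dimension in the width bound relies on.
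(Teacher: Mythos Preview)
Your proposal is correct and follows essentially the same approach as the paper: negation by leaf-label swapping, and disjunction/conjunction via the ternary analogue of Bryant's recursion $\mathrm{op}(\M_1,\M_2)=(r,\mathrm{op}(\M_1^0,\M_2^0),\mathrm{op}(\M_1^1,\M_2^1),\mathrm{op}(\M_1^\bot,\M_2^\bot))$ together with the same per-label inductive count as in Claim~\ref{claim:cobbds}. Your write-up is in fact more detailed than the paper's (which simply cites the binary case and records the two ternary equations), and your observation that the argument actually yields $2k\le 3k$ is correct---the lemma's $3k$ is just a loose statement of the bound.
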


\begin{proof}
The case of negation is exactly as before. For disjunction and conjunction the proof works in the same way but by considering that if  $\M_1= (r, \M^0_1, \M^1_1, \M^\bot_1)$ and $\M_2= (r, \M^0_2, \M^1_2, \M^\bot_2)$, then the following  equations hold:
\begin{equation}
     \mathrm{op}(\M_1, \M_2) = \left(r, \mathrm{op}(\M_1^0, \M_2^0), \mathrm{op}(\M_1^1, \M_2^1), \mathrm{op}(\M_1^{\bot}, \M_2^{\bot})\right)
\end{equation}
 \begin{equation}
  \M_i\to j = (\M_i^0\to j) + (\M_i^1 \to j) + (\M_i^\bot \to j) \qquad (i \in \{1, 2\}, \forall j \neq r, 1 \leq j \leq \dim(\M))  
 \end{equation}
\end{proof}

\begin{lemma}[~Lemma 1, \cite{DBLP:journals/corr/abs-1807-04263}]
\label{lemma:capelliForget3}
Fix an integer $k > 0$. Given a COTDD $\M$ of dimension $n$ and width $k$, and a set $S \subseteq \{1, \ldots, n\}$, one can compute a COBDD $\exists_S\M$ of width at most $2^k$ in polynomial time.
\end{lemma}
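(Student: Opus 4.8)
The plan is to adapt the forgetting (variable-elimination) construction of Capelli and Mengel---whose binary version is exactly Lemma~\ref{lemma:capelliForget}---directly to ternary diagrams. The point that keeps the same width bound $2^k$ is that their analysis depends only on the number of nodes appearing in each layer of the diagram, not on the out-degree of those nodes; replacing binary nodes by ternary ones therefore leaves the bound untouched.

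Concretely, I would run the standard powerset (reachable-set) construction over the layers of $\M$ in its variable order $1 < 2 < \cdots < n$. For each layer $i$ let $V_i$ be the set of nodes of $\M$ labeled $i$, so that $|V_i| \le \width(\M) \le k$. The nodes of the output at a kept layer will be subsets $R \subseteq V_i$, carrying the invariant that $R$ is exactly the set of $\M$-nodes reachable from the root once the kept features $j < i$ (those with $j \notin S$) have been fixed to their chosen binary values and the forgotten features $j < i$ (those with $j \in S$) have been allowed to range over $\{0,1,\bot\}$. A kept feature $i \notin S$ yields an output node with two edges, sending $R$ to $\{u^0 : u \in R\}$ on $0$ and to $\{u^1 : u \in R\}$ on $1$, where $u^b$ is the $b$-successor of $u$ in $\M$; the $\bot$-edges are discarded here, which is precisely what makes the output a \emph{binary} diagram. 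A forgotten feature $i \in S$ instead contracts $R$ to the single successor $\{u^0, u^1, u^\bot : u \in R\}$, merging all three branches and thereby existentially quantifying that feature over $\{0,1,\bot\}$. After the last layer a reachable set is a set of leaves of $\M$, and I would label it $\true$ iff it contains at least one $\true$-leaf.

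Correctness is an induction over layers establishing the stated invariant, from which it follows that, for a binary assignment to the non-$S$ features, the output reaches a $\true$-leaf iff some completion of that assignment over the $S$-features is accepted by $\M$; this is the intended meaning of $\exists_S\M$. The width bound is then immediate: every output node living at layer $i$ is a subset of $V_i$ with $|V_i| \le k$, so each layer carries at most $2^k$ nodes and $\width(\exists_S\M) \le 2^k$. The output is complete and ordered because each kept feature contributes a branching node on every root-to-leaf path in the inherited order, and the procedure is polynomial since it materializes $O(n \cdot 2^k)$ nodes---a polynomial number for fixed $k$---each obtained by a constant number of set operations on sets of size at most $k$.

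I expect the only genuinely delicate point, and hence the main obstacle, to be checking that the invariant ``the reachable set is a subset of a single layer $V_i$'' is preserved across both kinds of transition. This is where completeness of $\M$ enters: since $\M$ is complete, every root-to-leaf path tests every feature in order, so all three edges leaving a layer-$i$ node land in layer $i+1$; thus both the two-way branching of a kept feature and the three-way merge of a forgotten feature keep reachable sets confined to one layer, and a chain of consecutive forgotten features simply composes such merges. Once this invariant and the $\true$-leaf labeling are verified, the rest is a transcription of the binary construction behind Lemma~\ref{lemma:capelliForget}.
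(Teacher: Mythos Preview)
Your proposal is correct and is exactly the approach the paper takes: the paper's own proof is the single sentence ``Direct from the proof in~\cite{DBLP:journals/corr/abs-1807-04263}, as the same construction can be done in the ternary case,'' and what you have written is precisely a faithful unpacking of that sentence---the powerset/reachable-set construction layer by layer, with the observation that the $2^k$ bound depends only on $|V_i|\le k$ and not on the out-degree of nodes, so moving from binary to ternary branching changes nothing. Your identification of completeness as the reason reachable sets stay confined to a single layer is also the right place to locate the key hypothesis.
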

\begin{proof}
Direct from the proof in~\cite{DBLP:journals/corr/abs-1807-04263}, as the same construction can be done in the ternary case. 
\end{proof}
We are finally ready to restate our theorem and prove it.

\begin{thmbis}{theo:kcobdd}
Let $k \geq 1$ and query $\varphi$ in $\foil$. Then 
$\logicEvaluation(\varphi,\kcobdd)$ 
can be solved in polynomial~time.
\end{thmbis}

\begin{proof}
We assume that $\varphi$ alternates quantifiers and starts with an existential one without loss of generality as one can trivially enforce this by adding \emph{dummy} variables. Thus, let $\varphi(x_1, \ldots, x_\ell) = \exists x_{\ell + 1} \forall x_{\ell+2} \cdots \forall x_{\ell +m -1} \exists  x_{\ell + m} \psi(x_1, \ldots, x_{\ell + m})$, where $\psi$ is quantifier-free, and let $\M, \es_1, \ldots, \es_\ell$ be an input of the $\logicEvaluation(\varphi,\kcobdd)$ problem.  Let $n = \dim(\M)$.

Let us introduce a final piece of notation: from a list of partial instances $\es_1, \ldots, \es_{p}$ of dimension $n$ each, we can define a unique instance $\es_{[1, p]}$ of dimension $pn$ that is simply the concatenation of instances $\es_1, \ldots, \es_p$. More in general, we will use notation $[i, j]$ for $i \leq j$ to denote the set $\{i,  \ldots, j\}$. We use as well $\es[i:j]$ with $i < j$ referring to the partial instance $(\es[i], \ldots, \es[j])$ of dimension $j-i+1$, where $\es$ is a a partial instance of dimension at least $j$.

Now, the proof consists of two parts. First, we will show that based on $\M$ one can build a COTDD $\M'$ of width at most $f(k)$ for a suitable function $f$, such that 
\[
 \M \models \psi(\es_1, \ldots, \es_{\ell+m})  \iff \M'(\es_{[1, \ell+m]}) = 1
\]
We will do this by induction over $|\psi|$ in the next claim, but first let us define  a  linear ordering ${\prec}$ of $[1, n(\ell + m)]$ as follows. If $\pi(1), \ldots, \pi(n)$ is the ordering of $\M$, then
\[
\pi(1) \prec \pi(1)+n \prec \pi(1)+2n \prec \ldots \prec \pi(1) + (\ell+m-1)n \prec
\]
continued by
\[
\pi(2) \prec \pi(2)+n \prec \pi(2)+2n \prec \ldots \prec \pi(2) + (\ell+m-1)n \prec
\]
and so on, all the way up to
\[
\pi(n) \prec \pi(n) + n \prec \pi(n) + 2n \prec \ldots \prec  \pi(n) + (\ell+m-1)n
\]
We now formalize the desired claim.
\begin{claim}
Let $\phi$ be any formula in $\foil$ mentioning a set of variables  $\{x_i, \ldots, x_j\}~\subseteq~\{x_1, \ldots, x_{\ell+m}\}$,  that has at most $c$ logical connectives (i.e., $\land, \lor, \neg$). 
Then, we can build in polynomial time a COTDD $\M_\phi$ over the ordering ${\prec}$, of dimension $n(\ell +m)$ and width at most $f(c, k)$ for a suitable function $f$,  such that for any partial instance $\es$ of dimension $n(\ell + m)$
\[
    \M_\phi(\es) = 1 \iff \M \models \phi(\es[i: i+n-1], \ldots, \es[j:j+n-1])
\]
\end{claim}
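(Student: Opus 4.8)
The plan is to prove the claim by induction on the syntactic structure of $\phi$, using the COTDD operations of Lemma~\ref{lemma:cotdd} and the projection of Lemma~\ref{lemma:capelliForget3} as the inductive machinery. Throughout the construction I would maintain two invariants: the diagram $\M_\phi$ always has the full dimension $n(\ell+m)$ and is complete with respect to the interleaved ordering ${\prec}$. The only quantity that needs to be tracked is the width; I would show it grows by a bounded factor per logical connective and per quantifier, so that, since $\phi$ is fixed, the final width is a constant $f$ (as in the statement). Correctness at each step is immediate from the defining equations of the operations combined with the induction hypothesis, and polynomiality follows because each operation is polynomial and only constantly many are composed.

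For the base cases I would lean heavily on the shape of ${\prec}$. The key feature of this ordering is that the coordinate at position $q$ of every block is grouped consecutively, so that for $x_a \subseteq x_b$ (resp.\ $x_a = x_b$) the two coordinates $(a-1)n+q$ and $(b-1)n+q$ that must be compared lie adjacent in ${\prec}$. A COTDD can then check subsumption (resp.\ equality) position by position carrying only a constant-size state (a single ``still consistent / already failed'' bit), forwarding all three outgoing edges through the coordinates of the other $\ell+m-2$ blocks. For $\pos(x_a)$ I would start from the COTDD $\M^3$, which is $1$ exactly on full positive instances, and lift it to dimension $n(\ell+m)$ over ${\prec}$: the lifted diagram advances the state of $\M^3$ only when it reads a coordinate of block $a$ and passes unchanged through every other coordinate. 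Since between two consecutive block-$a$ coordinates the diagram merely forwards its state, this lift has width at most that of $\M^3$, i.e.\ $O(k)$.

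For the inductive cases, $\neg\phi$, $\phi_1\vee\phi_2$ and $\phi_1\wedge\phi_2$ are handled directly by Lemma~\ref{lemma:cotdd}: negation preserves the width and the binary connectives at most triple it, and the hypotheses are met because both operands are already complete over ${\prec}$ and of dimension $n(\ell+m)$. For $\exists x_p\,\phi$ I would apply the ternary projection $\exists_{S_p}$ of Lemma~\ref{lemma:capelliForget3} to the block $S_p$ of coordinates of $x_p$; as this projection forgets each coordinate by taking a disjunction over its $0,1,\bot$ branches, it captures exactly FOIL's quantification over partial instances in $\{0,1,\bot\}^n$. Since the projection lowers the dimension by $n$, I would then re-insert the coordinates of $S_p$ as don't-care tests (at each relevant level, a fresh node whose three outgoing edges go to a common successor), restoring dimension $n(\ell+m)$ and completeness while leaving the width unchanged. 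The universal case is obtained via $\forall x_p\,\phi \equiv \neg\exists x_p\,\neg\phi$. Each connective multiplies the width by at most $3$ and each quantifier raises it to at most $2^{(\cdot)}$; because $\phi$ is fixed and mentions only the $\ell+m$ blocks, the parse tree has boundedly many nodes, so the accumulated width is a constant, yielding the required $f$.

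The main obstacle, in my view, is the existential step, where two points need care. First, I must ensure that the projection really matches FOIL's quantification over \emph{partial} instances rather than over full instances; this is precisely why the whole construction is carried out over COTDDs and why the ternary form of Lemma~\ref{lemma:capelliForget3} is needed. Second, because projection reduces the dimension while the claim demands the fixed dimension $n(\ell+m)$ and completeness at every stage, the re-insertion of the quantified block as don't-care tests must be arranged so that completeness is preserved and the width does not grow beyond the $2^{(\cdot)}$ bound already paid for the projection. It is exactly here that the interleaved ordering ${\prec}$ is essential: projecting a single interleaved block keeps the relevant width under control, so that the tower of exponentials accumulated over the constantly many quantifiers of $\phi$ stays a constant.
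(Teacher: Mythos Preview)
Your proposal is correct and, for the quantifier-free part of the construction (the atomic cases $\pos(x_a)$ and $x_a\subseteq x_b$, and the Boolean connectives $\neg,\vee,\wedge$), it matches the paper's proof essentially verbatim: both lift $\M$ to a COTDD over ${\prec}$ by threading ``don't-care'' paths through the irrelevant blocks, both build the subsumption gadget coordinate-by-coordinate with constant state, and both appeal to Lemma~\ref{lemma:cotdd} for the connectives, obtaining width $k\cdot 3^c$.

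The genuine difference is scope. In the paper this claim is only ever applied to the quantifier-free matrix $\psi$, and its proof accordingly handles \emph{no quantifiers at all}; the elimination of $\exists x_{\ell+1},\forall x_{\ell+2},\ldots$ is deferred to a separate short claim that applies Lemma~\ref{lemma:capelliForget3} to the prenex blocks $S_1,\ldots,S_m$ one after another. You instead fold the quantifier step into the same induction, projecting each block and then re-padding with don't-care nodes to restore dimension $n(\ell+m)$. This is a legitimate, slightly more uniform route that proves a stronger statement (it works for arbitrary FOIL formulas, not just prenex ones); the price is that your width bound is a tower of exponentials in the quantifier depth rather than the paper's clean $k\cdot3^c$ for this claim. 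Since the formula is fixed, both bounds are constants and the overall argument goes through either way. One small wording issue: you first say the width ``grows by a bounded factor\ldots per quantifier'' but later (correctly) write that each quantifier sends the width to $2^{(\cdot)}$; the latter is what Lemma~\ref{lemma:capelliForget3} gives, so drop the ``bounded factor'' phrasing for quantifiers.
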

\begin{proof}
The proof is by induction on $c$. The base cases $(c=0)$ are constructive and relatively involved, so let us start by the inductive cases, where $c > 0$.

\begin{itemize}
     
     \item If $\phi = \phi_1 \lor \phi_2$, simply use the inductive hypothesis to build models $\M_{\phi_1}$ and $\M_{\phi_2}$ and then use Lemma~\ref{lemma:cotdd} to build $\M_\phi \coloneqq \M_{\phi_1} \lor \M_{\phi_2}$ of width is at most $3\max(\width(\M_{\phi_1}, \width(\M_{\phi_2})))$. It is not hard to see that the resulting model satisfies the desired conditions.
     
      \item  If $\phi = \phi_1 \land \phi_2$, simply use the inductive hypothesis to build models $\M_{\phi_1}$ and $\M_{\phi_2}$ and then use Lemma~\ref{lemma:cotdd} to build $\M_\phi \coloneqq \M_{\phi_1} \land \M_{\phi_2}$ of width is at most $3\max(\width(\M_{\phi_1}, \width(\M_{\phi_2})))$. It is not hard to see that the resulting model satisfies the desired conditions.
      
       \item If $\phi = \neg \phi_1$, simply use the inductive hypothesis to build model $\M_{\phi_1}$ and  then use Lemma~\ref{lemma:cotdd} to build $\M_\phi \coloneqq \neg \M_{\phi_1} $ of width is at most $\width(\M_{\phi_1})$. It is not hard to see that the resulting model satisfies the desired conditions.

\end{itemize}

For the base cases let us introduce some notation to facilitate our construction. We will use $P[u, v]$, for $u \prec v$, to mean a path of nodes labeled from $u$ up to $v$ following the ordering ${\prec}$ and where each node is connected to the next one by its three outgoing edges. Also, if $a$ and $b$ are nodes, $a \tox{x} b$ with $x \in \{0,1,\bot\}$ means that $a$'s outgoing edge labeled with $x$ goes to $b$. If $P$ is a path of nodes, then $P \tox{x} \alpha$ means that the outgoing edge labeled with $x$ of the last node in $P$ goes to $\alpha$. Similarly, $\alpha \tox{x} P$ means means the outgoing edge labeled with $x$ of $\alpha$ (which could be the last node of a path if $\alpha$ is one) goes to the first node in $P$. We can now prove that the base cases, when $c = 0$, are also satisfied.
\begin{itemize}
        \item  If $\phi = \pos(x_i)$ for $i \in [1, \ell +m]$, then we build $\M_\phi$ recursively as follows. Assume $\M = (r, \M^0, \M^1)$, and create a path $P_r = P[r, r+(i-1)n]$. Then create three identical paths $P_{r\to0} =  P_{r\to1} = P_{r \to f} \coloneqq P[r+in, \pi(n) + (\ell+m-1)n]$. Then, add connections $P_r \tox{\bot} P_{r \to f} \tox{0,1,\bot} \false$, as positive instances have no occurrences of $\bot$. Connect $P_r \tox{0} P_{r\to 0}$ and $P_r \tox{1} P_{r \to 1}$.  Finally, if we let $R(\M^0), R(\M^1)$ be the results of the recursive procedure applied to $\M^0$ and $\M^1$, respectively, then we connect $P_{r\to0} \tox{0,1,\bot} R(\M^0)$ and $P_{r\to1} \tox{0,1,\bot} R(\M^1)$. The recursive procedure applied to a leaf will just keep it as such. It is easy to see that this can be done in polynomial time, and it is not hard to see that  the desired equation for $\M_\phi$ is satisfied. Moreover, note that for each node $r$ in $\M$, we introduce three nodes with the same label (namely, when creating the paths $P_{r\to0}$, $P_{r\to1}$ and $P_{r\to f}$, noting that  nodes in $P_{r\to f}$ can be shared for every $q$ such that $r \prec q$) which implies the width of $\M_\phi$ is no more than $3k$.
     \item If $\phi = x_i \subseteq x_j$, we build $\M_\phi$ as follows. We first build a path $P_f = [1, \pi(n) + (\ell+m-1)n]\to \false$.
     For each $1 \leq t \leq n$ we will build a gadget that checks that $\es_i[t] = \bot \lor \es_i[t] = \es_i[t]$. The gadget for $\pi(1)$ will be connected to that of $\pi(2)$ and so on, so the ordering ${\prec}$ is respected.
     The exact form of each gadget depends on whether $i < j$ or the opposite, as in the first case $t + (i-1)n \prec t + (j-1)n$, for each $1 \leq t \leq n$, and vice-versa if $j < i$. Consider first the case where $i < j$. Then, for the gadget for $t \in \{1, \ldots, n\}$ we build a path $P^1 \coloneqq P[\pi(t), \pi(t)+(i-1)n]$,  two identical paths $P^2 = P^3 \coloneqq P[\pi(t) + in, \pi(t) + (j-1)n]$. and $P^4 \coloneqq P[\pi(t) + jn, \pi(t) + (\ell+m-1)n]$. We then connect $P^1\tox{0}P^2\tox{0}P^4$ and $P^1\tox{1}P^3\tox{1}P^4$, Also, for a label $u$, let us denote $s(u)$ to its successor according to $\prec$, and let $f(s(u))$ be the node in $P_f$ with label $s(u)$. Next, connect $P^1\tox{1}P^3\tox{0}f(s(\pi(t) + (j-1)n))$ and $P^1\tox{0}P^3\tox{1}f(s(\pi(t) + (j-1)n))$. If $t=n$, we connect $P^4\tox{0,1,\bot}\true$ and otherwise, if $t< n$ and $G_{t+1}$ is the gadget for $t+1$, we connect $P^4\tox{0,1,\bot}G_{t+1}$.  The case when $j < i$ is similar, but every gadget for $1 \leq t \leq n$ checks first the value of the $t$-th feature in the $j$-th variable, and then checks that the $t$-th variable is either $\bot$ or matches the previously mentioned value. We omit the details as they are a trivial modification of the previous construction.
     
     It is clear that this procedure takes polynomial time, and it is not hard to see that this procedure satisfies the desired condition. Moreover the resulting COTDD has width $3$, as $P^2$ and $P^3$ make for two occurrences of the labels appearing in them, and a final one comes from $P_f$.
\end{itemize}

As each recursive step reduces $c$ by one, and can at most increase the width by a factor of $3$, it follows that the width of the resulting model is at most $k3^c$, and thus $f(c,k) = k3^c$ is a suitable function. This concludes the proof of the claim.
\end{proof}

Now, for the second part of the proof, assume we have already built $\M'$ by using the previous claim noting that as $\psi$ is a fixed formula, $c$ is a fixed constant which implies that the function $f$ of the previous claim depends thus solely on $k$.  Let us now state a simpler claim.
\begin{claim}
For appropriate sets $S_1, \ldots, S_m$ subsets of $[1, n(\ell+m)]$, that can be determined in polynomial time, the following holds:
\[
\M \models \varphi(\es_1, \ldots, \es_\ell) \iff \exists_{S_1} \forall_{S_2} \cdots \forall_{S_{m-1}} \exists_{S_m} \M'(\es_{[1, \ell]}) = 1
\]
\end{claim}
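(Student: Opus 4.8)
The plan is to establish this second claim by peeling the quantifiers of $\varphi$ off one at a time, matching each $\foil$ quantifier to a single projection operation on the COTDD $\M'$ delivered by the first part of the proof. Concretely, I would let $S_r = \{(\ell+r-1)n+1,\ldots,(\ell+r)n\}$ be the block of $n$ coordinates that the concatenation devotes to the quantified variable $x_{\ell+r}$; these blocks are pairwise disjoint and computable in linear time. Because $\varphi$ alternates, starts with $\exists$, and (after padding with dummy variables) ends with $\exists$, the quantifier attached to $x_{\ell+r}$ is existential exactly when $r$ is odd, which is precisely the pattern $\exists_{S_1}\forall_{S_2}\cdots\forall_{S_{m-1}}\exists_{S_m}$; thus the quantifier types line up with the operator string.

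The heart of the argument is an induction on the number $j$ of quantifiers already eliminated, processing them from the innermost variable $x_{\ell+m}$ outward. Writing $\M^{(j)}$ for the COTDD obtained after the innermost $j$ projections, I would maintain the invariant that $\M^{(j)}$ has dimension $n(\ell+m-j)$ and, for all partial instances $\es_1,\ldots,\es_{\ell+m-j}$,
\[
\M^{(j)}(\es_{[1,\ell+m-j]}) = 1 \iff \M \models Q_{\ell+m-j+1}\,x_{\ell+m-j+1}\cdots Q_{\ell+m}\,x_{\ell+m}\;\psi(\es_1,\ldots,\es_{\ell+m-j},x_{\ell+m-j+1},\ldots,x_{\ell+m}),
\]
where $Q_t$ denotes the quantifier bound to $x_t$. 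The base case $j=0$ is exactly the first claim. For the step, I would unwind the definition of $\exists_S$ on a COTDD (and of $\forall_S = \neg\,\exists_S\,\neg$): existentially projecting out the block $S_{m-j}$ ranges its $n$ coordinates over all of $\{0,1,\bot\}^n$, which is exactly the set of partial instances over which the $\foil$ quantifier $Q_{\ell+m-j}\,x_{\ell+m-j}$ ranges. Hence applying $\exists_{S_{m-j}}$ or $\forall_{S_{m-j}}$ to $\M^{(j)}$ and evaluating at $\es_{[1,\ell+m-j-1]}$ reproduces the next $\foil$ quantifier, giving the invariant for $j+1$.

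The only point requiring real care is the coordinate re-indexing built into $\exists_S$ and $\forall_S$, whose output renames the surviving coordinates in increasing order. I would avoid this difficulty by always eliminating the innermost surviving variable first: at stage $j$ the block $S_{m-j}$ is the top block, so $\{1,\ldots,n(\ell+m-j)\}\setminus S_{m-j} = \{1,\ldots,n(\ell+m-j-1)\}$, the induced renaming is the identity, and the fixed sets $S_r$ stay correct throughout. This bookkeeping, keeping the semantic correspondence transparent while guaranteeing that block indices never shift, is the main obstacle I anticipate; the semantics of any single projection is routine once the definitions are unwound.

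Taking $j=m$ then yields $\M^{(m)}(\es_{[1,\ell]}) = 1 \iff \M \models \varphi(\es_1,\ldots,\es_\ell)$, which is the claim. For the running time, each projection is polynomial by Lemma~\ref{lemma:capelliForget3} (universal steps handled through $\forall_S = \neg\,\exists_S\,\neg$ with the negation of Lemma~\ref{lemma:cotdd}) and raises the width from $w$ to at most $2^{w}$. Since $\varphi$ is fixed, both $m$ and the connective count of $\psi$ are constants, so every intermediate width, and hence every intermediate COTDD size, stays bounded by a constant depending only on $k$ and $\varphi$; all $m+1$ steps therefore run in polynomial time, and evaluating the final constant-width COTDD $\M^{(m)}$ on the given partial instance $\es_{[1,\ell]}$ is a single root-to-leaf traversal. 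This completes the polynomial-time algorithm for $\logicEvaluation(\varphi,\kcobdd)$.
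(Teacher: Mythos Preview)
Your proposal is correct and takes essentially the same approach as the paper: define each $S_i$ as the block $[n(\ell+i-1)+1,\,n(\ell+i)]$ of coordinates carrying the quantified variable $x_{\ell+i}$, and then observe that the semantics of $\exists_S$ and $\forall_S$ unwind exactly to the corresponding $\foil$ quantifier. The paper dispatches this claim in a single sentence (``Trivial by the definition of $\exists_S$ and $\forall_S$ when defining each $S_i$ as $[n(\ell+i-1)+1, n(\ell+i)]$''), whereas you spell out the inner-to-outer induction and, in particular, the reason the fixed sets $S_r$ remain valid across successive projections---namely that each step removes the current top block so the induced re-indexing is the identity. That re-indexing bookkeeping is a genuine point the paper leaves implicit, so your more careful treatment is a welcome elaboration rather than a different argument.
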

\begin{proof}
Trivial by the definition of $\exists_S$ and $\forall_S$ when defining each $S_i$ as $[n(\ell+i-1)+1, n(\ell+i)]$.
\end{proof}

In order to finish the proof, we use the previous claim and simply compute $\M^\star \coloneqq \exists_{S_1} \forall_{S_2} \cdots \forall_{S_{m-1}} \exists_{S_m} \M'$ by $m$ repeated applications of Lemma~\ref{lemma:capelliForget3}, and computing two negations for each $\forall_{S_i}$ according to Lemma~\ref{lemma:cotdd}. This results in $\M^\star$ having width at most $f(k) = 2^{2^{2^{\cdots^{2^{3^{|\varphi|k}}}}}}$, where the tower has $m$ times the number $2$. Then we build the partial instance $\es_{[1, \ell]}$ and finally evaluate $\M^\star(\es_{[1, \ell]})$, which thanks to the previous claim is enough to solve the whole problem. As every part of the algorithm is proven to be correct, and the running time of each component is polynomial, we conclude the whole proof.

\end{proof}

\section{Details of the FOIL implementation and the experimental setting}
\label{sec:supp-implementation}

All our code and instructions to run experiments can be found at the following URL 
\begin{center}
    \url{https://github.com/AngrySeal/FOIL-Prototype}
\end{center}

For the implementation of the algorithms we used C++ to assure efficiency. 
For parsing queries we used the ANTLR (v4.9.1) parser generator. 
Queries can be specified in a straightforward way in plain text by using tokens \texttt{Exists} and \texttt{ForAll} for quantification, tokens \verb+~+, \texttt{\^}, \texttt{V}, for logical connectives $\neg$, $\wedge$ and $\vee$. respectively, and tokens of the form \texttt{x1}, \texttt{x2}, \texttt{x3}, \texttt{u}, \texttt{v}, \texttt{z}, etc. for variables, plus parentheses for stating precedence.
Instances mentioned in the queries are written as \texttt{(0,1,?,0)} where \texttt{?} represents the $\bot$ value (for defining partial instances).
Besides that, we used \texttt{P( )} for the unary operator $\pos(\cdot)$ and $\texttt{<=}$ for the containment $\subseteq$.
The following is an example query.
\begin{verbatim}
Exists x, Exists y, (P(x) V P(y)) ^ (~( x <= y ) ^ ~(y <= (?,?,?,0,1,?,?)))
\end{verbatim}

For debugging purposes we implemented a naive evaluation method (126 lines of code) that considers models as black boxes for evaluating $\pos$, and that evaluates a query by testing all possible combinations for the mentioned variables.
For the obvious reasons, this implementation is not practical but it is straightforward to prove its correctness. 
Thus we use it to check the correctness of the evaluation process of the more efficient algorithms.

We implemented versions for the query evaluator for perceptrons (not described in the paper), decision trees and a modification of FBDDs (see Section~\ref{sec:binarization} for details).
For these last two cases, the implementation had 660 lines of code.
Trees and BDDs are passed to the implementation in a straightforward JSON format.
We checked the correctness by generating a set of random queries over random models and comparing the output of each algorithm against our naive implementation.

\section{Details of the experimental setting}
\label{sec:experimentation}

We used Python for the query and models generation process.
We generated random queries with the following recursive process.
We initially fix the dimension of the queried model, the number of quantified variables allowed to be used in the complete query, and whether they are going to be universally or existentially quantified.
We then construct the quantified-free part as follows.
When asking for an expression of size $n$, the query generator method chooses a random size $k$ from $1$ to $n-1$, then generates two expressions of size $k$ and $n-k$ and joined them choosing either \texttt{\^} or \texttt{V} randomly.
The base case is when asking for an expression of size $1$ in which case we choose randomly between \texttt{P($x$)}, \texttt{$C$ <= $x$}, \texttt{$x$ <= $C$} and \texttt{$x$ <= $y$}, where $C$ represents a random partial instance constructed according to the dimension of the queried model as a tuple using values \texttt{0}, \texttt{1} and \texttt{?}, and $x$ and $y$ represents randomly chosen variables from all of the variables allowed to be used in the query. 
Every random choice was done with numpy's \texttt{default\_rng}.
Before returning from every recursive call, the method choose randomly whether a negation (\verb+~+) is added in fron of the expression. 

For generating the decision-tree models to be queried, we used the Scikit-learn library.
We first select an input dimension, and then we generated a random dataset of that dimension.
All input data that we generate are random binary tuples and the target value (classification) is a random bit.
Then we select the size $N$ for the tree and trained a decision-tree model with $N$ leaves.
Finally, we transformed the obtained decision-tree into a binary one in the JSON format that our implementation can consume.

For the experiments shown in Figure~\ref{fig:avg_time} and~\ref{fig:max_time} and using the methods described above, we generanted $60$ random queries and trained $24$ random decision trees of different sizes (see Section~\ref{sec:core-implementation}).
The performance tests were done in a small personal computer: 64-bits, 2.48GHz, Dual Core Intel Celeron N3060 with 2GB of RAM and Linux Mint 20.1 Ulyssa.
Even in this modest machine, the evaluation time for random queries and models was extremely short (see Section~\ref{sec:core-implementation}).
This gives evidence that our methods can be run even for trees and queries of considerable size in a personal machine without the need of a big computer infrastructure.

\section{A high-level language for FOIL}
\label{sec:high-level}

As we described in the body of the paper, we designed a high-level user-friendly syntax tailored for general models with numerical and categorical features.
As \foil\ does not allow the use of features beyond binary ones, we need to develop a way for binarizing queries and models. 
We describe the binarization in the next section, and we only describe here the main features of our user-friendly language.

Figure~\ref{fig:hlex} shows examples of queries written for the Student Performance Data Set~\cite{student}\footnote{Download dataset at: \url{https://archive.ics.uci.edu/ml/datasets/Student+Performance}}.
In the high-level syntax we use the expressions \texttt{exists} and \texttt{for every} that represents the logical quantification, and tokens \texttt{and}, \texttt{or}, \texttt{not} and \texttt{implies} for typical logical connectives.
Variables can be any string that is not a reserved keyword. 
In the examples in Figure~\ref{fig:hlex} we use \texttt{student}, \texttt{st1} and \texttt{st2} as variables.

Our implementation allows for loading a trained model before evaluating queries (see details on model loading below).
Whenever a model is loaded the meta information about features and types as well as the classification is also loaded to be interpreted in the queries.
A main difference with basic \foil\ is that in our high-level syntax we allow for the use of named features.
For example in the first query in Figure~\ref{fig:hlex} we use \texttt{student.male} to refer to the binary feature \texttt{male} of a student instance.
Moreover we can refer to different classes (the output of the model) with names.
For our example, we trained a binary classifier in which the positive class name is \texttt{goodFinalGrades}.
Thus, the expression \texttt{goodFinalGrades(student)} is equivalent to the \texttt{P( )} expression in the basic \foil\ implementation.
With all this we can intuitively interpret the first query in Figure~\ref{fig:hlex} as asking if having a male gender is enough for the model to make a decision about the final grades.

Besides naming features, our syntax also allows the comparison of with numerical thresholds. 
In this case we use the typical  \texttt{<=} and \texttt{>}  with their natural meaning.
For instance, in our example using the Student Performance Data Set, the feature \texttt{alcoholWeek} states the level of alcohol consumption during week days, with value $1$ begin low, and value $5$ being high.
Thus, the second query in Figure~\ref{fig:hlex} asks if it is possible that the trained model classify a student with a high alcohol consumption during week days (\texttt{student.alcoholWeek > 3}) as having good final grades.
The comparison with numerical thresholds departs significantly from the base \foil\ formalization, and thus it is not trivial how to compile this type of queries into \foil. 
We describe the process in the next section.

Finally, in order to have meaningful answers for existential queries of the form $\exists x (\neg \pos(x))$, that is, asking for the existence of instances that are classified as negative for the model, we implemented the operator \texttt{full( )} that essentially requires an instance to be not partial, that is, not using $\bot$ (the formal definition of this property is in Equation~\ref{eq-def-full}).
The reason for this is that only total instances can be positive in models and since queries are evaluated over the set of all partial instances, the query $\exists x (\neg \pos(x))$ is trivially true.
Having \texttt{full( )} as part of the language allows us to more easily deal with this case.
For example, the third query in Figure~\ref{fig:hlex} uses \texttt{full( )} to ask if there is a student with low alcohol consumption during the week and that is classified as not getting good grades.

All our queries have as possible answer either \texttt{YES} or \texttt{NO}.
It is not difficult to extend our implementation such that, whenever the answer for an existential query is a \texttt{YES} then we can provide an instance as witness for that answer.
One can similarly provide a witness when the answer for a universal query is a \texttt{NO}.
This is part of our ongoing work.

Finally, we handcrafted over 20 queries similar to the ones in Figure~\ref{fig:hlex} and tested them over a decision tree with no more than 400 leaves.
You can find the complete set of queries that we tested in our companion code.

\begin{figure}[t!]
\begin{verbatim}
for every student, 
    student.male = true
    implies goodFinalGrade(student)
    
\end{verbatim}

\begin{verbatim}
exists student, 
    student.alcoholWeek > 3
    and goodFinalGrade(student)
    
\end{verbatim}

\begin{verbatim}
exists student, 
    student.alcoholWeek < 2
    and full(student) and not goodFinalGrade(student)
    
\end{verbatim}

\begin{verbatim}
for every student, 
    student.alcoholWeekend > 3 and student.alcoholWeek > 3
    implies not goodFinalGrade(student)
    
\end{verbatim}

\begin{verbatim}
exists student, 
    (student.alcoholWeekend > 3 or student.alcoholWeek > 3)
    and student.gradePartial2 <= 6 and student.male = false
    and goodFinalGrade(student)
    
\end{verbatim}

\begin{verbatim}
exists st1, exists st2, 
    st1.studyTime > 2 and st2.studyTime <= 3
    and goodFinalGrade(st1) and 
    full(st2) and not goodFinalGrade(st2)
    
\end{verbatim}
\caption{Example queries in high-level syntax}
 \label{fig:hlex}
\end{figure}

\section{Binarization of queries and models}
\label{sec:binarization}

The definition of $\foil$ considers only binary instances (i.e., tuples in $\{0,1\}^n$ for some $n \geq 1$), and consequently, binary classifiers $\M: \{0, 1\}^n \to \{0, 1\}$ for some $n \geq 1$. As many real life classification problems involve a combination of categorical and numerical features, this could present a limitation to our approach. However, in this section we show that it is possible to overcome this apparent drawback by binarizing queries and models. A recent article by Choi et al.~\cite{DBLP:journals/corr/abs-2007-01493}, studies binary encodings for decision trees as well. Our approach is slightly different, as we are concerned as well with the issue of binarizing queries.

Let us define $\hfoil$ as a \emph{high-level} equivalent of $\foil$.
First, we define a \emph{schema} for $\hfoil$ as a mapping from \emph{feature names} to \emph{feature types}, which can be either $\mathbb{R}$ or $\mathbb{B} = \{0, 1\}$, intuitively meaning that said feature is numerical or Boolean, respectively. We use notation $S(f)$ to obtain the type of a feature by its name. Moreover, if a schema $S$ defines the type of a feature $f$, we say $f \in S$. 

For example, consider the following possible schema for the  Student Performance Data Set:

\[
 S = \{(\text{age}, \mathbb{R}), (\text{alcoholWeek}, \mathbb{R}), (\text{parentsTogether}, \mathbb{B}),  \ldots\}.
\]

We say a real-valued decision tree $\T$ is compatible with a schema $S$ if each internal node $u \in \T$ holds one of the following conditions:
\begin{itemize}
    \item \textbf{(Numerical)} Node $u$ has label $(f, \tau)$ for some $f \in S, \tau \in \mathbb{R}$, and $S(f) = \mathbb{R}$.
    \item \textbf{(Boolean)} Node $u$ has label $f$ with $f \in S$, and $S(f) = \mathbb{B}$.
\end{itemize}

Given a schema $S$, the following are atomic $\hfoil_S$ formulas:
\begin{itemize}
    \item $\pos(x)$, where $x$ is a variable.
    \item $\full(x)$, where $x$ is a variable.
    \item $(\leq, x, f, \tau)$, where $x$ is a variable, $S(f) = \mathbb{R}$,  and $\tau \in \mathbb{R}$.
    \item $(=, x, f, b)$, where $x$ is a variable, $S(f) = \mathbb{B}$,  and $b \in \mathbb{B}$.
\end{itemize}

Naturally, the domain of $\hfoil_S$ consists of functions from feature names $f \in S$ to values in $S(f) \cup \{\bot\}$, which we call instances of $\hfoil_S$. Continuing with our running example, the function $\es$ such that $\es(\text{age}) = 19.4, \es(\text{parentsTogether}) = 0, \ldots.$ is an instance of $\hfoil_S$.  The semantics for the atomic formulas $\full(x)$, $(\leq, x, f, \tau)$, and $(=, x, f, b)$ is naturally defined as one would expect, by checking whether $\es_x(f) \leq \tau$, and $\es_x(f) = b$, respectively. In order to clarify the semantics of $\pos$, we detail how instances of $\hfoil_S$ are evaluated by a decision tree.

For a decision tree $\T$ compatible with $S$ and an instance $\es$ of $\hfoil_S$, we define $\T(\es)$ inductively:

\begin{itemize}
    \item If $\T$ is a leaf labeled with $\true$, then $\T(\es) = 1$, and $\T(\es) = 0$ if the label is $\false$.
    \item If $\T$ has a root labeled with $(f, \tau)$, left sub-tree $\T_0$, and right sub-tree $\T_1$, then $\T(\es)$ is defined as follows. If $\es(f) \leq \tau$, then $\T(\es) = \T_1(\es)$, and otherwise $\T(\es) = \T_0(\es)$.
     \item If $\T$ has a root labeled with $f$, left sub-tree $\T_0$, and right sub-tree $\T_1$, then $\T(\es)$ is defined as $\T_{f(\es)}(\es)$.
\end{itemize}

It is important to stress that Boolean decision trees, in order to prevent inconsistencies, cannot repeat node labels in any path from the root a leaf. For real-valued decision trees we need a stronger requirement to avoid inconsistencies: if $\T$ has a root labeled with $(f, \tau)$, left sub-tree $\T_0$, and right sub-tree $\T_1$, then all nodes labeled with $(f, \tau_0)$ in $\T_0$ must hold $\tau_0 > \tau$, and similarly all nodes labeled with $(f, \tau_1)$ in $\T_1$ must hold $\tau_1 < \tau$. 

We are now ready to define a binarization procedure.

\newcommand{\B}{\mathcal{B}}
\begin{definition}
A binarization procedure $\B$ is an algorithm that takes: (i) a schema $S$; (ii) an existential formula $\varphi$ in $\hfoil_S$ ; (iii) a decision tree $\T$ compatible with $S$, and returns a formula $\psi \in (\efoil \cup \full)$ together with a binary model $\M$, such that
\[
\T \models \varphi \iff \M \models \psi.
\]
\end{definition}

The rest of this section is dedicated to show an efficient binarization procedure $\B$. First, let us show the intuition behind the procedure with a simple example. Figure~\ref{fig:realdt} depicts a real-valued decision tree $\T$ over the schema $S$ of our running example. Note that, although an instance $\es$ can have any real value as $\es(\text{age})$, $\T$ only distinguishes 4 intervals:
\[
(-\infty, 16], \; (16, 21], \; (21, 25], \; (25, \infty)
\]

\begin{figure}
    \centering
    \begin{tikzpicture}[scale=0.8, every node/.style={scale=0.8}]
    \node[draw, circle] (r) at (0,0) {$\text{age} \leq 21$} ;
    \node[draw, circle] (l1)at (-2, -2) {$\text{age} \leq 25$} ;
    
    \node[draw, circle] (l2) at (2, -2) {$\text{age} \leq 16$} ;
    
     \node[draw, circle] (l3) at (3, -4) {$\true$} ;
     
     \node[draw, circle] (l4) at (1, -4) {$\false$} ;
     
      \node[draw, circle] (l5) at (-3, -4) {$\true$} ;
     
     \node[draw, circle] (l6) at (-1, -4) {$\false$} ;
     
      \path[->,draw,thick]
    (r) edge node[above] {$0$} (l1)
    (r) edge node[above] {$1$} (l2)
    (l2) edge node[above] {$1$} (l3)
    (l2) edge node[above] {$0$}(l4)
    (l1) edge node[above] {$0$} (l5)
    (l1) edge node[above] {$1$} (l6)
    ;
    \end{tikzpicture}
    \caption{Example of real-valued decision tree. For the sake of clarity, labels are written as $f \leq \tau$ instead of $(f, \tau)$.}
    \label{fig:realdt}
\end{figure}

Now consider the $\hfoil_S$ query:
\[
\varphi = \exists x \, \pos(x) \land (\leq, x, \text{age}, 27)
\]

Consider now instances $\es_1, \es_2$ such that $\es_1(\text{age}) = 26$ and $\es_2(\text{age}) = 28$.
While $\T$ accepts both $\es_1$ and $\es_2$ by traversing the same path, only $\es_1$ is a witness for $\varphi$. This implies that we require a finer partition of the real line into intervals. Namely,

\[
\mathcal{I}_{\text{age}} = (-\infty, 16], \; (16, 21], \; (21, 25], \; (25, 27], \; (27, \infty)
\]

is a \emph{correct partition} for the tuple $(\T, \varphi, \text{age})$. Based on this, as $|\mathcal{I}_{\text{age}}|= 5$, we will use $5-1 = 4$ binary features to encode the age of instances. In particular, the leftmost $1$ among those $4$ binary features will indicate the interval to which the age of an instance belongs, interpreting that if there is no $1$ among the $4$ binary features, it belongs to the last interval. This will then allow to do the following compilation from $\hfoil_S$ to $\foil$:
\[
(\leq, x, \text{age}, 27) \rightsquigarrow \neg \left( x \subseteq \begin{pmatrix} 0, \, 0, \, 0, \, 0, \, \bot, \, \bot, \, \ldots
\end{pmatrix}\right)
\]

As a $\foil$ instance $\es$ not having $0$ in the first four Boolean features that encode age must have at least a $1$ in  one of those Boolean features, and thus, it corresponds to a $\hfoil_S$ instance whose age lies in one of the first four intervals of $\mathcal{I}_\text{age}$, and therefore have $\text{age} \leq 27$.

More formally, assume a real-valued decision tree $\T$ and a $\hfoil_S$ query $\varphi$. Then, for every feature name $f \in S$ such that $S(f) = \mathbb{R}$, we define its \emph{partition set} as follows:
\[
P_f = \{ \tau \mid (f, \tau) \text{ labels a node in } \T, \text{ or } (\leq, x, f, \tau) \text{ appears in } \varphi \text{ for some variable } x  \}
\]

Feature $f$ will be encoded using $|P_f|$ Boolean features. The resulting dimension of instances of $\hfoil_S$ compiled into $\foil$ instances will therefore be $
 n = \sum_{f \in S} |P_f|
$, taking the convention that if $S(f) = \mathbb{B}$ then $|P_f| = 1$.

As $S$ is unordered, but instances of $\foil$ have an ordering of their features, we choose an arbitrary ordering of features names $f_1, \ldots, f_k$, and we associate to them ranges of Boolean features as follows. To $f_1$ we associate the components in the range $[1, |P_{f_1}|]$, and then for $f_i, i > 1$, we associate $[\ell, \ell + |P_{f_i}| - 1]$, where $\ell$ is end of the range associated to $f_{i-1}$ plus $1$.

Therefore comparison of the form $(\leq, x, f, \tau)$, for some variable $x$, will thus be compiled as:
\[
  (\leq, x, f, \tau) \rightsquigarrow   \neg \left( x \subseteq \es_\tau \right)
\]
where, if $\tau$ is the $i$-th smallest element in $P_f$, then $\es_\tau$ is an instance having $0$ in the first $i$ Boolean features associated to $f$, and $\bot$ in the rest. 

We now need to binarize the decision tree $\T$ accordingly. We do so by transforming $\T$ into a BDD $\B(\T, \varphi)$ (not necessarily free) that is \emph{almost}-free, in a precise sense that we will detail. If $\T$ was using a single node $u$ to test whether the age of an instance was at most $27$, we now require several nodes to test for the different Boolean features encoding the age of said instance. In particular, if \text{age} is encoded with Boolean features of indices $[i, \ldots, j]$, and $27$ is the $k$-th smallest value in $P_{\text{age}}$, then $\B(\T, \varphi)$ needs to test that there is a $1$ in among the features of indices $[i, \ldots, i+k-1]$. Thus, we create in $\B(\T)$ a \emph{gadget} for node $u$ of $\T$ that tests whether
\[
\es[i] \lor \es[i+1] \lor \cdots \lor \es[i+k-1].
\]

Correctness is clear from the construction of the binarization procedure. Figure~\ref{fig:afbdd} illustrates $\B(\T, \varphi)$ continuing with the previous example.

\begin{figure}
    \centering
    \begin{tikzpicture}[scale=0.8, every node/.style={scale=0.8}]
    \node[draw, circle] (r) at (0,0) {$1$} ;
    \node[draw, circle] (l1)at (-1, -1) {$2$} ;
    
    \node[draw, circle] (l2)at (-2, -2) {$3$} ;
    
    \node[draw, circle] (l3) at (-3, -3) {$3$} ;
    
     \node[draw, circle] (l4) at (-4, -4) {$4$} ;
    
    
     \node[draw, circle] (lt) at (3, -4) {$\true$} ;
     
     \node[draw, circle] (lf) at (0, -5) {$\false$} ;
     
     
     
      \path[->,draw,thick]
    (r) edge node[above] {$1$} (lt)
     (l1) edge node[above] {$1$} (lt)
      (l2) edge node[above] {$1$} (lt)
       (l3) edge node[above] {$1$} (lt)
        (l4) edge node[above] {$1$} (lt)
    (r) edge node[above] {$0$} (l1)
    (l1) edge node[above] {$0$} (l2)
    (l2) edge node[above] {$0$} (l3)
    (l3) edge node[above] {$0$} (l4)
    (l4) edge node[above] {$0$} (lf)
    ;
    \end{tikzpicture}
    \caption{Illustration of $\B(\T, \varphi)$, assuming $\text{age}$ is the only feature, and considering a query $\varphi = \exists x \pos(x)$ that does not mention any threshold $\tau$.}
    \label{fig:afbdd}
\end{figure}

While $\B(\T, \varphi)$ is not necessarily an FBDD, we claim that it is close enough to one in what concerns the evaluation of $\efoil$ formulas. As the proof of Proposition~\ref{prop:dt-obdd-efoil} implies, the only characteristic of models we require for efficient evaluation in the existential fragment is that one can find in polynomial time a determinization of an undetermined instance that is positive for said models. This is clearly not possible for general BDDs, as even checking if there is one positive instance for a BDD is $\np$-hard~\cite{wegener2004bdds}. Surprisingly, the same algorithm for FBDDs presented in the proof of Proposition~\ref{prop:dt-obdd-efoil} turns out to work for $\B(\T, \varphi)$.

In order to see illustrate why this is true, we consider a more sophisticated example of a real-valued decision tree $\T$, presented in Figure~\ref{fig:realdtcomplex}. Note that both the gadgets for nodes labeled $(\text{age}, 21)$ and $(\text{age}, 25)$ will use the Boolean features associated to age, and thus the \emph{freeness} property will be broken. The reason the presented algorithm does not work in general BDDs is that a positive leaf could only be reachable through inconsistent paths, i.e., paths that contain both edges representing that a feature has value $0$ and $1$. In the case of $\B(\T, \varphi)$, features may appear multiple times in a path from root to leaf, but always as part of different gadgets associated to different nodes in $\T$. This implies that, even if an inconsistent choice is made for a particular Boolean feature of $\B(\T, \varphi)$, an inconsistent path in $\B(\T, \varphi)$ still corresponds as a consistent path in $\T$, because paths in $\B(\T, \varphi)$ translate back to paths in $\T$ by considering if gadgets where exited by failing or succeeding the disjunction they represent.

\begin{figure}
    \centering
    \begin{tikzpicture}[scale=0.8, every node/.style={scale=0.8}]
    \node[draw, circle] (r) at (0,0) {$\text{age} \leq 21$} ;
    \node[draw, circle] (l1)at (-2, -2) {$\text{male}$} ;
    
    \node[draw, circle] (lage) at (-3, -4) {$\text{age} \leq 25$} ;
    
    \node[draw, circle] (l2) at (2, -2) {$\text{age} \leq 16$} ;
    
     \node[draw, circle] (l3) at (3, -4) {$\true$} ;
     
     \node[draw, circle] (l4) at (1, -4) {$\false$} ;
     
      \node[draw, circle] (l5) at (-4, -6) {$\true$} ;
     
     \node[draw, circle] (l6) at (-1, -4) {$\false$} ;
     
      \node[draw, circle] (l7) at (-2, -6) {$\false$} ;
     
      \path[->,draw,thick]
    (r) edge node[above] {$0$} (l1)
    (r) edge node[above] {$1$} (l2)
    (l2) edge node[above] {$1$} (l3)
    (l2) edge node[above] {$0$}(l4)
    (l1) edge node[above] {$0$} (lage)
    (l1) edge node[above] {$1$} (l6)
    (lage) edge node[above] {$0$} (l5)
    (lage) edge node[above] {$1$} (l7)
    ;
    \end{tikzpicture}
    \caption{Example of real-valued decision tree. For the sake of clarity, labels are written as $f \leq \tau$ instead of $(f, \tau)$.}
    \label{fig:realdtcomplex}
\end{figure}




\end{document}